\newcommand{\cmark}{\ding{51}}%
\newcommand{\xmark}{\ding{55}}%
\newcommand{\cvar}{\op{CVaR}}
\newcommand{\entropic}{\op{Entr}}
\newcommand{\oce}{\op{OCE}}
\newcommand{\RLB}{\op{RLB}}
\newcommand{\Qaug}{Q_{\textnormal{\textsf{aug}}}}
\newcommand{\Vaug}{V_{\textnormal{\textsf{aug}}}}
\newcommand{\Saug}{\Scal_{\textnormal{\textsf{aug}}}}
\newcommand{\raug}{r_{\textnormal{\textsf{aug}}}}
\newcommand{\piaug}{\pi_{\textnormal{\textsf{aug}}}}
\newcommand{\Piaug}{\Pi_{\textnormal{\textsf{aug}}}}
\newcommand{\Taug}{\Tcal_{\textnormal{\textsf{aug}}}}
\newcommand{\Cov}{{\normalfont\textsf{Cov}}}
\newcommand{\Covaug}{{\normalfont\textsf{Cov}_{\textnormal{\textsf{aug}}}}}
\newcommand{\RegOCE}{\op{Reg}_{\oce}}
\newcommand{\OptAlg}{\textsc{OptAlg}}
\newcommand{\RegOpt}{\op{Reg}_{\normalfont\text{Opt}}}
\newcommand{\POAlg}{\textsc{POAlg}}
\newcommand{\RegPO}{\op{Reg}_{\normalfont\text{PO}}}
\renewcommand{\paragraph}[1]{\par \textbf{#1}}
\definecolor{mybrown}{RGB}{128,64,0}
\gdef\Sepline{%
  \par\noindent\makebox[\linewidth][l]{%
  \hspace*{-\mdflength{innerleftmargin}}%
   \tikz\draw[thick,dashed,gray!60] (0,0) --%
        (\textwidth+\the\mdflength{innerleftmargin}+\the\mdflength{innerrightmargin},0);
  }\par\nobreak}
\title{A Reductions Approach to Risk-Sensitive Reinforcement Learning \\ with Optimized Certainty Equivalents}
\author[1,2]{Kaiwen Wang}
\author[2]{Dawen Liang}
\author[1,2]{Nathan Kallus}
\author[1]{Wen Sun}
\affil[1]{Cornell University}
\affil[2]{Netflix}
\date{\today}
\begin{document}

\maketitle

\begin{abstract}
We study risk-sensitive RL where the goal is learn a history-dependent policy that optimizes some risk measure of cumulative rewards.
We consider a family of risks called the optimized certainty equivalents (OCE), which captures important risk measures such as conditional value-at-risk (CVaR), entropic risk and Markowitz's mean-variance.
In this setting, we propose two meta-algorithms: one grounded in optimism and another based on policy gradients, both of which can leverage the broad suite of risk-neutral RL algorithms in an augmented Markov Decision Process (MDP).
Via a reductions approach, we leverage theory for risk-neutral RL to establish novel OCE bounds in complex, rich-observation MDPs.
For the optimism-based algorithm, we prove bounds that generalize prior results in CVaR RL and that provide the first risk-sensitive bounds for exogenous block MDPs.
For the gradient-based algorithm, we establish both monotone improvement and global convergence guarantees under a discrete reward assumption.
Finally, we empirically show that our algorithms learn the optimal history-dependent policy in a proof-of-concept MDP, where all Markovian policies provably fail.
\end{abstract}

\section{Introduction}\label{sec:intro}
In reinforcement learning (RL),
the classical objective by which we measure how good is a policy is the expected cumulative rewards along the trajectory \citep{sutton2004reward,sutton2018reinforcement}.
However, the mean reward of a choice (the choice of policy in RL being an example of such a choice) is a \emph{risk-neutral} objective that is often observed to be inconsistent with human preferences, causing Allais Paradoxes \citep{allais1990allais,kahneman2013prospect} and calling for alternative \emph{risk-sensitive} objectives \citep{yaari1987dual,bowling2023settling}. This is especially true in safety-critical settings.
In risk-sensitive RL (RSRL), the objective is some risk measure of cumulative rewards under a policy rather than the mean cumulative rewards \citep{howard1972risk,artzner1999coherent}.

In this paper, we propose a framework for solving RSRL with any static risk measure that can be expressed as an optimized certainty equivalent (OCE) \citep{ben2007old}.
Specifically, for a utility function $u:\RR\to[-\infty,\infty)$, the static OCE of a policy $\pi$ is defined as
\begin{equation}
    \textstyle\oce_u(\pi) := \max_{b\in\RR}\braces*{ b+\EE_\pi[u(\sum_{h=1}^Hr_h-b)] }, \label{eq:oce-def}
\end{equation}
where the expectation is w.r.t. the random cumulative reward $\sum_{h=1}^H r_h$ under trajectories from $\pi$.
The OCE captures several important risks for different choices of $u$. %
For example, with the hinge utility $u(t)=\min(t/\tau,0)$, $\oce_u$ becomes CVaR at level $\tau\in(0,1]$, which measures the mean outcome among the worst $\tau$-fraction of cases \citep{rockafellar2000optimization}.
Moreover, OCE with the quadratic utility recovers Markowitz's mean-variance \citep{markowitz1952portfolio} and the exponential utility recovers entropic risk \citep{follmer2011stochastic}.
We present a primer on OCE and more examples in \cref{app:oce-examples}.
We remark that another risk-sensitive objective is the iterated risk, which captures risk at each time step and is orthogonal to our static risk objective (as we explain in related works \cref{sec:related-works}).

A key technical challenge in OCE RL is that optimal policies are in general non-Markovian (\ie, history-dependent), which renders this more complex than risk-neutral RL where a Markovian optimal policy always exists.
In prior works, a promising solution is to augment the MDP with a new state that tracks the cumulative rewards so far, which has been studied for CVaR and spectral risks \citep{bauerle2011markov,bauerle2021minimizing,bastani2022regret}.
In this paper, we show that the augmented MDP (AugMDP) can be generalized with an OCE-based reward function to solve OCE RL with Markovian augmented policies (\cref{sec:aug-mdp}).
We then propose two meta-algorithms that reduce OCE RL to risk-neutral RL by using the AugMDP as a bridge.

First, in \cref{sec:optimism}, we propose a meta-algorithm based optimistic RL oracles and we prove that its OCE regret can be bounded by the RL oracle's regret within the AugMDP up to a factor depending on $u$ (\cref{thm:optimism-regret}).
We show that this approach can be instantiated with many RL oracles including UCB-VI for tabular MDPs \citep{azar2017minimax} and Rep-UCB \citep{uehara2021representation} for low-rank MDPs, generalizing prior results in CVaR RL \citep{wang2023near,zhao2024provably} to the OCE setting.
While prior works in RSRL only considered model-based algorithms, we show that our approach is also amenable to the model-free optimistic RL oracle called GOLF \citep{jin2021bellman}.
This establishes the first risk-sensitive RL bounds for exogenous block MDPs \citep{efroni2022provably,xie2023the}.

Next, in \cref{sec:policy-optimization}, we propose a gradient-based meta-algorithm that uses policy gradient oracles and prove two types of guarantees: (1) convergence to the optimal OCE policy (\cref{thm:global-convergence-guarantee}), and (2) monotone improvement in a risk lower bound (\cref{thm:lower-bound-monotonic-improv}), a novel property in RSRL theory.
We instantiate our method with the natural policy gradient (NPG) oracle \citep{kakade2001natural} and prove these two guarantees under the standard conditions of policy optimization \citep{agarwal2021theory}, plus a discrete reward assumption.
Finally, our numerical simulation validates that our meta-algorithms can indeed learn the optimal OCE policy in a simple MDP where all Markovian policies (\eg, \citealp{dabney2018implicit,lim2022distributional}) must have sub-optimal performance.%
This validates our thesis that history-dependent, non-Markovian policies are required for OCE optimality.

In sum, our main contributions are the following:
\begin{enumerate}[leftmargin=0.7cm]
    \item We propose an optimistic meta-algorithm for OCE RL that generalizes all prior works in CVaR to the more general OCE setting.
    We also apply GOLF as a novel oracle and prove the first risk-sensitive bounds in the challenging exogenous block MDP. (\cref{sec:optimism})
    \item We propose a gradient-based meta-algorithm for OCE RL that enjoys \emph{both} local improvement and global convergence. These are the first finite-sample bounds for gradient-based RSRL algorithms. (\cref{sec:policy-optimization})
    \item In a numerical simulation, we show that our meta-algorithms indeed optimally solve OCE RL in a proof-of-concept MDP where Markovian policies provably have sub-optimal performance.
     (\cref{sec:experiments})
\end{enumerate}

\subsection{Related Works}\label{sec:related-works}
\textbf{From the theoretical side}, prior works with risk-sensitive regret or PAC bounds are based on optimistic, model-based oracles for CVaR or entropic risk.
For CVaR, \citet{bastani2022regret,wang2023near} proposed UCB-based algorithms for tabular MDPs by using the AugMDP proposed by \citet{bauerle2011markov}.
\citet{zhao2024provably} then extended this approach to low-rank MDPs using the Rep-UCB oracle \citep{uehara2021representation} in the AugMDP.
For entropic risk, \citet{fei2020risk,fei2021exponential} proposed UCB-based algorithms for tabular MDPs based on exponential Bellman equations.
These prior works are limited to model-based optimistic oracles and largely do not extend beyond tabular MDPs, while our work provides a general framework for applying both model-free and model-based oracles to rich-observation OCE RL, including the challenging exogenous block MDP \citep{efroni2022provably}.
Moreover, these prior works are limited to CVaR or entropic risk, while our work provides a unifying framework for the broad class of OCE risk measures.

There are also prior work for gradient-based RSRL algorithms that prove \emph{local and asymptotic convergence} \citep{chow2014algorithms,tamar2015policy,tamar2015optimizing,chow2018risk,greenberg2022efficient}.
These guarantees are not ideal since (1) any local optima can be much worse than the global optima and (2) asymptotics do not quantify the rate of convergence which could be very slow.
In contrast, we fill this gap by proving non-asymptotic \emph{finite-sample} bounds for the \emph{global convergence} of gradient-based algorithms, by leveraging recent advances in risk-neutral policy gradient theory \citep{agarwal2021theory,xiao2022convergence}.
Finally, we also prove monotone improvement which is a novel property for RSRL to the best of our knowledge.

\textbf{From the empirical side}, a popular framework for RSRL is distributional RL (DistRL), which has demonstrated state-of-the-art results in both online \citep{bellemare2017distributional,dabney2018implicit,keramati2020being,ma2020dsac} and offline RL \citep{urpi2021risk,ma2021conservative}.
At a high level, these algorithms learn the reward-to-go distribution and then greedily act to optimize the risk of the learned distribution at each step.
Unfortunately, this can diverge for some MDPs since it is not consistent with any Bellman equations.
To address this, \citet{lim2022distributional} proposed to apply DistRL in an AugMDP, but their algorithm still relies on the \emph{strong assumption} that the optimal policy is Markov, which is false in our counterexample (\cref{sec:experiments}).

Another solution is Trajectory Q-Learning (TQL) \citep{zhou2023risk} which learns history-dependent policies and distributional $Q$-functions.
While learning history-dependent policies ensures that TQL converges to the optimal RSRL policy, the learning process can be quite costly in sample complexity, computation and memory since it essentially \emph{ignores the Markov property of MDPs}.
Instead, our approach searches over the minimal sufficient policy set, the Markovian policies in the AugMDP, to optimally solve OCE RL while also maintaining statistical and computational efficiency.
Finally, TQL requires DistRL to learn the reward-to-go distribution, while our framework supports both regression-based and distribution-based methods for learning the AugMDP value function.

\textbf{Iterated risk (\emph{a.k.a.} dynamic risk).}
Another risk-sensitive RL objective is the iterated risk $\rho(r_1+\rho(r_2+\rho(r_3+\dots)))$, where $\rho$ is some risk measure.
The iterated risk, \emph{a.k.a.} dynamic risk, captures risk of the rewards-to-go at \emph{each time step} and has been studied by many works \citep{ruszczynski2010risk,du2023provably,lam2023riskaware,xu2023regret,rigter2024one}.
It also has close relations to distributionally robust MDPs, where an adversary can perturb the transition kernel at each time step \citep{wiesemann2013robust,kumar2023policy,bennett2024efficient}.
The main benefit of iterated risk is that there exist Bellman equations and an optimal Markovian policy like in the classic RL setting.
However, iterated risk may be overly risky in certain cases and overly conservative in other cases, and is also less interpretable due to the nested structure \citep{lim2022distributional}.
In contrast, our static risk $\rho(r_1+r_2+r_3+\dots)$ models the trajectory-level risk rather than at the step-level and is more interpretable.
We highlight that static and dynamic risk are orthogonal objectives and the objective is ultimately the user's choice.

\section{Preliminaries}\label{sec:prelim}
We consider an MDP with state space $\Scal$, action space $\Acal$, horizon $H$, transition kernels $P_h:\Scal\times\Acal\to\Delta(\Scal)$ and conditional reward distributions $R_h:\Scal\times\Acal\to\Delta([0,1])$, where $\Delta(S)$ denotes the set of distributions on the set $S$.
A history-dependent policy $\pi=(\pi_h)_{h\in[H]}$ interacts with the MDP in the following way: start from an initial state $s_1$, then, for each step $h=1,2,\dots,H$, sample an action $a_h\sim\pi_h(s_h,\Hcal_h)$, collect a reward $r_h\sim R_h(s_h,a_h)$, observe the next state $s_{h+1}\sim P_h(s_h,a_h)$. Here, $\Hcal_h=\braces{(s_i,a_i,r_i)}_{i\in[h-1]}$ denotes the history so far.
We denote the cumulative reward distribution under $\pi$ as $Z(\pi):=\sum_{h=1}^Hr_h$, where randomness arises from both the MDP and the policy $\pi$.
We also assume that $Z(\pi)\in[0,1]$ almost surely for all policies \citep{jiang2018open}.
The goal of OCE RL is to learn a history-dependent policy that maximizes the OCE objective:
\begin{equation}
\textstyle\oce^\star_u:=\max_{\pi\in\Pi_{\normalfont\textsf{HD}}}\oce_u(\pi), \label{eq:opt-oce-def}
\end{equation}
where $\Pi_{\normalfont\textsf{HD}}$ denotes the set of history-dependent policies.

We focus on the online setting, where at each round $k=1,2,\dots,K$, the learner outputs a (history-dependent) policy $\pi^k\in\Pi_{\normalfont\textsf{HD}}$.
The learner's goal is to minimize the total sub-optimality across $K$ rounds:
\begin{equation*}
    \textstyle\RegOCE(K):=\sum_{k=1}^K\oce_u^\star-\oce_u(\pi^k).
\end{equation*}
Probably approximately correct (PAC) bounds are upper bounds on $\RegOCE(K)$ that hold with high probability over the randomness of the learner.
In particular, PAC bounds imply that at least one policy has high OCE, \ie, $\min_{k\in[K]}\{\oce_u^\star-\oce_u(\pi^k)\}\leq \frac{1}{K}\RegOCE(K)$.
If the learner only executes policy $\pi^k$ at round $k$, then $\RegOCE(K)$ is also called a \emph{regret} bound.

\subsection{Augmented MDP for OCE}\label{sec:aug-mdp}
The key challenge in RSRL with static risk is that the optimal policies are history-dependent, as there are no Bellman-like equations.
A promising solution from prior works is to augment the MDP with a scalar state, which has been studied for CVaR and spectral risks \citep{bauerle2011markov,bauerle2021minimizing} -- in this augmented state space, there does exist a Markovian optimal policy, bypassing the need to maintain history-dependent policies.
In this section, we describe the augmented MDP (AugMDP) and extend it to the OCE setting, generalizing prior constructions.

The AugMDP has states $(s_h,b_h)\in\Saug := \Scal\times[-1,1]$, where $s_h$ is the original MDP state and $b_h$ is a new scalar state called the budget, which intuitively tracks the cumulative rewards so far.
The initial budget $b_1\in[0,1]$ is chosen by the learning algorithm and the budget transitions via $b_{h+1}=b_h-r_h$ where $r_h$ is reward at step $h$.
The reward of the AugMDP is defined as $\raug^h(s,b,a)=0$ for $h<H$ and $\raug^{H}(s,b,a)=\EE_{r\sim R_H(s,a)}[u(r-b)]$, where $V^{\max}_u=\max_{c\in[-1,1]}\abs{u(c)}$ measures the scale of $\raug$.
The augmented reward encodes the utility $u$ and is critical for the following optimality theorem.

\begin{theorem}\label{thm:informal-optimality}
There exists an initial budget $b^\star_1\in[0,1]$ s.t. the optimal risk-neutral $\piaug^\star$ in the AugMDP with initial budget $b^\star_1$ achieves optimal OCE in the original MDP. %
\end{theorem}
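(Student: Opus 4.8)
The plan is to show that, with the OCE-augmented terminal reward, the expected return collected in the AugMDP started at budget $b_1$ equals \emph{exactly} the inner objective $\EE_\pi[u(Z(\pi)-b_1)]$ appearing in~\eqref{eq:oce-def}, so that optimizing the AugMDP for a well-chosen $b_1$ reproduces the outer maximization of the OCE. First I would unroll the budget recursion: since $b_{h+1}=b_h-r_h$ and $\raug^h\equiv 0$ for $h<H$, a trajectory accrues only $\raug^H(s_H,b_H,a_H)=\EE_{r\sim R_H(s_H,a_H)}[u(r-b_H)]$, and $b_H=b_1-\sum_{h=1}^{H-1}r_h$ gives $r-b_H=\sum_{h=1}^{H}r_h-b_1$ at $r=r_H$. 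Taking expectation over the trajectory and applying the tower rule (conditioning on $\Hcal_H$, which determines $b_H$) yields that the expected AugMDP return of any policy $\piaug$ with initial budget $b_1$ equals $\EE[u(Z-b_1)]$, where $Z$ is the cumulative reward of the policy $\piaug$ induces on the original MDP. This identity holds for history-dependent AugMDP policies as well.

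Next I would set up the policy correspondence. Because $b_1$ is known and $b_h=b_1-\sum_{i<h}r_i$ is a deterministic function of the original history $\Hcal_h$, (i) every history-dependent original policy lifts to a history-dependent AugMDP policy inducing the same law over trajectories, and (ii) every AugMDP policy (Markovian or not) projects to a history-dependent original policy inducing the same law. Combined with the fact that the AugMDP is a genuine finite-horizon MDP on state $(s,b)$ -- so its optimal value $\Vaug^\star(b_1)$ is attained by a Markovian policy and coincides with the supremum over history-dependent AugMDP policies -- the two directions give, for each fixed $b_1$, that $\Vaug^\star(b_1)=\max_{\pi\in\Pi_{\normalfont\textsf{HD}}}\EE_\pi[u(Z(\pi)-b_1)]$: ``$\le$'' uses the projection of an optimal Markovian AugMDP policy together with the return identity, and ``$\ge$'' uses the lift of an optimal original history-dependent policy.

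Finally I would swap the two maxima in the OCE. Using~\eqref{eq:opt-oce-def} and~\eqref{eq:oce-def},
\[
\oce_u^\star=\max_{\pi\in\Pi_{\normalfont\textsf{HD}}}\max_{b\in\RR}\braces*{b+\EE_\pi[u(Z(\pi)-b)]}=\max_{b\in\RR}\braces*{b+\max_{\pi\in\Pi_{\normalfont\textsf{HD}}}\EE_\pi[u(Z(\pi)-b)]}=\max_{b\in[0,1]}\braces*{b+\Vaug^\star(b)},
\]
where the reduction of the outer maximization to the compact interval $[0,1]$ uses $Z(\pi)\in[0,1]$ a.s.\ together with the standard OCE regularity of $u$ (concave, nondecreasing, $u(0)=0$, $1\in\partial u(0)$; see~\cref{app:oce-examples}), which makes $b\mapsto b+\EE_\pi[u(Z(\pi)-b)]$ nondecreasing on $(-\infty,0]$ and nonincreasing on $[1,\infty)$ \emph{uniformly in} $\pi$; the maximum is then attained by compactness and upper semicontinuity of $b\mapsto b+\Vaug^\star(b)$ (or, for general $u$, by an $\varepsilon$-optimality argument). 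Taking $b_1^\star\in\arg\max_{b\in[0,1]}\{b+\Vaug^\star(b)\}$ and letting $\piaug^\star$ be the optimal risk-neutral (Markovian) policy in the AugMDP with initial budget $b_1^\star$, the history-dependent policy $\pi$ it induces satisfies $\oce_u(\pi)\ge b_1^\star+\EE_\pi[u(Z(\pi)-b_1^\star)]=b_1^\star+\Vaug^\star(b_1^\star)=\oce_u^\star$, and since $\oce_u(\pi)\le\oce_u^\star$ trivially, equality holds.

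I expect the main obstacle to be the policy-correspondence step -- making precise that collapsing the full history down to the scalar budget $b_h$ loses nothing, i.e.\ that $b_h$ is a sufficient statistic for the OCE objective. The return identity is what powers this, but one must verify that the lift of an original history-dependent policy is a legal (measurable) AugMDP policy, that projection is well-defined, and that an optimal Markovian policy exists for the AugMDP despite its continuous budget component in $[-1,1]$ (mild regularity, or the $\varepsilon$-optimal workaround). By comparison, the max-swap and the $b\in[0,1]$ reduction are routine once the OCE assumptions on $u$ are in hand.
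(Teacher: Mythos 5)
Your proof is correct and follows essentially the same route as the paper's: establish the return identity $\EE_\pi[u(Z(\pi)-b_1)]=\Vaug^{\pi,1}(s_1,b_1)$, swap the two maxima to get $\oce_u^\star=\max_{b\in[0,1]}\{b+\Vaug^{\star,1}(s_1,b)\}$, invoke Markov optimality of the risk-neutral AugMDP, and conclude via the argmax budget $b_1^\star$. The only difference is that you spell out details the paper leaves implicit — notably the justification (via concavity, monotonicity, and $1\in\partial u(0)$) for restricting the outer maximization to $b\in[0,1]$, and the measurability of the policy lift/projection — which strengthens rather than alters the argument.
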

Since the optimal risk-neutral policy $\piaug^\star$ is Markovian and deterministic, this theorem crucially mitigates the need to maintain history-dependent policies.
We note that $(\piaug^\star, b_1^\star)$ is a special history-dependent policy in the original MDP.
Thus, \cref{thm:informal-optimality} proves that the set of $(\piaug,b_1)$, where $\piaug$ is a Markovian augmented policy and $b_1$ is an initial budget, is a special class of history-dependent policies that is sufficient for OCE optimality.
Moreover, a natural implementation of \cref{thm:informal-optimality} is to solve risk-neutral RL in the AugMDP, which is the key idea behind our algorithms.
We remark that our AugMDP construction generalizes the CVaR AugMDP from \citep{bauerle2011markov} to any OCE risk.

For the AugMDP, we also define its quality function $\Qaug^{\pi,h}(s_h,b_h,a_h)=\EE_\pi[u(\sum_{t=h}^H r_t-b_h)\mid s_h,a_h,b_h]$ and value function $\Vaug^{\pi,h}(s_h,b_h)=\EE_{\pi}[\Qaug^{\pi,h}(s_h,b_h,a_h)\mid s_h,b_h]$.
Also, let $\Vaug^{\star,h}(s_h,b_h)=\max_{\pi\in\Pi_{\textsf{HD}}}\Vaug^{\pi,h}(s_h,b_h)$ be the optimal value function of the AugMDP.
\begin{proof}[Proof of \cref{thm:informal-optimality}]
The main step is to write $\oce_u^\star$ in terms of the optimal value of the AugMDP:
\begin{align}
    \textstyle\oce^\star_u
    &\textstyle\overset{(i)}{=}\max_{\pi\in\Pi_{\textsf{HD}}}\max_{b\in[0,1]}\{b+\EE[u(Z(\pi)-b)]\}\nonumber
    \\&\textstyle=\max_{b\in[0,1]}\{b+\max_{\pi\in\Pi_{\textsf{HD}}}\EE[u(Z(\pi)-b)]\}\nonumber
    \\&\textstyle\overset{(ii)}{=}\max_{b\in[0,1]}\{b+\max_{\pi\in\Pi_{\textsf{HD}}}\Vaug^{\pi,1}(s_1,b)\}\nonumber
    \\&\textstyle\overset{(iii)}{=}\max_{b\in[0,1]}\{b+\Vaug^{\star,1}(s_1,b)\}. \label{eq:opt-oce-V-star-form}
\end{align}
Step (i) uses OCE's definition in \cref{eq:oce-def} and $Z(\pi)\in[0,1]$ w.p. $1$, so the $\max_b$ can be taken over $[0,1]$.
Step (ii) uses the definition of $b_h$ and $\raug$ to deduce that $\EE_\pi[u(\sum_h r_h-b_1)]=\EE_\pi[u(r_H-b_H)\mid b_1]=\EE_\pi[\raug^H(s_H,b_H,a_H)\mid b_1]=\Vaug^{\pi,1}(s_1,b_1)$ for any $\pi$ and $b_1$.
Step (iii) uses the classical RL result that the optimal risk-neutral policy $\piaug^\star$ achieves the maximum value $\Vaug^{\star}$ over history-dependent policies \citep{puterman2014markov}.
Thus, letting $b^\star_1=\argmax_{b\in[0,1]}\{b+\Vaug^{\star,1}(s_1,b)\}$, we have:
\begin{align*}
    \textstyle&\oce_u(\piaug^\star,b_1^\star)
    \\&\textstyle=\max_{b\in[0,1]}\{b+\EE_{\piaug^\star,b_1^\star}[u(Z(\piaug^\star,b_1^\star)-b)]\}
    \\&\textstyle\geq b^\star_1 + \EE_{\piaug^\star,b_1^\star}[u(Z(\piaug^\star,b_1^\star)-b_1^\star)]
    \\&\textstyle=b^\star_1+\Vaug^{\star,1}(s_1,b_1^\star)=\oce_u^\star.
\end{align*}
This proves that $(\piaug^\star,b^\star_1)$ achieves the optimal $\oce_u^\star$.
\end{proof}

\textbf{Remark: the AugMDP is easy to simulate.}
The transition function for the augmented state $b_h$ is simply $b_{h+1}=b_h-r_h$, which is \emph{known and deterministic}.
This ensures that simulating the AugMDP is \emph{computationally efficient}.
This is also important for proving bounds in the AugMDP.
For example, model-based algorithms do not need to learn the AugMDP transitions (since it is known), so they generally inherent the same bounds as in the original MDP.

\section{Meta-Algorithm with Optimism}\label{sec:optimism}
In this section, we propose a meta-algorithm for OCE RL based on optimistic RL oracles. Our framework generalizes existing works in CVaR RL to more OCE risk measures and to more complex MDPs such as exogenous block MDPs.
At a high level, the meta-algorithm implements the intuition from \cref{thm:informal-optimality} and has two parts: (1) we apply an RL oracle in the AugMDP to learn $\piaug^{\star}$, and (2) then select the right initial budget $b_1$.
We start by formally defining optimistic oracles, which are RL algorithms that can explore using optimistic value functions:
\begin{definition}[Optimistic oracle]\label{ass:optimistic-oracle}
At each round $k=1,2,\dots,K$, the oracle \OptAlg{} first outputs an optimistic value function $\wh V_{1,k}(\cdot)$, then sees an initial state $s_{1,k}$, and finally outputs a policy $\pi^k$ which is rolled-out from $s_{1,k}$ to collect data.
Two conditions must be satisfied:
\begin{enumerate}[leftmargin=*]
\item (Optimism) $\wh V_{1,k}(s_1)\geq V^{\star}_1(s_1)-\eps^{\text{opt}}_k$ for all initial states $s_1$, where $\eps^{\text{opt}}_k$ are slack variables;
\item (Bounded Regret) $\sum_{k=1}^K\prns*{\wh V_{1,k}(s_{1,k})-V_1^{\pi^k}(s_{1,k})} \leq V^{\max}\RegOpt(K)$, where $V^{\max}$ is the scale of cumulative rewards and $\RegOpt(K)$ is bounds the total sub-optimality gap across $K$ rounds. For this to be useful, $\RegOpt(K)$ should be sublinear in $K$, \ie, $\Ocal(\sqrt{K})$.
\end{enumerate}
\end{definition}

Given an oracle \OptAlg{}, our optimistic meta-algorithm (\cref{alg:optimistic}) proceeds as follows:
at each round $k=1,2,\dots,K$, we query \OptAlg{} in the AugMDP and obtain an optimistic value function $\wh V_{1,k}(\cdot)$.
Then, we compute the initial budget by solving:
\begin{equation}
    \textstyle\wh b_k\gets\argmax_{b_1\in[0,1]}\braces*{ b_1+\wh V_{1,k}(s_1,b_1) }. \label{eq:optimism-greedy-b}
\end{equation}
Finally, we give the initial state $(s_1,\wh b_k)$ to \OptAlg{}, receive policy $\pi^k$, and proceed to the next round.
We now state our main result for \cref{alg:optimistic}.
\begin{restatable}{theorem}{OptimisticRegret}\label{thm:optimism-regret}
Assuming that \OptAlg{} satisfies \cref{ass:optimistic-oracle}, running \cref{alg:optimistic} for $K$ rounds ensures that:
\begin{align*}
    \sum_{k=1}^K\oce^\star_u-\oce_u(\pi^k,\wh b_k)\leq V^{\max}_u\RegOpt(K)+\sum_{k=1}^K\eps^{\normalfont\text{opt}}_k.
\end{align*}
\end{restatable}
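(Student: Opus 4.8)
The plan is a direct per-round reduction that chains the two properties of \OptAlg{} from \cref{ass:optimistic-oracle} with the variational identity for $\oce^\star_u$ established in the proof of \cref{thm:informal-optimality}. The key bookkeeping point is that an ``initial state'' of the AugMDP is a pair $(s_1,b_1)$, and its reward scale is $V^{\max}=V^{\max}_u=\max_{c\in[-1,1]}\abs{u(c)}$. Hence the oracle's optimism condition reads $\wh V_{1,k}(s_1,b_1)\geq \Vaug^{\star,1}(s_1,b_1)-\eps^{\normalfont\text{opt}}_k$ \emph{simultaneously for every} $b_1\in[0,1]$, while its bounded-regret condition, instantiated at the states $(s_1,\wh b_k)$ actually fed to the oracle, reads $\sum_{k=1}^K\prns*{\wh V_{1,k}(s_1,\wh b_k)-\Vaug^{\pi^k,1}(s_1,\wh b_k)}\leq V^{\max}_u\RegOpt(K)$.

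First I would lower bound $\oce_u(\pi^k,\wh b_k)$. By the definition of OCE in \cref{eq:oce-def} together with step (ii) in the proof of \cref{thm:informal-optimality}, for any policy and any initial budget one has $\oce_u(\pi^k,\wh b_k)=\max_{b\in[0,1]}\braces{b+\EE_{\pi^k,\wh b_k}[u(Z(\pi^k,\wh b_k)-b)]}\geq \wh b_k+\Vaug^{\pi^k,1}(s_1,\wh b_k)$, where the inequality simply plugs $b=\wh b_k$ into the outer maximization. Next I would upper bound $\oce^\star_u$ using optimism. Starting from \cref{eq:opt-oce-V-star-form}, $\oce^\star_u=\max_{b\in[0,1]}\braces{b+\Vaug^{\star,1}(s_1,b)}$; applying the optimism bound pointwise in $b$ and pulling the (uniform) slack $\eps^{\normalfont\text{opt}}_k$ out of the maximum gives $\oce^\star_u\leq\max_{b\in[0,1]}\braces{b+\wh V_{1,k}(s_1,b)}+\eps^{\normalfont\text{opt}}_k$, and by the defining property of $\wh b_k$ in \cref{eq:optimism-greedy-b} this maximum equals $\wh b_k+\wh V_{1,k}(s_1,\wh b_k)$.

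Subtracting the lower bound from the upper bound, the $\wh b_k$ terms cancel and round $k$ contributes at most $\prns*{\wh V_{1,k}(s_1,\wh b_k)-\Vaug^{\pi^k,1}(s_1,\wh b_k)}+\eps^{\normalfont\text{opt}}_k$. Summing over $k=1,\dots,K$ and invoking the bounded-regret condition at the states $(s_1,\wh b_k)$ yields exactly the claimed bound $V^{\max}_u\RegOpt(K)+\sum_{k=1}^K\eps^{\normalfont\text{opt}}_k$.

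There is no serious obstacle here — the argument is a clean telescoping reduction — but the one point requiring genuine care is that $\wh b_k$ is chosen \emph{after} the oracle commits to $\wh V_{1,k}$, so optimism must be used in its ``for all initial states'' form rather than at a single pre-specified budget; this is precisely why \cref{ass:optimistic-oracle} states optimism uniformly over $s_1$. The only other thing to get right is the scale substitution $V^{\max}\mapsto V^{\max}_u$ when specializing the bounded-regret guarantee to the AugMDP.
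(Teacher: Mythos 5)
Your proposal is correct and follows essentially the same route as the paper: the same lower bound $\oce_u(\pi^k,\wh b_k)\geq \wh b_k+\Vaug^{\pi^k,1}(s_1,\wh b_k)$, the same optimism-based upper bound on $\oce_u^\star$ (the paper instantiates optimism at the single budget $b_1^\star$ and then invokes the maximality of $\wh b_k$, whereas you apply it pointwise and pull the slack out of the max --- a cosmetic difference), and the same telescoping via the bounded-regret condition. No gaps.
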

This theorem is a deterministic statement which bounds \cref{alg:optimistic}'s OCE regret by the oracle's AugMDP regret, up to a scaling of $V^{\max}_u$.
Here, $V^{\max}_u$ intuitively measures the statistical hardness of estimating $\oce_u$ relative to $\EE$, which has $V^{\max}_{\EE}=1$.
For instance, $V^{\max}_{\cvar_\tau}=\tau^{-1}$; please see \cref{app:oce-examples} for more examples.
We now instantiate this result with three optimistic oracles from the RL literature.

First, UCB-VI is a model-based algorithm for \emph{tabular MDPs} with near-minimax-optimal dependence on $|\Scal|,|\Acal|,K$ \citep{azar2017minimax}.
While the AugMDP is not tabular due to the real-valued augmented state $b_h$, the augmented state has a \emph{known and deterministic} transition ($b_{h+1}=b_h-r_h$); hence, UCB-VI \emph{only needs to learn the original MDP's transition model} when operating in the AugMDP.
Indeed, by adapting the proof of \citet{wang2023near} in \cref{thm:ucbvi-optimism-regret}, we prove that UCB-VI satisfies \cref{ass:optimistic-oracle}'s conditions in the AugMDP with $\eps_k^{\text{opt}}=0$ and $\RegOpt(K)\leq\wt\Ocal(\sqrt{|\Scal||\Acal|HK})$.
Combined with \cref{thm:optimism-regret}, this immediately implies an OCE regret bound of $\wt\Ocal(V^{\max}_u\sqrt{SAHK})$, which is also near-minimax-optimal in $|\Scal|,|\Acal|,K$ for tabular MDPs. Moreover, in \cref{sec:sharp-tau-cvar}, we use variance-dependent bounds regret to obtain tight CVaR regret that matches the minimax lower bound in all terms, which shows that our upper bound is tight \citep{wang2023near}.

\algrenewcommand\algorithmicindent{0.7em} %
\begin{algorithm}[t!]
\caption{Meta-algorithm for optimistic oracles}
\label{alg:optimistic}
\begin{algorithmic}[1]
    \State\textbf{Input:} number of rounds $K$, optimistic oracle \OptAlg{} satisfying \cref{ass:optimistic-oracle}.
    \For{round $k=1,2,\dots,K$}
        \State Query \OptAlg{} in AugMDP for value func. $\wh V_{1,k}(\cdot)$. %
        \State Compute initial budget $\wh b_k$ via \cref{eq:optimism-greedy-b}.
        \State Give initial state $(s_1,\wh b_k)$ to $\OptAlg$ and receive $\pi^k$.
        \State Collect a trajectory with $\pi^k$ starting from $(s_1,\wh b_k)$. %
    \EndFor
\end{algorithmic}
\end{algorithm}

The second oracle we discuss is Rep-UCB \citep{uehara2021representation}, a model-based optimistic algorithm for \emph{low-rank MDPs} (defined in \cref{def:low-rank-mdp}), which are MDPs with a rank $d$ transition kernel and potentially infinite state space \citep{agarwal2020flambe}; low-rank MDPs capture tabular MDPs and linear MDPs.
Again, the original MDP being low-rank does not necessarily imply that the augmented MDP is low-rank.
Fortunately, as before, the augmented state $b_h$ has a known transition function and thus Rep-UCB only needs to learn the low-rank transitions of the original MDP, even when running in the AugMDP.
Indeed \citet{zhao2024provably} proved that Rep-UCB enjoys the same PAC bound in the AugMDP as in original low-rank MDP.
We adapt their proof in \cref{thm:repucb-optimism-regret} and show that Rep-UCB satisfies \cref{ass:optimistic-oracle}'s conditions in the AugMDP with $\RegOpt(K)\leq\wt\Ocal(H^3|\Acal|d^2\sqrt{K})$ and $\sum_k\eps_k^{\text{opt}}\leq \RegOpt(K)$.
Together with \cref{thm:optimism-regret}, we thus have an OCE PAC bound of $\wt\Ocal(V^{\max}_uH^3|\Acal|d^2\sqrt{K})$, generalizing \citet{zhao2024provably} to any OCE risk measure.

We have presented two examples with model-based oracles, whose bounds naturally lifts to the AugMDP since the augmented state $b_h$ has a known and deterministic transition.
In general, model-based oracles enjoy the same bounds in the AugMDP as in the original MDP, making them convenient oracles for our meta-algorithm.
Next, we discuss a model-free oracle to solve the challenging exogenous block MDP, where it is not tractable to learn the transition model.

\subsection{Bounds for exogenous block MDPs}
We now present a third optimistic oracle called GOLF \citep{jin2021bellman} and prove the first risk-sensitive bound in the challenging exogenous block MDP (Ex-BMDP) problem, due to \citet{efroni2022provably}.
\begin{definition}\label{def:exbmdp}
An Ex-BMDP has latent states $(z_h^{\text{en}},z_h^{\text{ex}})\in\Zcal_h^{\text{en}}\times\Zcal_h^{\text{ex}}$, where $\Zcal^{\text{en}}$ is the endogenous part which is tabular, and $\Zcal^{\text{ex}}$ is the exogenous part which is arbitrarily large.
The endogenous latent transition conditions on the action $z_{h+1}^{\text{en}}\sim P_h^{\text{en}}(z_h^{\text{en}},a_h)$, while the exogenous does not $z_{h+1}^{\text{ex}}\sim P_h^{\normalfont\text{ex}}(z^{\text{ex}}_h)$.
The observation state is emitted from the latents $s_h\sim o_h(z^{\text{en}}_h,z^{\text{ex}}_h)$ and,
importantly, there exists a unique function $\phi^\star_h$ such that $\phi^\star_h(s_h)=(z_h^{\text{en}},z_h^{\text{ex}})$ recovers the latents from the observation.
\end{definition}
GOLF is a model-free RL algorithm that achieves optimism by optimizing over a version space of $Q$-functions \citep{jin2021bellman}.
The version space consists of candidate $Q$-functions from a function class $\Fcal=(\Fcal_h)_{h\in[H]}, \Fcal_h\subset\Saug\times\Acal\to\RR$ whose Bellman backup is consistent with the data up to statistical noise.
Due to space constraints, we defer the formal definition of GOLF in the AugMDP to \cref{app:golf}.

The standard condition required for Bellman-consistent algorithms like GOLF is completeness, which is required for Bellman backups to be well-behaved \citep{tsitsiklis1996analysis,munos2008finite}.
Since we apply GOLF in the AugMDP, we posit completeness w.r.t. the AugMDP Bellman operator denoted as $\Tcal^{\star,h}_{\textsf{aug}}$.
\begin{assumption}\label{ass:augmdp-completeness}
$\Tcal_{\textsf{aug}}^{\star,h}f_{h+1}\in\Fcal_{h}$ for all $f_{h+1}\in\Fcal_{h+1}$.
\end{assumption}
Another condition we require is discrete rewards (\cref{ass:discrete-returns}), which we use for bounding coverability in the AugMDP \citep{xie2023the}.
Many safety-critical RL problems have discrete rewards, \eg, in sepsis treatment, reward is the patient's discharge status \citep{johnson2016mimic}.
\begin{assumption}\label{ass:discrete-returns}
We have a finite set $\Bcal\subset[0,1]$ such that $Z(\pi)\in\Bcal$ w.p. $1$ for all $\pi$, where $Z(\pi)$ is the cumulative reward of roll-outs from $\pi$.
\end{assumption}
We now state our main result for Ex-BMDPs.
\begin{theorem}\label{thm:golf-main-paper}
For $\delta\in(0,1)$, under \cref{ass:augmdp-completeness,ass:discrete-returns}, running \cref{alg:optimistic} with \OptAlg{} as GOLF (\cref{alg:oce-golf} in \cref{app:golf}) ensures that w.p. $1-\delta$,
\begin{align*}
    \textstyle\sum_{k=1}^K\oce_u^\star&\textstyle-\oce_u(\pi^k,\wh b_k)
    \leq\wt\Ocal(V^{\max}_u H \sqrt{ |\Bcal||\Zcal^{\normalfont{\text{en}}}||\Acal|K\log(|\Fcal|/\delta) }).
\end{align*}
\end{theorem}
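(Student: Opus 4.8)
The plan is to invoke \cref{thm:optimism-regret} to reduce the OCE regret bound to a regret bound for GOLF run in the AugMDP, and then cite/adapt the analysis of \citet{jin2021bellman,xie2023the} to control that AugMDP regret. Concretely, \cref{thm:optimism-regret} says that $\sum_{k=1}^K \oce_u^\star - \oce_u(\pi^k,\wh b_k) \leq V_u^{\max}\RegOpt(K) + \sum_{k=1}^K \eps_k^{\text{opt}}$, so it suffices to show that the version-space construction of GOLF in the AugMDP satisfies \cref{ass:optimistic-oracle} with $\sum_k \eps_k^{\text{opt}}$ negligible (typically zero or $O(\log(|\Fcal|/\delta))$) and $\RegOpt(K) \leq \wt\Ocal(H\sqrt{|\Bcal||\Zcal^{\text{en}}||\Acal|K\log(|\Fcal|/\delta)})$ with probability $1-\delta$. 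The optimism condition is the standard property of GOLF's version space: under \cref{ass:augmdp-completeness}, the true $\Qaug^\star$ lies in the version space with high probability, so taking the value-maximizing $Q$ in the version space yields $\wh V_{1,k}(s_1,b_1) \geq \Vaug^{\star,1}(s_1,b_1)$ for all $(s_1,b_1)$; this gives $\eps_k^{\text{opt}}=0$.

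The substantive work is the bounded-regret condition, and I expect this to be the main obstacle. GOLF's regret is governed by (i) the statistical complexity of $\Fcal$, which enters as $\log(|\Fcal|/\delta)$ via the Bellman-consistent confidence sets, and (ii) a structural complexity measure — here I would use \emph{coverability} in the sense of \citet{xie2023the}, since GOLF's regret scales as $\wt\Ocal(\sqrt{H \cdot C_{\mathrm{cov}} \cdot K \cdot \log(|\Fcal|/\delta)})$ where $C_{\mathrm{cov}}$ is the coverability coefficient of the AugMDP. The key lemma to establish is therefore: \emph{the coverability coefficient of the AugMDP for an Ex-BMDP, under \cref{ass:discrete-returns}, is at most $\wt\Ocal(|\Bcal||\Zcal^{\text{en}}||\Acal|)$.} The intuition is that the reachable state-occupancies in the AugMDP are covered by a product structure: the endogenous latent $z_h^{\text{en}}$ (tabular, so $|\Zcal^{\text{en}}|$ values), the action ($|\Acal|$ values), and the budget coordinate $b_h = b_1 - \sum_{t<h} r_t$, which — because each partial sum of rewards must be consistent with some achievable total return in the finite set $\Bcal$ and the budget transition is deterministic — can take only $\wt\Ocal(|\Bcal|)$ effectively-distinct values along any trajectory. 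Crucially, the exogenous latent $z^{\text{ex}}_h$ does \emph{not} enter the coverability bound: since the exogenous process is action-independent and does not affect rewards, it cancels out of the ratio defining coverability, exactly as in the original Ex-BMDP analysis of \citet{efroni2022provably,xie2023the}. I would prove this by exhibiting an explicit distribution $\mu_h$ over AugMDP states supported on the $\wt\Ocal(|\Bcal||\Zcal^{\text{en}}||\Acal|)$-sized set and bounding $\sup_\pi d^\pi_h(s,b,a)/\mu_h(s,b,a)$; the argument mirrors the endogenous-coverability argument but must additionally handle the budget coordinate via the discreteness of $\Bcal$.

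Having established coverability $C_{\mathrm{cov}} = \wt\Ocal(|\Bcal||\Zcal^{\text{en}}||\Acal|)$, I would plug into the GOLF regret guarantee (the version in \cref{app:golf}, following \citet{xie2023the}) to get $\RegOpt(K) \leq \wt\Ocal(\sqrt{H\,|\Bcal||\Zcal^{\text{en}}||\Acal|\,K\log(|\Fcal|/\delta)})$ and $\sum_k\eps_k^{\text{opt}} = \wt\Ocal(1)$ with probability $1-\delta$. Note that the AugMDP horizon is still $H$ and rewards are bounded by $V_u^{\max}$ (nonzero only at step $H$), so the $V^{\max}$ in \cref{ass:optimistic-oracle} is $V_u^{\max}$. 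Combining with \cref{thm:optimism-regret} yields $\sum_{k=1}^K \oce_u^\star - \oce_u(\pi^k,\wh b_k) \leq \wt\Ocal(V_u^{\max} H\sqrt{|\Bcal||\Zcal^{\text{en}}||\Acal|K\log(|\Fcal|/\delta)})$, absorbing the extra $\sqrt{H}$ into the stated $H$ and hiding $\log$ factors in $\wt\Ocal$, which is exactly the claimed bound. The remaining routine checks are that \cref{ass:augmdp-completeness} gives realizability of $\Qaug^\star$ (so the high-probability events for the confidence sets are valid) and that the initial-budget optimization in \cref{eq:optimism-greedy-b} is well-defined, both of which are immediate.
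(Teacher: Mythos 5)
Your proposal is correct and follows essentially the same route as the paper: reduce via \cref{thm:optimism-regret}, verify that GOLF is an optimistic oracle in the AugMDP with $\eps^{\text{opt}}_k=0$ and a coverability-based regret bound, and show the AugMDP coverability is $\wt\Ocal(|\Bcal||\Zcal^{\text{en}}||\Acal|)$. The only cosmetic difference is in the coverability step: the paper first proves a generic lemma $\Covaug\leq|\Bcal|\Cov$ using the cumulative-reachability characterization of coverability and then cites $\Cov\leq|\Zcal^{\text{en}}||\Acal|$ for Ex-BMDPs, whereas you sketch a direct construction of a covering distribution $\mu_h$; both yield the same bound.
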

Notably, the bound scales with the endogenous state size $|\Zcal^{\text{en}}|$ and the function class's complexity $\log(|\Fcal|)$; indeed, all salient problem parameters matches the original risk-neutral bound from \citet{xie2023the}.
The completeness assumption may be weakened to realizability if we have local simulator access \citep{mhammedi2024the}; but otherwise is unavoidable even in risk-neutral RL \citep{jin2021bellman,xie2023the}.
Thus, \cref{thm:golf-main-paper} demonstrates the versatility of our meta-algorithm to solve challenging MDPs that go beyond tabular and low-rank.

To prove \cref{thm:golf-main-paper}, we need to show that GOLF satisfies the \OptAlg{} conditions in \cref{ass:optimistic-oracle} with $\RegOpt(K)\leq\wt\Ocal(H\sqrt{|\Bcal||\Zcal^{\text{en}}||\Acal|K})$ and $\eps_k^{\text{opt}}=0$.
Due to completeness, we can invoke GOLF's coverability regret bound from \citet{xie2023the}. Then, we prove that the AugMDP coverability is bounded by $|\Bcal||\Zcal^{\text{en}}||\Acal|$ in Ex-BMDPs via a change-of-measure argument.
Then, \cref{thm:golf-main-paper} immediately follows from \cref{thm:optimism-regret}. Please see \cref{app:golf} for the full proof.

\subsection{Proof for Main Reduction (\cref{thm:optimism-regret})}
\begin{proof}[Proof of \cref{thm:optimism-regret}]
First, we upper bound $\oce^\star_u$:
\begin{align*}
    \textstyle\oce_u^\star
    &\textstyle\overset{(i)}{=} b^\star_1 + \Vaug^{\star,1}(s_1,b^\star_1)
    \\&\textstyle\overset{(ii)}{\leq} b^\star_1+\wh V_{1,k}(s_1,b^\star_1)+\eps^{\text{opt}}_k
    \\&\textstyle\overset{(iii)}{\leq} \wh b_k+\wh V_{1,k}(s_1,\wh b_k)+\eps^{\text{opt}}_k,
\end{align*}
where (i) is by \cref{sec:aug-mdp}, (ii) is by optimism of the oracle (first condition of \cref{ass:optimistic-oracle}), and (iii) is by the definition of $\wh b_k$ via \cref{eq:optimism-greedy-b}.
Second, we lower bound $\oce_u(\pi^k,\wh b_k)$:
\begin{align*}
    \textstyle\oce_u(\pi^k,\wh b_k)
    &\textstyle=\max_{b\in[0,1]}\{b+\EE[u(Z(\pi^k,\wh b_k)-b)]\}
    \\&\textstyle\geq \wh b_k+\EE[u(Z(\pi^k,\wh b_k)-\wh b_k)]
    \\&\textstyle=\wh b_k+\Vaug^{\pi^k,1}(s_1,\wh b_k).
\end{align*}
Thus, $\sum_k\oce^\star_u-\oce_u(\pi^k,\wh b_k)$ is bounded by:
\begin{align*}
    &\textstyle\leq\sum_{k} \wh V_{1,k}(s_1,\wh b_k)-\Vaug^{\pi^k,1}(s_1,\wh b_k)+\eps^{\text{opt}}_k
    \\&\textstyle\leq V^{\max}_u\RegOpt(K)+\sum_k\eps^{\text{opt}}_k,
\end{align*}
where the last inequality is due to the oracle's regret bound (second condition of \cref{ass:optimistic-oracle}).
\end{proof}

\textbf{Computational Complexity of \cref{alg:optimistic}.}
There are two sources of computational cost.
The first cost is \OptAlg{}: from the examples we discussed, UCB-VI and Rep-UCB are computationally and oracle efficient while GOLF is inefficient due to its version space optimism \citep{dann2018oracle}.
So long as \OptAlg{} is computationally efficient, then calls to \OptAlg{} should also be efficient.
The second cost is $\wh b_k$: computing \cref{eq:optimism-greedy-b} exactly is difficult since the objective is non-convex.
However, it can be efficiently approximated by projecting rewards to a discrete grid with spacing $\iota=\Ocal(1/K)$, so each round's \cref{eq:optimism-greedy-b} can be done in $\Ocal(K)$ time.
Notably, the approximation error per round is at most $H\iota$ and across $K$ rounds is at most $\Ocal(H\iota\cdot K)=\Ocal(H)$, which is a lower order term \citep[App. H]{wang2023near}.
Thus, provided that the oracle is efficient, \cref{alg:optimistic} is both computationally and statistically efficient for OCE RL.

\section{Meta-Algorithm with Policy Optimization}\label{sec:policy-optimization}
\begin{algorithm}[!t]
\caption{Meta-algorithm for PO oracles}
\label{alg:policy-optimization}
\begin{algorithmic}[1]
\State\textbf{Input:} number of rounds $K$, initial budget set $\Bcal$ (\cref{ass:discrete-returns}), oracle \POAlg{} (\cref{ass:policy-optimization-oracle}).
\State Let $d_1^{\textsf{aug}}=(\delta(s_1),\op{Unif}(\Bcal))$ be the init. augmented state distribution. Initialize $\POAlg{}$ with $d^{\op{aug}}_1$.
\For{round $k=1,2,\dots,K$}
    \State Obtain policy $\pi^k$ and value estimate $\wh V^{\pi^k}_1$ from running \POAlg{} in the AugMDP.
    \State Compute the initial budget $\wh b_k$ with \cref{eq:PO-initial-budget}.
    \State Give init. augmented state $(s_1,\wh b_k)$ to \POAlg{}. %
\EndFor
\end{algorithmic}
\end{algorithm}
In this section, we propose a meta-algorithm that uses standard policy optimization (PO) oracles, which enjoy a new type of guarantee called \emph{local improvement} that ensures the policy's performance monotonically improves at each round.
First, we formalize the conditions needed for the risk-neutral PO oracle, which we remark is satisfied by gradient-based algorithms such as REINFORCE, NPG and PPO \citep{kakade2001natural,agarwal2021theory,grudzien2022mirror}.
\begin{definition}[PO Oracle]\label{ass:policy-optimization-oracle}
Let $d_1$ denote an exploratory initial state distribution.
At each round $k=1,2,\dots,K$, the oracle \POAlg{} outputs a policy $\pi^k$ and its value estimate $\wh V^{\pi^k}_1(\cdot)$ such that $\EE_{s_1\sim d_1}\abs*{V^{\pi^k}_1(s_1)-\wh V^{\pi^k}_1(s_1)}\leq \eps^{\op{po}}_k$.
Then, there are two conditions on $\{\pi^k\}_{k\in[K]}$'s performance:
\begin{enumerate}[leftmargin=*]
    \item (Approx. Improvement) $V^{\pi^{k+1}}_1(d_1)\geq V^{\pi^k}_1(d_1)-\eps^{\op{po}}_k$, where $V(d_1) := \EE_{s_1\sim d_1}V(s_1)$ and $\eps^{\op{po}}_k$ are slack vars;
    \item (Global Convergence) $\sum_{k=1}^K(V^\star_1(d_1)-V^{\pi^k}_1(d_1))\leq V^{\max}\RegPO(K)$ where $\RegPO(K)$ bounds the total sub-optimality gap of \POAlg{} across $K$ rounds.
\end{enumerate}
\end{definition}

Given the oracle \POAlg{}, our meta-algorithm (\cref{alg:policy-optimization}) proceeds as follows:
at each round $k=1,2,\dots,K$, we query \POAlg{} with the initial budget distribution $\op{Unif}(\Bcal)$, where $\Bcal$ is the set of possible initial budgets that we assume to be discrete (\cref{ass:discrete-returns}).
We require \cref{ass:discrete-returns} because PO oracles, unlike optimistic ones, do not strategically explore and thus rely on an exploratory initial state distribution even in risk-neutral RL \citep{agarwal2021theory}.
Then, \POAlg{} returns a policy $\pi^k$ and an estimate of its value $\wh V^{\pi^k}_1(\cdot)$, with which we compute the initial budget $\wh b_k$ via
\begin{equation}
    \textstyle\wh b_k\gets\argmax_{b_1\in\Bcal}\braces*{ b_1+\wh V_{1,k}(s_1,b_1) }. \label{eq:PO-initial-budget}
\end{equation}
This is different from the optimistic version (\cref{eq:optimism-greedy-b}) since the maximization is restricted to $\Bcal$.
We now state our main results for \cref{alg:policy-optimization}, starting with global convergence.
\begin{restatable}[Global Convergence]{theorem}{GlobalConvergence}\label{thm:global-convergence-guarantee}
Under \cref{ass:discrete-returns} and assuming \POAlg{} satisfies the global convergence criterion of \cref{ass:policy-optimization-oracle},
running \cref{alg:policy-optimization} ensures that:
\begin{align*}
    \textstyle\sum_{k=1}^K\oce_u^\star&\textstyle-\oce_u(\pi^k,\wh b_k)
    \textstyle\leq |\Bcal|\prns*{V^{\max}_u\RegPO(K) + 2\sum_k\eps^{\op{po}}_k}.
\end{align*}
\end{restatable}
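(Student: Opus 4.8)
The plan is to bound, at every round $k$, the OCE suboptimality $\oce_u^\star-\oce_u(\pi^k,\wh b_k)$ by the suboptimality of \POAlg{} inside the AugMDP at the exploratory initial distribution $d_1^{\textsf{aug}}=(\delta(s_1),\op{Unif}(\Bcal))$ — at the price of a multiplicative $|\Bcal|$ — and then sum over $k$ and substitute the global-convergence criterion of \cref{ass:policy-optimization-oracle}.

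First I would lower-bound the realized OCE. Taking the feasible budget $b=\wh b_k$ in the outer maximization defining $\oce_u(\pi^k,\wh b_k)$ and using the identity $\EE_{\pi,b_1}[u(Z-b_1)]=\Vaug^{\pi,1}(s_1,b_1)$ from the proof of \cref{thm:informal-optimality} gives $\oce_u(\pi^k,\wh b_k)\ge\wh b_k+\Vaug^{\pi^k,1}(s_1,\wh b_k)$. I would then turn the oracle's in-expectation value accuracy into a pointwise estimate: since $d_1^{\textsf{aug}}$ is uniform over the $|\Bcal|$ augmented initial states $\{(s_1,b):b\in\Bcal\}$, the bound $\EE_{(s_1,b)\sim d_1^{\textsf{aug}}}\abs{\Vaug^{\pi^k,1}(s_1,b)-\wh V_{1,k}(s_1,b)}\le\eps^{\op{po}}_k$ implies $\abs{\Vaug^{\pi^k,1}(s_1,b)-\wh V_{1,k}(s_1,b)}\le|\Bcal|\eps^{\op{po}}_k$ for each $b\in\Bcal$. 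Using this twice together with the fact that $\wh b_k$ maximizes $b\mapsto b+\wh V_{1,k}(s_1,b)$ over $\Bcal$ (\cref{eq:PO-initial-budget}), I get for every $b\in\Bcal$ that $\wh b_k+\Vaug^{\pi^k,1}(s_1,\wh b_k)\ge b+\Vaug^{\pi^k,1}(s_1,b)-2|\Bcal|\eps^{\op{po}}_k$, hence $\oce_u(\pi^k,\wh b_k)\ge\max_{b\in\Bcal}\{b+\Vaug^{\pi^k,1}(s_1,b)\}-2|\Bcal|\eps^{\op{po}}_k$.

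Next I would upper-bound the optimum. By \cref{eq:opt-oce-V-star-form}, $\oce_u^\star=\max_{b\in[0,1]}\{b+\Vaug^{\star,1}(s_1,b)\}$; under \cref{ass:discrete-returns} this maximum can be taken over $\Bcal$ (exactly so for CVaR, where the optimal budget is the relevant quantile of the $\Bcal$-supported return, which is why \cref{eq:PO-initial-budget} restricts to $\Bcal$), so $\oce_u^\star=\max_{b\in\Bcal}\{b+\Vaug^{\star,1}(s_1,b)\}$. Subtracting the two displays and using $\max_b f(b)-\max_b g(b)\le\max_b(f(b)-g(b))$ yields $\oce_u^\star-\oce_u(\pi^k,\wh b_k)\le\max_{b\in\Bcal}\{\Vaug^{\star,1}(s_1,b)-\Vaug^{\pi^k,1}(s_1,b)\}+2|\Bcal|\eps^{\op{po}}_k$. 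Since each difference is nonnegative (and a single policy $\piaug^\star$ attains $\Vaug^{\star,1}$ at all augmented states), the max over $\Bcal$ is at most the sum over $\Bcal$, i.e.\ $|\Bcal|\EE_{b\sim\op{Unif}(\Bcal)}[\Vaug^{\star,1}(s_1,b)-\Vaug^{\pi^k,1}(s_1,b)]$ — precisely $|\Bcal|$ times the AugMDP suboptimality gap in \POAlg{}'s guarantee. Summing over $k=1,\dots,K$ and invoking the global-convergence criterion with AugMDP reward scale $V^{\max}=V^{\max}_u$ gives the claimed $|\Bcal|\prns*{V^{\max}_u\RegPO(K)+2\sum_k\eps^{\op{po}}_k}$.

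The crux — and the source of the $|\Bcal|$ factor — is that a generic PO oracle, unlike an optimistic one, only controls quantities \emph{averaged} over its exploratory distribution $d_1^{\textsf{aug}}$ (both its value-estimation error and its regret $\Vaug^{\star,1}(d_1^{\textsf{aug}})-\Vaug^{\pi^k,1}(d_1^{\textsf{aug}})$), whereas the budget-selection rule \cref{eq:PO-initial-budget} needs them controlled at the single budget $\wh b_k$ it picks; trading an average over $\Bcal$ for a worst case over $\Bcal$ costs $|\Bcal|$ and seems intrinsic to this reduction. The only other step to double-check is that restricting the budget search to $\Bcal$ forfeits no optimality in $\oce_u^\star$ — transparent for CVaR, and otherwise to be handled via the grid refinement used after \cref{alg:optimistic}.
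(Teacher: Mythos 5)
Your proposal is correct and follows essentially the same route as the paper: the paper factors the argument through the intermediate quantity $\RLB^{(k)}=\max_{b\in\Bcal}\{b+\Vaug^{\pi^k,1}(s_1,b)\}$, proving $\RLB^{(k)}-\oce_u(\pi^k,\wh b_k)\le 2|\Bcal|\eps^{\op{po}}_k$ exactly via your pointwise-from-average value-estimate bound plus the argmax property of $\wh b_k$ (\cref{lem:lower-bound-rlb}), and $\sum_k\oce_u^\star-\RLB^{(k)}\le|\Bcal|V^{\max}_u\RegPO(K)$ exactly via your max-over-$\Bcal$ $\le$ $|\Bcal|\times$average step (\cref{thm:stronger-global-convergence}). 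The one step you flag for double-checking --- that the optimal budget $b_1^\star$ may be taken in $\Bcal$ --- is likewise assumed without comment in the paper's own proof, so you have lost nothing relative to it.
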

This theorem bounds \cref{alg:policy-optimization}'s sub-optimality by the PO oracle's sub-optimality, up to a factor of $|\Bcal|V^{\max}_u$, plus the value estimation errors.
Thus, if \POAlg{} has small value estimation errors and converges to the optimal policy, then \cref{alg:policy-optimization} converges to the optimal OCE policy.
We highlight that \cref{thm:global-convergence-guarantee} is a finite-sample PAC bound whereas prior guarantees for risk-sensitive policy gradients have been asymptotic \citep{tamar2015policy,tamar2015optimizing}.

Next, toward stating the local improvement result, we introduce the risk lower bound (RLB) $\RLB^{(k)}$ defined as:
\begin{equation}
    \textstyle\RLB(\pi) := \max_{b_1\in\Bcal}\braces*{ b_1+\Vaug^{\pi,1}(s_1,b_1) }. \label{eq:lower-bound-def}
\end{equation}
Notice that the lower bound is tight at $\pi^\star$: $\RLB(\pi^\star)=\oce^\star_u$, due to \cref{eq:opt-oce-V-star-form}.
In general, $\RLB$ approximately lower bounds the true OCE of $(\pi^k, \wh b_k)$, as the following lemma proves.
\begin{restatable}[RLB]{lemma}{riskLowerBound}\label{lem:lower-bound-rlb}
The $\RLB^{(k)}$ approximately lower bounds the true OCE of $\pi^k$ with initial budget $\wh b_k$:
\begin{equation}
    \textstyle\RLB^{(k)}-\oce_u(\pi^k, \wh b_k)\leq2|\Bcal|\eps^{\normalfont\text{po}}_k. \label{eq:rlb-lower-bounds-oce}
\end{equation}
Moreover, the lower bound is tight on average:
\begin{align*}
    \textstyle\sum_k\textstyle\oce_u(\pi^k,\wh b_k)-\RLB^{(k)}
    \textstyle\leq|\Bcal|V^{\max}_u\RegPO(K).
\end{align*}
\end{restatable}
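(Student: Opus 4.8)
The plan is to prove the two claims separately, both by relating $\RLB^{(k)}$ to the augmented value function and then passing through the OCE identity from \cref{sec:aug-mdp}. Throughout I will use the key fact established in step (ii) of the proof of \cref{thm:informal-optimality}, namely that for any policy $\pi$ and any initial budget $b_1$ we have $\EE_{\pi,b_1}[u(Z(\pi,b_1)-b_1)] = \Vaug^{\pi,1}(s_1,b_1)$, together with the definition $\oce_u(\pi,b_1) = \max_{b\in[0,1]}\{b+\EE_{\pi,b}[u(Z(\pi,b)-b)]\}$. The only extra ingredient beyond these identities is the PO oracle's value-estimation guarantee, $\EE_{s_1\sim d_1^{\textsf{aug}}}|\Vaug^{\pi^k,1}(s_1)-\wh V^{\pi^k}_{1}(s_1)|\leq \eps^{\op{po}}_k$; since $d_1^{\textsf{aug}}=(\delta(s_1),\op{Unif}(\Bcal))$ puts mass $1/|\Bcal|$ on each $(s_1,b)$ with $b\in\Bcal$, this yields the per-budget bound $|\Vaug^{\pi^k,1}(s_1,b)-\wh V_{1,k}(s_1,b)|\leq |\Bcal|\,\eps^{\op{po}}_k$ for every $b\in\Bcal$ — this is where the $|\Bcal|$ factors come from.

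For the first claim (the lower-bound property, \cref{eq:rlb-lower-bounds-oce}), I would start from the definition $\RLB^{(k)}=\max_{b_1\in\Bcal}\{b_1+\Vaug^{\pi^k,1}(s_1,b_1)\}$ and let $b^\dagger$ be the maximizing budget. Replacing the true value by the estimate costs $|\Bcal|\eps^{\op{po}}_k$, so $\RLB^{(k)}\leq b^\dagger+\wh V_{1,k}(s_1,b^\dagger)+|\Bcal|\eps^{\op{po}}_k \leq \wh b_k+\wh V_{1,k}(s_1,\wh b_k)+|\Bcal|\eps^{\op{po}}_k$ by the definition of $\wh b_k$ in \cref{eq:PO-initial-budget}. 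Converting back to the true value costs another $|\Bcal|\eps^{\op{po}}_k$, giving $\RLB^{(k)}\leq \wh b_k+\Vaug^{\pi^k,1}(s_1,\wh b_k)+2|\Bcal|\eps^{\op{po}}_k$. Finally $\wh b_k+\Vaug^{\pi^k,1}(s_1,\wh b_k)=\wh b_k+\EE[u(Z(\pi^k,\wh b_k)-\wh b_k)]\leq \max_{b\in[0,1]}\{b+\EE[u(Z(\pi^k,\wh b_k)-b)]\}=\oce_u(\pi^k,\wh b_k)$, which is exactly the argument already used in the second half of the proof of \cref{thm:optimism-regret}. Chaining these gives \cref{eq:rlb-lower-bounds-oce}.

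For the second claim (tightness on average), the idea is to upper bound $\oce_u(\pi^k,\wh b_k)-\RLB^{(k)}$ by the augmented-MDP sub-optimality gap of $\pi^k$ summed over rounds, and then invoke the global-convergence criterion of \cref{ass:policy-optimization-oracle}. Concretely: since $Z(\pi^k,\wh b_k)\in[0,1]$ a.s.\ the outer max in $\oce_u$ may be taken over $b\in[0,1]$, and in fact — using \cref{ass:discrete-returns} that all realized returns lie in the finite set $\Bcal$ — the optimal $b$ in the OCE definition can be taken within $\Bcal$ (the objective $b+\EE[u(Z-b)]$ has the OCE structure whose maximizer over $[0,1]$ can be chosen at a point of the support of $Z$; I would need to verify this reduction carefully, possibly via a concavity/subgradient argument on $b\mapsto b+\EE[u(Z-b)]$). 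Granting this, $\oce_u(\pi^k,\wh b_k)=\max_{b\in\Bcal}\{b+\Vaug^{\pi^k,1}(s_1,b)\}=\RLB^{(k)}$ would actually hold exactly — but $\oce_u$ of $(\pi^k,\wh b_k)$ uses $Z(\pi^k,\wh b_k)$ whose distribution depends on the specific choice $\wh b_k$ only through a shift, so this needs care. The cleaner route: bound $\oce_u(\pi^k,\wh b_k)\leq \oce_u^\star = \max_{b\in\Bcal}\{b+\Vaug^{\star,1}(s_1,b)\}$ (by \cref{eq:opt-oce-V-star-form} and the same support argument), and bound $\RLB^{(k)}\geq b+\Vaug^{\pi^k,1}(s_1,b)$ for every $b\in\Bcal$, in particular for $b$ equal to the argmax defining $\oce_u^\star$; then $\oce_u(\pi^k,\wh b_k)-\RLB^{(k)}\leq \Vaug^{\star,1}(s_1,b_u^\star)-\Vaug^{\pi^k,1}(s_1,b_u^\star)\leq |\Bcal|\prns{\Vaug^{\star,1}(d_1^{\textsf{aug}})-\Vaug^{\pi^k,1}(d_1^{\textsf{aug}})}$, since $d_1^{\textsf{aug}}$ places mass $1/|\Bcal|$ on $(s_1,b_u^\star)$ and the gap is nonnegative at every augmented state. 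Summing over $k$ and applying the global-convergence bound $\sum_k(\Vaug^{\star,1}(d_1^{\textsf{aug}})-\Vaug^{\pi^k,1}(d_1^{\textsf{aug}}))\leq V^{\max}_u\RegPO(K)$ yields the claim.

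The main obstacle I anticipate is the support-reduction step — justifying that, under \cref{ass:discrete-returns}, the outer maximization over $b\in[0,1]$ in the OCE definition (and in $\oce_u^\star$) can be replaced by maximization over the finite set $\Bcal$ without loss. This is plausible because the OCE objective $b\mapsto b+\EE[u(Z-b)]$ is concave in $b$ and, for the canonical utilities, piecewise-linear/smooth with breakpoints at the atoms of $Z$; but making it fully general for arbitrary concave $u$ requires either an explicit subgradient-optimality argument at an atom of $Z$ or a direct appeal to how $\Vaug^{\star,1}$ is defined on $\Saug=\Scal\times[-1,1]$. Everything else is a routine recombination of the three identities (AugMDP value $=$ utility expectation, $\oce_u=\max_b\{\cdots\}$, and the oracle's estimation/convergence guarantees) in the style already demonstrated in the proof of \cref{thm:optimism-regret}.
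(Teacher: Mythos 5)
Your proposal is correct and follows essentially the same route as the paper: the first claim is the paper's own chain of inequalities (importance sampling under $\op{Unif}(\Bcal)$ to get the per-budget error $|\Bcal|\eps^{\op{po}}_k$, two conversions between $\Vaug^{\pi^k,1}$ and $\wh V^{\pi^k}_1$ around the definition of $\wh b_k$ in \cref{eq:PO-initial-budget}, and $\wh b_k+\Vaug^{\pi^k,1}(s_1,\wh b_k)\leq\oce_u(\pi^k,\wh b_k)$), and the second claim is exactly the paper's argument combining $\oce_u(\pi^k,\wh b_k)\leq\oce_u^\star$ with the stronger global-convergence bound of \cref{thm:stronger-global-convergence}. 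The ``support-reduction'' obstacle you flag is a genuine subtlety, but the paper leaves it equally implicit: its proof of \cref{thm:stronger-global-convergence} also needs the optimal budget $b^\star_1$ to lie in $\Bcal$ when it lower-bounds $\RLB^{(k)}$ by $b^\star_1+\Vaug^{\pi^k,1}(s_1,b^\star_1)$ and then upper-bounds the pointwise gap by $|\Bcal|$ times the $\op{Unif}(\Bcal)$-averaged gap, so your proposal is not weaker than the paper on this point.
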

We now state the approximate improvement guarantee.
\begin{restatable}[Local Improvement]{theorem}{LocalImprovement}\label{thm:lower-bound-monotonic-improv}
Under \cref{ass:discrete-returns} and assuming \POAlg{} satisfies the approximate improvement criterion of \cref{ass:policy-optimization-oracle}, running \cref{alg:policy-optimization} ensures that:
\begin{equation*}
    \textstyle\forall k\in[K]: \RLB^{(k+1)}\geq \RLB^{(k)}-|\Bcal|\eps^{\normalfont\text{po}}_k.
\end{equation*}
\end{restatable}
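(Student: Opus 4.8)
The plan is to fix a round $k$ and chain three inequalities that pass through the AugMDP value of $\pi^{k+1}$ and $\pi^{k}$ evaluated at the single initial budget $\wh b_k$ — the augmented initial state $(s_1,\wh b_k)$ that \cref{alg:policy-optimization} hands to \POAlg{} at round $k$. First, since $\wh b_k\in\Bcal$, I can plug $b_1=\wh b_k$ into the maximum defining $\RLB^{(k+1)}$ in \cref{eq:lower-bound-def} to get $\RLB^{(k+1)}\ge\wh b_k+\Vaug^{\pi^{k+1},1}(s_1,\wh b_k)$. Second, I translate \POAlg{}'s approximate-improvement criterion (\cref{ass:policy-optimization-oracle}) into the AugMDP: because $\Vaug^{\pi,1}(s_1,b_1)$ is exactly the AugMDP value $V^{\pi}_1(s_1,b_1)$ (the identity used in step (ii) of the proof of \cref{thm:informal-optimality}) and \POAlg{} is run at round $k$ from the augmented initial state $(s_1,\wh b_k)$, approximate improvement gives $\Vaug^{\pi^{k+1},1}(s_1,\wh b_k)\ge\Vaug^{\pi^{k},1}(s_1,\wh b_k)-\eps^{\textnormal{po}}_k$. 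After these two steps it remains to show that $\wh b_k$, chosen by the plug-in rule \cref{eq:PO-initial-budget}, is near-optimal for $\RLB^{(k)}$, namely $\wh b_k+\Vaug^{\pi^{k},1}(s_1,\wh b_k)\ge\RLB^{(k)}-|\Bcal|\eps^{\textnormal{po}}_k$.

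For this third step I would run a standard argmax-mismatch argument. Let $b^\star\in\Bcal$ attain the maximum in $\RLB^{(k)}=\max_{b_1\in\Bcal}\{b_1+\Vaug^{\pi^{k},1}(s_1,b_1)\}$. Optimality of $\wh b_k$ for the estimated objective gives $b^\star+\wh V_{1,k}(s_1,b^\star)\le\wh b_k+\wh V_{1,k}(s_1,\wh b_k)$, and adding and subtracting true values rearranges this to $\RLB^{(k)}-[\wh b_k+\Vaug^{\pi^{k},1}(s_1,\wh b_k)]\le|\Vaug^{\pi^{k},1}(s_1,b^\star)-\wh V_{1,k}(s_1,b^\star)|+|\Vaug^{\pi^{k},1}(s_1,\wh b_k)-\wh V_{1,k}(s_1,\wh b_k)|$. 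Each of these two per-budget estimation errors is bounded by the full sum over $\Bcal$, and \POAlg{}'s value-estimate accuracy in the AugMDP (with $d_1=d_1^{\textsf{aug}}=(\delta(s_1),\op{Unif}(\Bcal))$) says $\EE_{b_1\sim\op{Unif}(\Bcal)}|\Vaug^{\pi^{k},1}(s_1,b_1)-\wh V_{1,k}(s_1,b_1)|\le\eps^{\textnormal{po}}_k$, i.e. $\sum_{b_1\in\Bcal}|\Vaug^{\pi^{k},1}(s_1,b_1)-\wh V_{1,k}(s_1,b_1)|\le|\Bcal|\eps^{\textnormal{po}}_k$. Hence the two error terms sum to at most $|\Bcal|\eps^{\textnormal{po}}_k$, which gives the third step, and chaining the three inequalities yields $\RLB^{(k+1)}\ge\RLB^{(k)}-|\Bcal|\eps^{\textnormal{po}}_k$ after collecting the slack terms.

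The main obstacle is the tension between the pointwise structure of $\RLB$ — a maximum over budgets of $\Vaug^{\pi,1}(s_1,\cdot)$ — and the fact that \POAlg{}'s guarantees (both its value-estimate accuracy and its improvement, which is tied to the single restart state it is given) are controlled only on average over the budget grid $\Bcal$. Converting an average-over-$\Bcal$ control into a guarantee at an individual budget, which is what $\RLB$ demands, is exactly what forces the $|\Bcal|$ factor; the careful part is verifying that the argument touches only a constant number of budgets ($\wh b_k$ and $b^\star$) so that their estimation errors still sum to at most $|\Bcal|\eps^{\textnormal{po}}_k$ rather than blowing up further. A secondary point is that the bound must hold for each $k$ separately, so — unlike \cref{thm:global-convergence-guarantee} — one cannot average over rounds or invoke $\RegPO(K)$; the per-round approximate-improvement property is essential, and \cref{ass:discrete-returns} is precisely what makes $\Bcal$ finite so that the argmax over $\Bcal$ and the factor $|\Bcal|$ make sense.
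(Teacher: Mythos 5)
Your overall strategy differs from the paper's and has a genuine gap in its second step. The paper's proof never touches the estimated values $\wh V^{\pi^k}_1$ or the computed budget $\wh b_k$ at all: it sets $b^\star_k=\argmax_{b\in\Bcal}\{b+\Vaug^{\pi^k,1}(s_1,b)\}$, observes that $\RLB^{(k)}-\RLB^{(k+1)}\le \Vaug^{\pi^k,1}(s_1,b^\star_k)-\Vaug^{\pi^{k+1},1}(s_1,b^\star_k)$ (since $b^\star_k$ is feasible for the max defining $\RLB^{(k+1)}$), and then converts this single-budget difference into $|\Bcal|$ times the $\op{Unif}(\Bcal)$-averaged difference, which the approximate-improvement criterion bounds by $\eps^{\op{po}}_k$. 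Your route instead passes through $\wh b_k$, which forces you to additionally invoke the value-estimation guarantee to relate $\wh b_k$ to the true maximizer; your third step (the argmax-mismatch bound of $|\Bcal|\eps^{\op{po}}_k$) is correct but is work the paper's proof does not need to do.

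The gap is your second inequality, $\Vaug^{\pi^{k+1},1}(s_1,\wh b_k)\ge\Vaug^{\pi^{k},1}(s_1,\wh b_k)-\eps^{\op{po}}_k$, which you justify by saying \POAlg{} "is run at round $k$ from $(s_1,\wh b_k)$." \cref{ass:policy-optimization-oracle} does not provide this: the approximate-improvement criterion is stated for the fixed exploratory distribution $d_1$ with which the oracle is initialized --- in \cref{alg:policy-optimization} this is $d_1^{\textsf{aug}}=(\delta(s_1),\op{Unif}(\Bcal))$ --- not for the single deployment state handed to the oracle each round. The oracle therefore only controls $\frac{1}{|\Bcal|}\sum_{b\in\Bcal}\bigl(\Vaug^{\pi^{k},1}(s_1,b)-\Vaug^{\pi^{k+1},1}(s_1,b)\bigr)\le\eps^{\op{po}}_k$; your own "main obstacle" paragraph states exactly this average-only control, which contradicts the pointwise form you used. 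Repairing the step the way the paper does (bounding one budget's difference by $|\Bcal|$ times the average) costs a factor of $|\Bcal|$, and combined with the $|\Bcal|\eps^{\op{po}}_k$ from your estimation-error step your chain would yield roughly $\RLB^{(k+1)}\ge\RLB^{(k)}-2|\Bcal|\eps^{\op{po}}_k$; even taking your steps at face value the slacks add to $(|\Bcal|+1)\eps^{\op{po}}_k$, not $|\Bcal|\eps^{\op{po}}_k$. The fix is to abandon $\wh b_k$ entirely and evaluate at the true maximizer $b^\star_k$ for $\pi^k$, as the paper does.
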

This theorem shows that \cref{alg:policy-optimization}'s policies are approximately improving at each step, up to an error $|\Bcal|\eps^{\op{po}}_k$.
Hence, if the value estimation error from \POAlg{} is small, then the new policy's RLB cannot be much worse than the current.
While local improvement is known for risk-neutral RL \citep{agarwal2021theory}, to the best of our knowledge, \cref{thm:lower-bound-monotonic-improv} is the first analog in risk-sensitive RL.
In sum, \cref{thm:global-convergence-guarantee} and \cref{thm:lower-bound-monotonic-improv} provide complementary global and local guarantees for OCE RL, stated in a finite-sample manner.

\subsection{Case Study: Natural Policy Gradient}\label{sec:npg-case-study}
As an example \POAlg{}, we consider natural policy gradients (NPG; \citealp{kakade2001natural}) whose core ideas underpin TRPO and PPO \citep{schulman2015trust,schulman2017proximal}.
Let $\pi^\theta_h$ denote an augmented policy with parameters $\theta_h$.
Then, the NPG update is given by
\begin{equation}
    \theta^{k+1}_h=\theta^k_h+\eta F_{h,k}^\dagger\nabla_{\theta_h}\Vaug^{\pi^k},\label{eq:npg-update-main}
\end{equation}
where $F_{h,k}=\EE_{\pi^k}[\nabla_{\theta}\log\pi^k_h(a_h\mid s_h,b_h)\nabla_\theta\log\pi^k_h(a_h\mid s_h,b_h)^\top]$ is the Fisher info matrix and $F^\dagger_{h,k}$ is its pseudo-inverse.
A special class of policies are softmax policies: $\pi^\theta_h(a\mid s,b)\propto\exp(\theta^h_{s,b,a})$ with $\theta^h_{s,b,a}\in\RR$.
Under softmax parameterization, the NPG update in \cref{eq:npg-update-main} is equivalent to soft policy iteration \citep{kakade2001natural}:
\begin{equation*}
    \textstyle\pi^{k+1}(a\mid s,b)\propto\pi^{k}(a\mid s,b)\exp(\eta\cdot\Qaug^{\pi^k}(s,b,a)).
\end{equation*}
We now show that NPG in the AugMDP satisfies both conditions in \cref{ass:policy-optimization-oracle}.
First, for local improvement, \citet[Lemma 5.2]{agarwal2021theory} implies that $\EE_{b\sim\op{Unif}(\Bcal)}[\Vaug^{\pi^{k+1}}(s_1,b)-\Vaug^{\pi^k}(s_1,b)]\geq 0$, which uses the fact that the initial budget distribution is $\op{Unif}(\Bcal)$.
Then, for global convergence, \citet[Theorem 5.3]{agarwal2021theory} proves that $\RegPO(K)\leq \Ocal(H)$, provided we set an appropriate learning rate $\eta=H\log|\Acal|$.
Thus, NPG satisfies \cref{ass:policy-optimization-oracle} so we can apply both \cref{thm:global-convergence-guarantee} and \cref{thm:lower-bound-monotonic-improv}.
To the best of our knowledge, these are the first non-asymptotic guarantees for gradient-based risk-sensitive RL.
In \cref{app:npg-finite-horizon-primer}, we extend this analysis further to smooth policy parameterization and unknown $\Qaug$ functions, using compatible function approximation \`a la \citet{agarwal2021theory,xiao2022convergence}.

\section{Simulation Experiments}\label{sec:experiments}
We describe a numerical simulation to demonstrate the importance of learning history-dependent policies for OCE RL and to empirically evaluate our algorithms.
\paragraph{Setting up synthetic MDP.}
The proof-of-concept MDP is shown in \cref{fig:counterexample-mdp} and has two states.
At $s_1$, all actions lead to a random reward $r_1\sim \op{Ber}(0.5)$ and transits to $s_2$.
At $s_2$, the first action $a_1$ gives a random reward $r_2\mid s_2,a_1\sim 1.5\cdot\op{Ber}(0.75)$, while another action $a_2$ gives a deterministic reward $r_2\mid s_2,a_2=0.5$. The trajectory ends after $s_2$.

The MDP is designed so that the optimal CVaR policy is non-Markovian.
Due to the MDP's simplicity, we can compute the optimal CVaR and the Markovian policies' CVaR in closed form, which we list in \cref{tab:cvar-calculations}.
Specifically, the optimal action at $s_2$ depends on the random outcome of the first reward $r_1$: if $r_1=0$ then it should be risky and pick $a_1$; if $r_1=1$ then it should be conservative and pick $a_2$.
This optimal policy achieves $\cvar_{0.25}^\star=0.75$, while Markovian policies, which do not react to $r_1$, can only achieve $\cvar_{0.25}=0.5$.
This reinforces our thesis that history-dependent policies are required for optimality.

\begin{figure}[!h]
\centering
\includegraphics[width=0.35\linewidth]{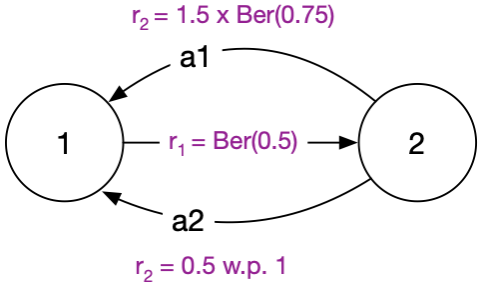}
\caption{A simple MDP where the optimal CVaR policy is history-dependent. Each policy's cumulative reward dist. is shown below.}  \label{fig:counterexample-mdp}
\end{figure}

\begin{table}[h]
    \centering
    \renewcommand{\arraystretch}{1.1} %
    \begin{tabular}{|l|c|c|}
        \hline
        \textbf{Action at $s_2$} & \textbf{Cumulative Reward Dist.} & $\cvar_{0.25}$ \\
        \hline
        $a_1$ (Markov) & $\begin{array}{c} \big\{(0, \nicefrac{1}{8}), (1, \nicefrac{1}{8}), \\ (1.5, \nicefrac{3}{8}), (2.5, \nicefrac{3}{8})\big\} \end{array}$ & $0.5$ \\
        \hline
        $a_2$ (Markov) & $\{(0.5, \nicefrac{1}{2}), (1.5, \nicefrac{1}{2})\}$ & $0.5$ \\
        \hline
        $a_1$ if $r_1=0$ & \multirow{2}{*}{$\{(0, \nicefrac{1}{8}), (1.5, \nicefrac{7}{8})\}$} & \multirow{2}{*}{$0.75$}  \\
        $a_2$ if $r_1=1$ & & \\
        \hline
    \end{tabular}
    \caption{The cumulative reward distribution of Markovian policies (rows 1-2) and the optimal policy (row 3).
    For the distribution, $\{(v_i,p_i)\}_{i}$ denotes a random variable that takes value $v_i$ w.p. $p_i$, s.t. $\sum_ip_i=1$.
    The optimal CVaR policy has $\cvar_{0.25}^\star=0.75$ while both Markovian policies have $\cvar_{0.25}=0.5$.}
    \label{tab:cvar-calculations}
\end{table}

\paragraph{Experiment with tabular policies.}
\begin{table*}[t]
    \centering
    \begin{tabular}{l c c c c}
        \toprule
        \textbf{OCE} & \textbf{Optimistic Alg} (\cref{alg:optimistic}) & \textbf{PO Alg} (\cref{alg:policy-optimization}) & \textbf{Best Markovian} & \textbf{Markovian Optimal?} \\
        \midrule
        $\mathbb{E}[X]-\text{Var}(X)$ & $1.07\pm 0.01$ & $1.06\pm 0.01$ & $0.95$ & \xmark \\
        $\mathbb{E}[X]-2\text{Var}(X)$ & $0.81\pm 0.01$ & $0.75\pm 0.08$ & $0.5$ & \xmark \\
        $\text{Entr}_{-1.0}(X)$ & $1.25\pm 0.01$ & $1.23\pm 0.03$ & $1.25$ & \cmark \\
        $\text{Entr}_{-2.0}(X)$ & $0.90\pm 0.02$ & $0.90\pm 0.01$ & $0.91$ & \cmark \\
        $\text{CVaR}_{0.25}(X)$ & $0.75\pm 0.02$ & $0.71\pm 0.08$ & $0.5$ & \xmark \\
        $\text{CVaR}_{0.5}(X)$ & $1.12\pm 0.03$ & $1.12\pm 0.03$ & $1.0$ & \xmark \\
        \bottomrule
    \end{tabular}
    \caption{We benchmark our optimistic \cref{alg:optimistic} with UCB-VI as \OptAlg{} and our PO \cref{alg:policy-optimization} with NPG as \POAlg{} against the best Markovian policy for various OCEs. We repeat the experiment $10$ times and report $95\%$ confidence intervals for the average performance.}
    \label{tab:oce_comparison}
\end{table*}
We now apply our meta-algorithms to the synthetic MDP.
For our optimistic meta-algorithm (\cref{alg:optimistic}), we use UCB-VI \citep{azar2017minimax} as the oracle \OptAlg{}.
For our gradient-based meta-algorithm (\cref{alg:policy-optimization}), we use NPG as the oracle \POAlg{}.
We evaluate our algorithms and compute the best Markovian policies for six OCEs: two CVaRs, two entropic risks and two mean-variances, where recall CVaR and entropic risks are defined as:
\begin{align*}
    \textstyle\cvar_\tau(X)&\textstyle:=\max_{b\in\RR}\{b-\tau^{-1}\EE[(b-X)_+]\},
    \\\textstyle\entropic_\beta(X)&\textstyle:=\frac{1}{\beta}\ln\EE[\exp(\beta X)].
\end{align*}
The results are in \cref{tab:oce_comparison} and we see that our algorithms are consistently better than the best Markovian policy.
We remark that entropic risk is a special case where the optimal policy is Markovian \citep{fei2020risk,fei2021exponential}; nonetheless, our algorithms still consistently learn the optimal policy despite AugMDP not being necessary for entropic risk.

\begin{figure}[!h]
\centering
\includegraphics[width=0.5\linewidth]{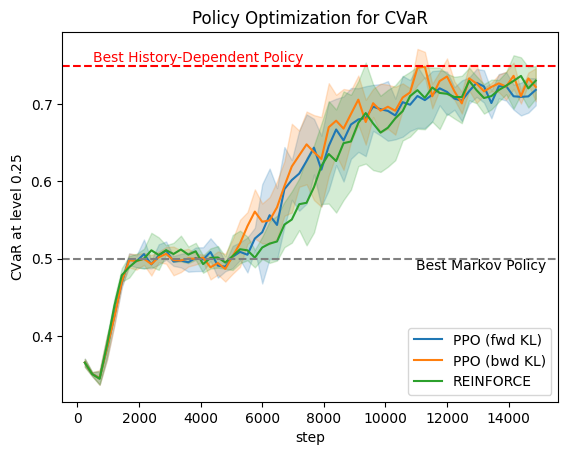}
\caption{Learning curves for \cref{alg:policy-optimization} with three oracles: REINFORCE and PPO with fwd \& bwd KL.
We repeat runs five times and report $95\%$ confidence intervals for the mean performance.}
\label{fig:learning-curve}
\end{figure}

\paragraph{Experiment with neural network policies.}
To test the versatility of our PO algorithm (\cref{alg:policy-optimization}), we also evaluate three deep RL oracles: PPO with forward KL \citep{schulman2017proximal}, PPO with backward KL \citep{hsu2020revisiting}, and REINFORCE \citep{sutton1999policy}.
Using neural networks to approximate the policy and $Q$ functions, we train all three oracles to maximize $\cvar_{0.25}$ in the synthetic MDP.

We plot the learning curves in \cref{fig:learning-curve}, where we see that \cref{alg:policy-optimization} consistently converges to the optimal $\cvar^\star_{0.25}$ value of $0.75$ for all three oracles.
An interesting trend is that \cref{alg:policy-optimization} tends to first plateau at a Markovian policy at $\sim 2k$ steps, and then eventually converge to the optimal non-Markovian policy at $\sim 8k$ steps, suggesting that Markovian policies are perhaps easier-to-learn local optima in the AugMDP.
Nonetheless, our algorithm consistently converges to the optimal policy given enough gradient steps.
In sum, our experiments demonstrate the importance of history-dependent policies in risk-sensitive RL and the effectiveness of our AugMDP algorithms.

We report all hyperparameters and training details in \cref{app:hyperparameters}.

\section{Conclusion}
In this paper, we proposed two meta-algorithms for RSRL with the static OCE risk.
These meta-algorithms provide a general reduction from OCE RL to risk-neutral RL in the augmented MDP framework.
First, we proposed an optimistic meta-algorithm (\cref{alg:optimistic}) that generalizes all prior bounds in CVaR RL to the OCE setting when UCB-VI and Rep-UCB are used as the optimistic oracle.
Moreover, using GOLF as the optimistic oracle, we proved the first risk-sensitive regret bounds for exogenous block MDPs.
Second, we proposed a gradient-based meta-algorithm (\cref{alg:policy-optimization}) that enjoys both global convergence and local improvement.
With this framework, we deduced the first finite-sample bounds for policy gradients in risk-sensitive RL.
Finally, we evaluate our algorithms in a proof-of-concept MDP where they consistently learn the optimal policy and outperform the best Markovian policy.
A promising direction for future work is to scale our ideas to more complex tasks such as robotics or finetuning LLMs \citep{ouyang2022training}.
Given the plurality of values and risks, it is also promising to apply our ideas to design algorithms for risk-sensitive pluralistic alignment \citep{sorensenposition,wang2024conditioned}.

\bibliography{main}

\begin{thebibliography}{70}
\providecommand{\natexlab}[1]{#1}
\providecommand{\url}[1]{\texttt{#1}}
\expandafter\ifx\csname urlstyle\endcsname\relax
  \providecommand{\doi}[1]{doi: #1}\else
  \providecommand{\doi}{doi: \begingroup \urlstyle{rm}\Url}\fi

\bibitem[Agarwal et~al.(2020)Agarwal, Kakade, Krishnamurthy, and Sun]{agarwal2020flambe}
Alekh Agarwal, Sham Kakade, Akshay Krishnamurthy, and Wen Sun.
\newblock Flambe: Structural complexity and representation learning of low rank mdps.
\newblock \emph{Advances in neural information processing systems}, 33:\penalty0 20095--20107, 2020.

\bibitem[Agarwal et~al.(2021)Agarwal, Kakade, Lee, and Mahajan]{agarwal2021theory}
Alekh Agarwal, Sham~M Kakade, Jason~D Lee, and Gaurav Mahajan.
\newblock On the theory of policy gradient methods: Optimality, approximation, and distribution shift.
\newblock \emph{The Journal of Machine Learning Research}, 22\penalty0 (1):\penalty0 4431--4506, 2021.

\bibitem[Allais(1990)]{allais1990allais}
Maurice Allais.
\newblock Allais paradox.
\newblock In \emph{Utility and probability}, pages 3--9. Springer, 1990.

\bibitem[Artzner et~al.(1999)Artzner, Delbaen, Eber, and Heath]{artzner1999coherent}
Philippe Artzner, Freddy Delbaen, Jean-Marc Eber, and David Heath.
\newblock Coherent measures of risk.
\newblock \emph{Mathematical finance}, 9\penalty0 (3):\penalty0 203--228, 1999.

\bibitem[Azar et~al.(2017)Azar, Osband, and Munos]{azar2017minimax}
Mohammad~Gheshlaghi Azar, Ian Osband, and R{\'e}mi Munos.
\newblock Minimax regret bounds for reinforcement learning.
\newblock In \emph{International Conference on Machine Learning}, pages 263--272. PMLR, 2017.

\bibitem[Bastani et~al.(2022)Bastani, Ma, Shen, and Xu]{bastani2022regret}
Osbert Bastani, Yecheng~Jason Ma, Estelle Shen, and Wanqiao Xu.
\newblock Regret bounds for risk-sensitive reinforcement learning.
\newblock In Alice~H. Oh, Alekh Agarwal, Danielle Belgrave, and Kyunghyun Cho, editors, \emph{Advances in Neural Information Processing Systems}, 2022.
\newblock URL \url{https://openreview.net/forum?id=yJEUDfzsTX7}.

\bibitem[B{\"a}uerle and Glauner(2021)]{bauerle2021minimizing}
Nicole B{\"a}uerle and Alexander Glauner.
\newblock Minimizing spectral risk measures applied to markov decision processes.
\newblock \emph{Mathematical Methods of Operations Research}, 94\penalty0 (1):\penalty0 35--69, 2021.

\bibitem[B{\"a}uerle and Ott(2011)]{bauerle2011markov}
Nicole B{\"a}uerle and Jonathan Ott.
\newblock Markov decision processes with average-value-at-risk criteria.
\newblock \emph{Mathematical Methods of Operations Research}, 74\penalty0 (3):\penalty0 361--379, 2011.

\bibitem[Bellemare et~al.(2017)Bellemare, Dabney, and Munos]{bellemare2017distributional}
Marc~G Bellemare, Will Dabney, and R{\'e}mi Munos.
\newblock A distributional perspective on reinforcement learning.
\newblock In \emph{International Conference on Machine Learning}, pages 449--458. PMLR, 2017.

\bibitem[Ben-Tal and Teboulle(2007)]{ben2007old}
Aharon Ben-Tal and Marc Teboulle.
\newblock An old-new concept of convex risk measures: The optimized certainty equivalent.
\newblock \emph{Mathematical Finance}, 17\penalty0 (3):\penalty0 449--476, 2007.

\bibitem[Bennett et~al.(2024)Bennett, Kallus, Oprescu, Sun, and Wang]{bennett2024efficient}
Andrew Bennett, Nathan Kallus, Miruna Oprescu, Wen Sun, and Kaiwen Wang.
\newblock Efficient and sharp off-policy evaluation in robust markov decision processes.
\newblock \emph{Advances in Neural Information Processing Systems}, 2024.

\bibitem[Bowling et~al.(2023)Bowling, Martin, Abel, and Dabney]{bowling2023settling}
Michael Bowling, John~D Martin, David Abel, and Will Dabney.
\newblock Settling the reward hypothesis.
\newblock In \emph{International Conference on Machine Learning}, pages 3003--3020. PMLR, 2023.

\bibitem[Chow and Ghavamzadeh(2014)]{chow2014algorithms}
Yinlam Chow and Mohammad Ghavamzadeh.
\newblock Algorithms for cvar optimization in mdps.
\newblock \emph{Advances in neural information processing systems}, 27, 2014.

\bibitem[Chow et~al.(2018)Chow, Ghavamzadeh, Janson, and Pavone]{chow2018risk}
Yinlam Chow, Mohammad Ghavamzadeh, Lucas Janson, and Marco Pavone.
\newblock Risk-constrained reinforcement learning with percentile risk criteria.
\newblock \emph{Journal of Machine Learning Research}, 18\penalty0 (167):\penalty0 1--51, 2018.

\bibitem[Dabney et~al.(2018)Dabney, Ostrovski, Silver, and Munos]{dabney2018implicit}
Will Dabney, Georg Ostrovski, David Silver, and R{\'e}mi Munos.
\newblock Implicit quantile networks for distributional reinforcement learning.
\newblock In \emph{International conference on machine learning}, pages 1096--1105. PMLR, 2018.

\bibitem[Dann et~al.(2018)Dann, Jiang, Krishnamurthy, Agarwal, Langford, and Schapire]{dann2018oracle}
Christoph Dann, Nan Jiang, Akshay Krishnamurthy, Alekh Agarwal, John Langford, and Robert~E Schapire.
\newblock On oracle-efficient pac rl with rich observations.
\newblock \emph{Advances in neural information processing systems}, 31, 2018.

\bibitem[Du et~al.(2023)Du, Wang, and Huang]{du2023provably}
Yihan Du, Siwei Wang, and Longbo Huang.
\newblock Provably efficient risk-sensitive reinforcement learning: Iterated {CV}ar and worst path.
\newblock In \emph{The Eleventh International Conference on Learning Representations}, 2023.
\newblock URL \url{https://openreview.net/forum?id=Yn0xg-kHNW-}.

\bibitem[Efroni et~al.(2022)Efroni, Misra, Krishnamurthy, Agarwal, and Langford]{efroni2022provably}
Yonathan Efroni, Dipendra Misra, Akshay Krishnamurthy, Alekh Agarwal, and John Langford.
\newblock Provably filtering exogenous distractors using multistep inverse dynamics.
\newblock In \emph{International Conference on Learning Representations}, 2022.
\newblock URL \url{https://openreview.net/forum?id=RQLLzMCefQu}.

\bibitem[Fei et~al.(2020)Fei, Yang, Chen, Wang, and Xie]{fei2020risk}
Yingjie Fei, Zhuoran Yang, Yudong Chen, Zhaoran Wang, and Qiaomin Xie.
\newblock Risk-sensitive reinforcement learning: Near-optimal risk-sample tradeoff in regret.
\newblock \emph{Advances in Neural Information Processing Systems}, 33:\penalty0 22384--22395, 2020.

\bibitem[Fei et~al.(2021)Fei, Yang, Chen, and Wang]{fei2021exponential}
Yingjie Fei, Zhuoran Yang, Yudong Chen, and Zhaoran Wang.
\newblock Exponential bellman equation and improved regret bounds for risk-sensitive reinforcement learning.
\newblock \emph{Advances in Neural Information Processing Systems}, 34:\penalty0 20436--20446, 2021.

\bibitem[F{\"o}llmer and Schied(2011)]{follmer2011stochastic}
Hans F{\"o}llmer and Alexander Schied.
\newblock \emph{Stochastic finance: an introduction in discrete time}.
\newblock Walter de Gruyter, 2011.

\bibitem[Greenberg et~al.(2022)Greenberg, Chow, Ghavamzadeh, and Mannor]{greenberg2022efficient}
Ido Greenberg, Yinlam Chow, Mohammad Ghavamzadeh, and Shie Mannor.
\newblock Efficient risk-averse reinforcement learning.
\newblock \emph{Advances in Neural Information Processing Systems}, 35:\penalty0 32639--32652, 2022.

\bibitem[Grudzien et~al.(2022)Grudzien, De~Witt, and Foerster]{grudzien2022mirror}
Jakub Grudzien, Christian A~Schroeder De~Witt, and Jakob Foerster.
\newblock Mirror learning: A unifying framework of policy optimisation.
\newblock In \emph{International Conference on Machine Learning}, pages 7825--7844. PMLR, 2022.

\bibitem[Howard and Matheson(1972)]{howard1972risk}
Ronald~A Howard and James~E Matheson.
\newblock Risk-sensitive markov decision processes.
\newblock \emph{Management science}, 18\penalty0 (7):\penalty0 356--369, 1972.

\bibitem[Hsu et~al.(2020)Hsu, Mendler-D{\"u}nner, and Hardt]{hsu2020revisiting}
Chloe Ching-Yun Hsu, Celestine Mendler-D{\"u}nner, and Moritz Hardt.
\newblock Revisiting design choices in proximal policy optimization.
\newblock \emph{arXiv preprint arXiv:2009.10897}, 2020.

\bibitem[Huang et~al.(2023)Huang, Chen, and Jiang]{huang2023reinforcement}
Audrey Huang, Jinglin Chen, and Nan Jiang.
\newblock Reinforcement learning in low-rank mdps with density features.
\newblock In \emph{International Conference on Machine Learning}, pages 13710--13752. PMLR, 2023.

\bibitem[Jiang and Agarwal(2018)]{jiang2018open}
Nan Jiang and Alekh Agarwal.
\newblock Open problem: The dependence of sample complexity lower bounds on planning horizon.
\newblock In \emph{Conference On Learning Theory}, pages 3395--3398. PMLR, 2018.

\bibitem[Jin et~al.(2021)Jin, Liu, and Miryoosefi]{jin2021bellman}
Chi Jin, Qinghua Liu, and Sobhan Miryoosefi.
\newblock Bellman eluder dimension: New rich classes of rl problems, and sample-efficient algorithms.
\newblock \emph{Advances in neural information processing systems}, 34:\penalty0 13406--13418, 2021.

\bibitem[Johnson et~al.(2016)Johnson, Pollard, Shen, Lehman, Feng, Ghassemi, Moody, Szolovits, Anthony~Celi, and Mark]{johnson2016mimic}
Alistair~EW Johnson, Tom~J Pollard, Lu~Shen, Li-wei~H Lehman, Mengling Feng, Mohammad Ghassemi, Benjamin Moody, Peter Szolovits, Leo Anthony~Celi, and Roger~G Mark.
\newblock Mimic-iii, a freely accessible critical care database.
\newblock \emph{Scientific data}, 3\penalty0 (1):\penalty0 1--9, 2016.

\bibitem[Kahneman and Tversky(2013)]{kahneman2013prospect}
Daniel Kahneman and Amos Tversky.
\newblock Prospect theory: An analysis of decision under risk.
\newblock In \emph{Handbook of the fundamentals of financial decision making: Part I}, pages 99--127. World Scientific, 2013.

\bibitem[Kakade(2001)]{kakade2001natural}
Sham~M Kakade.
\newblock A natural policy gradient.
\newblock \emph{Advances in neural information processing systems}, 14, 2001.

\bibitem[Keramati et~al.(2020)Keramati, Dann, Tamkin, and Brunskill]{keramati2020being}
Ramtin Keramati, Christoph Dann, Alex Tamkin, and Emma Brunskill.
\newblock Being optimistic to be conservative: Quickly learning a cvar policy.
\newblock In \emph{Proceedings of the AAAI conference on artificial intelligence}, 2020.

\bibitem[Kumar et~al.(2023)Kumar, Derman, Geist, Levy, and Mannor]{kumar2023policy}
Navdeep Kumar, Esther Derman, Matthieu Geist, Kfir~Yehuda Levy, and Shie Mannor.
\newblock Policy gradient for rectangular robust markov decision processes.
\newblock In \emph{Thirty-seventh Conference on Neural Information Processing Systems}, 2023.
\newblock URL \url{https://openreview.net/forum?id=NLpXRrjpa6}.

\bibitem[Lam et~al.(2023)Lam, Verma, Low, and Jaillet]{lam2023riskaware}
Thanh Lam, Arun Verma, Bryan Kian~Hsiang Low, and Patrick Jaillet.
\newblock Risk-aware reinforcement learning with coherent risk measures and non-linear function approximation.
\newblock In \emph{The Eleventh International Conference on Learning Representations}, 2023.
\newblock URL \url{https://openreview.net/forum?id=-RwZOVybbj}.

\bibitem[Lim and Malik(2022)]{lim2022distributional}
Shiau~Hong Lim and Ilyas Malik.
\newblock Distributional reinforcement learning for risk-sensitive policies.
\newblock \emph{Advances in Neural Information Processing Systems}, 35:\penalty0 30977--30989, 2022.

\bibitem[Ma et~al.(2020)Ma, Xia, Zhou, Yang, and Zhao]{ma2020dsac}
Xiaoteng Ma, Li~Xia, Zhengyuan Zhou, Jun Yang, and Qianchuan Zhao.
\newblock Dsac: Distributional soft actor critic for risk-sensitive reinforcement learning.
\newblock \emph{arXiv preprint arXiv:2004.14547}, 2020.

\bibitem[Ma et~al.(2021)Ma, Jayaraman, and Bastani]{ma2021conservative}
Yecheng Ma, Dinesh Jayaraman, and Osbert Bastani.
\newblock Conservative offline distributional reinforcement learning.
\newblock \emph{Advances in Neural Information Processing Systems}, 34:\penalty0 19235--19247, 2021.

\bibitem[Markowitz(1952)]{markowitz1952portfolio}
HM~Markowitz.
\newblock Portfolio selection, the journal of finance. 7 (1).
\newblock \emph{N}, 1:\penalty0 71--91, 1952.

\bibitem[Mhammedi et~al.(2024)Mhammedi, Foster, and Rakhlin]{mhammedi2024the}
Zakaria Mhammedi, Dylan~J Foster, and Alexander Rakhlin.
\newblock The power of resets in online reinforcement learning.
\newblock In \emph{The Thirty-eighth Annual Conference on Neural Information Processing Systems}, 2024.
\newblock URL \url{https://openreview.net/forum?id=7sACcaOmGi}.

\bibitem[Munos and Szepesv{\'a}ri(2008)]{munos2008finite}
R{\'e}mi Munos and Csaba Szepesv{\'a}ri.
\newblock Finite-time bounds for fitted value iteration.
\newblock \emph{Journal of Machine Learning Research}, 9\penalty0 (5), 2008.

\bibitem[Ouyang et~al.(2022)Ouyang, Wu, Jiang, Almeida, Wainwright, Mishkin, Zhang, Agarwal, Slama, Ray, et~al.]{ouyang2022training}
Long Ouyang, Jeffrey Wu, Xu~Jiang, Diogo Almeida, Carroll Wainwright, Pamela Mishkin, Chong Zhang, Sandhini Agarwal, Katarina Slama, Alex Ray, et~al.
\newblock Training language models to follow instructions with human feedback.
\newblock \emph{Advances in neural information processing systems}, 35:\penalty0 27730--27744, 2022.

\bibitem[Pardo et~al.(2018)Pardo, Tavakoli, Levdik, and Kormushev]{pardo2018time}
Fabio Pardo, Arash Tavakoli, Vitaly Levdik, and Petar Kormushev.
\newblock Time limits in reinforcement learning.
\newblock In \emph{International Conference on Machine Learning}, pages 4045--4054. PMLR, 2018.

\bibitem[Puterman(2014)]{puterman2014markov}
Martin~L Puterman.
\newblock \emph{Markov decision processes: discrete stochastic dynamic programming}.
\newblock John Wiley \& Sons, 2014.

\bibitem[Rigter et~al.(2023)Rigter, Lacerda, and Hawes]{rigter2024one}
Marc Rigter, Bruno Lacerda, and Nick Hawes.
\newblock One risk to rule them all: A risk-sensitive perspective on model-based offline reinforcement learning.
\newblock \emph{Advances in Neural Information Processing Systems}, 36, 2023.

\bibitem[Rockafellar and Uryasev(2000)]{rockafellar2000optimization}
R~Tyrrell Rockafellar and Stanislav Uryasev.
\newblock Optimization of conditional value-at-risk.
\newblock \emph{Journal of risk}, 2:\penalty0 21--42, 2000.

\bibitem[Ruszczy{\'n}ski(2010)]{ruszczynski2010risk}
Andrzej Ruszczy{\'n}ski.
\newblock Risk-averse dynamic programming for markov decision processes.
\newblock \emph{Mathematical programming}, 125:\penalty0 235--261, 2010.

\bibitem[Schulman et~al.(2015)Schulman, Levine, Abbeel, Jordan, and Moritz]{schulman2015trust}
John Schulman, Sergey Levine, Pieter Abbeel, Michael Jordan, and Philipp Moritz.
\newblock Trust region policy optimization.
\newblock In \emph{International conference on machine learning}, pages 1889--1897. PMLR, 2015.

\bibitem[Schulman et~al.(2017)Schulman, Wolski, Dhariwal, Radford, and Klimov]{schulman2017proximal}
John Schulman, Filip Wolski, Prafulla Dhariwal, Alec Radford, and Oleg Klimov.
\newblock Proximal policy optimization algorithms.
\newblock \emph{arXiv preprint arXiv:1707.06347}, 2017.

\bibitem[Sorensen et~al.(2024)Sorensen, Moore, Fisher, Gordon, Mireshghallah, Rytting, Ye, Jiang, Lu, Dziri, et~al.]{sorensenposition}
Taylor Sorensen, Jared Moore, Jillian Fisher, Mitchell~L Gordon, Niloofar Mireshghallah, Christopher~Michael Rytting, Andre Ye, Liwei Jiang, Ximing Lu, Nouha Dziri, et~al.
\newblock Position: A roadmap to pluralistic alignment.
\newblock In \emph{International Conference on Machine Learning}, 2024.

\bibitem[Sutton(2004)]{sutton2004reward}
Richard~S. Sutton.
\newblock The reward hypothesis.
\newblock \url{http://incompleteideas.net/rlai.cs.ualberta.ca/RLAI/rewardhypothesis.html}, 2004.

\bibitem[Sutton and Barto(2018)]{sutton2018reinforcement}
Richard~S Sutton and Andrew~G Barto.
\newblock \emph{Reinforcement learning: An introduction}.
\newblock MIT press, 2018.

\bibitem[Sutton et~al.(1999)Sutton, McAllester, Singh, and Mansour]{sutton1999policy}
Richard~S Sutton, David McAllester, Satinder Singh, and Yishay Mansour.
\newblock Policy gradient methods for reinforcement learning with function approximation.
\newblock \emph{Advances in neural information processing systems}, 12, 1999.

\bibitem[Tamar et~al.(2015{\natexlab{a}})Tamar, Chow, Ghavamzadeh, and Mannor]{tamar2015policy}
Aviv Tamar, Yinlam Chow, Mohammad Ghavamzadeh, and Shie Mannor.
\newblock Policy gradient for coherent risk measures.
\newblock \emph{Advances in neural information processing systems}, 28, 2015{\natexlab{a}}.

\bibitem[Tamar et~al.(2015{\natexlab{b}})Tamar, Glassner, and Mannor]{tamar2015optimizing}
Aviv Tamar, Yonatan Glassner, and Shie Mannor.
\newblock Optimizing the cvar via sampling.
\newblock In \emph{Twenty-Ninth AAAI Conference on Artificial Intelligence}, 2015{\natexlab{b}}.

\bibitem[Tsitsiklis and Van~Roy(1996)]{tsitsiklis1996analysis}
John Tsitsiklis and Benjamin Van~Roy.
\newblock Analysis of temporal-diffference learning with function approximation.
\newblock \emph{Advances in neural information processing systems}, 9, 1996.

\bibitem[Uehara et~al.(2022)Uehara, Zhang, and Sun]{uehara2021representation}
Masatoshi Uehara, Xuezhou Zhang, and Wen Sun.
\newblock Representation learning for online and offline {RL} in low-rank {MDP}s.
\newblock In \emph{ICLR}, 2022.
\newblock URL \url{https://openreview.net/forum?id=J4iSIR9fhY0}.

\bibitem[Urp{\'\i} et~al.(2021)Urp{\'\i}, Curi, and Krause]{urpi2021risk}
N{\'u}ria~Armengol Urp{\'\i}, Sebastian Curi, and Andreas Krause.
\newblock Risk-averse offline reinforcement learning.
\newblock In \emph{International Conference on Learning Representations}, 2021.

\bibitem[Wang et~al.(2023)Wang, Kallus, and Sun]{wang2023near}
Kaiwen Wang, Nathan Kallus, and Wen Sun.
\newblock Near-minimax-optimal risk-sensitive reinforcement learning with cvar.
\newblock \emph{International Conference on Machine Learning}, 2023.

\bibitem[Wang et~al.(2024{\natexlab{a}})Wang, Kallus, and Sun]{wang2024central}
Kaiwen Wang, Nathan Kallus, and Wen Sun.
\newblock The central role of the loss function in reinforcement learning.
\newblock \emph{arXiv preprint arXiv:2409.12799}, 2024{\natexlab{a}}.

\bibitem[Wang et~al.(2024{\natexlab{b}})Wang, Kidambi, Sullivan, Agarwal, Dann, Michi, Gelmi, Li, Gupta, Dubey, et~al.]{wang2024conditioned}
Kaiwen Wang, Rahul Kidambi, Ryan Sullivan, Alekh Agarwal, Christoph Dann, Andrea Michi, Marco Gelmi, Yunxuan Li, Raghav Gupta, Avinava Dubey, et~al.
\newblock Conditional language policy: A general framework for steerable multi-objective finetuning.
\newblock \emph{Findings of Empirical Methods in Natural Language Processing}, 2024{\natexlab{b}}.

\bibitem[Wang et~al.(2024{\natexlab{c}})Wang, Oertell, Agarwal, Kallus, and Sun]{wang2024more}
Kaiwen Wang, Owen Oertell, Alekh Agarwal, Nathan Kallus, and Wen Sun.
\newblock More benefits of being distributional: Second-order bounds for reinforcement learning.
\newblock In \emph{Forty-first International Conference on Machine Learning}, 2024{\natexlab{c}}.
\newblock URL \url{https://openreview.net/forum?id=kZBCFQe1Ej}.

\bibitem[Wang et~al.(2024{\natexlab{d}})Wang, Zhou, Lui, and Sun]{wang2024model}
Zhiyong Wang, Dongruo Zhou, John Lui, and Wen Sun.
\newblock Model-based rl as a minimalist approach to horizon-free and second-order bounds.
\newblock \emph{arXiv preprint arXiv:2408.08994}, 2024{\natexlab{d}}.

\bibitem[Wiesemann et~al.(2013)Wiesemann, Kuhn, and Rustem]{wiesemann2013robust}
Wolfram Wiesemann, Daniel Kuhn, and Ber{\c{c}} Rustem.
\newblock Robust markov decision processes.
\newblock \emph{Mathematics of Operations Research}, 38\penalty0 (1):\penalty0 153--183, 2013.

\bibitem[Xiao(2022)]{xiao2022convergence}
Lin Xiao.
\newblock On the convergence rates of policy gradient methods.
\newblock \emph{Journal of Machine Learning Research}, 23\penalty0 (282):\penalty0 1--36, 2022.

\bibitem[Xie et~al.(2023)Xie, Foster, Bai, Jiang, and Kakade]{xie2023the}
Tengyang Xie, Dylan~J Foster, Yu~Bai, Nan Jiang, and Sham~M. Kakade.
\newblock The role of coverage in online reinforcement learning.
\newblock In \emph{The Eleventh International Conference on Learning Representations}, 2023.
\newblock URL \url{https://openreview.net/forum?id=LQIjzPdDt3q}.

\bibitem[Xu et~al.(2023)Xu, Gao, and He]{xu2023regret}
Wenhao Xu, Xuefeng Gao, and Xuedong He.
\newblock Regret bounds for markov decision processes with recursive optimized certainty equivalents.
\newblock \emph{International Conference on Machine Learning}, 2023.

\bibitem[Yaari(1987)]{yaari1987dual}
Menahem~E Yaari.
\newblock The dual theory of choice under risk.
\newblock \emph{Econometrica: Journal of the Econometric Society}, pages 95--115, 1987.

\bibitem[Zanette and Brunskill(2019)]{zanette2019tighter}
Andrea Zanette and Emma Brunskill.
\newblock Tighter problem-dependent regret bounds in reinforcement learning without domain knowledge using value function bounds.
\newblock In \emph{International Conference on Machine Learning}, pages 7304--7312. PMLR, 2019.

\bibitem[Zhao et~al.(2024)Zhao, Zhan, Hu, fung Leung, Farnia, Sun, and Lee]{zhao2024provably}
Yulai Zhao, Wenhao Zhan, Xiaoyan Hu, Ho~fung Leung, Farzan Farnia, Wen Sun, and Jason~D. Lee.
\newblock Provably efficient {CV}ar {RL} in low-rank {MDP}s.
\newblock In \emph{The Twelfth International Conference on Learning Representations}, 2024.
\newblock URL \url{https://openreview.net/forum?id=9x6yrFAPnx}.

\bibitem[Zhou et~al.(2023)Zhou, Liu, Ren, Luo, Zhang, and Li]{zhou2023risk}
Ruiwen Zhou, Minghuan Liu, Kan Ren, Xufang Luo, Weinan Zhang, and Dongsheng Li.
\newblock Is risk-sensitive reinforcement learning properly resolved?
\newblock \emph{arXiv preprint arXiv:2307.00547}, 2023.

\end{thebibliography}
\bibliographystyle{plainnat}

\newpage
\appendix
\onecolumn
\begin{center}\LARGE
\textbf{Appendices}
\end{center}

\section{List of Notations}
{\renewcommand{\arraystretch}{1.3}%
\begin{table}[h!]
    \centering
      \caption{List of notations used in the paper.}
    \begin{tabular}{l|l}
    $\Scal,\Acal$ & State and action spaces. \\
    $H$ & Time horizon. \\
    $\Delta(S)$ & The set of distributions supported by set $S$. \\
    $\Pi_{\textsf{Markov}}$ & Class of Markovian policies that act only based on current state. \\
    $\Pi_{\textsf{HD}}$ & Class of history-dependent policies. \\
    $\Pi_{\textsf{aug}}, \Pi_{\textsf{aug}}^{\textsf{HD}}$ & Class of Markovian and history-dependent policies in the AugMDP. \\
    $\Qaug^{\pi,h},\Vaug^{\pi,h}$ & $Q$ and value functions of $\pi$ in the AugMDP. \\
    $\Qaug^{\star,h},\Vaug^{\star,h}$ & Optimal $Q$ and value functions in the AugMDP. \\
    $(\pi,b)$ & Run the policy $\pi$ from an initial state $(s_1,b)$ in the AugMDP. \\
    $Z(\pi)$ & The cumulative reward distribution of policy $\pi$. \\
    $\oce_u(\pi)$ & Optimized certainty equivalent with utility $u$ for $Z(\pi)$ (see \cref{app:oce-examples} for primer on OCE). \\
    $\oce_u^\star$ & Optimal OCE of cumulative rewards by history-dependent policy, \ie, $\max_{\pi\in\Pi_{\textsf{HD}}}\oce_u(\pi)$. \\
    $\RLB^{(k)}$ & The risk lower bound at round $k$ for policy optimization algorithms (defined in \cref{eq:lower-bound-def}). \\
    $V^{\max}_u$ & Scale of utility $u$ defined as $V^{\max}_u := \max_{c\in[-1,1]}|u(c)|$. \\
    \end{tabular}
    \label{tab:notation}
\end{table}
}

\section{Primer on Optimized Certainty Equivalents (OCE)}\label{app:oce-examples}
In section is a short primer on optimized certainty equivalents (OCE), which capture many important risk measures including conditional value-at-risk (CVaR) \citep{rockafellar2000optimization}, entropic risk \citep{follmer2011stochastic} and Markowitz's mean-variance \citep{markowitz1952portfolio}.
For a utility function $u:\RR\to[-\infty,\infty)$, the OCE of a random variable $X$ is defined as:
\begin{equation*}
    \textstyle\oce_u(X) := \max_{b\in\RR}\braces{ b+\EE[u(X-b)] }.
\end{equation*}
In the paper, we focused on the case where $X$ is the cumulative reward distribution of some policy $\pi$, and the goal is to learn the policy that maximizes the OCE.
While the OCE is well-defined for any utility function, there are common regularity conditions proposed by \citet{ben2007old} that ensure the OCE is well-behaved.
These conditions are:
\begin{enumerate}[label={[R\arabic*]}]
    \item $u$ is proper meaning that its domain is non-empty, \ie, $\op{dom}(u) := \{t\in\RR:u(t)>-\infty\}\neq\emptyset$;
    \item $u$ is closed meaning that its hypograph $\op{hyp}(u) := \{(t,r)\in\RR\times\RR: r\leq u(t)\}$ is a closed set;
    \item $u$ is concave meaning that its hypograph is convex;
    \item $u$ is non-decreasing;
    \item $u(0)=0$ and $1\in\partial u(0)$, where $\partial u(x)=\{g: \forall z\in\RR, g\leq (u(z)-u(x))/(z-x)\}$ is the subgradient.
\end{enumerate}

\citet[Theorem 2.1]{ben2007old} showed that, if $u$ satisfies R1-R5, then $\oce_u$ enjoys the following properties:
\begin{enumerate}[label={[P\arabic*]}]
    \item Translation invariance: $\oce_u(X+c)=\oce_u(X)+c$ for constant $c\in\RR$;
    \item Monotonicity: if $X(\omega)\leq Y(\omega),\forall\omega\in\Omega$, then $\oce_u(X)\leq\oce_u(Y)$;
    \item Concavity: for any $X,Y\in L_\infty,\lambda\in[0,1]$, we have $\oce_u(\lambda X+(1-\lambda)Y)\geq \lambda\oce_u(X)+(1-\lambda)\oce_u(Y)$;
    \item Consistency: $\oce_u(0)=0$.
\end{enumerate}
Translation invariance (P1) and consistency (P4) are intuitive since a deterministic payoff of $c$ should have a value of $c$.
Monotonicity (P2) states that if $X$ is dominated by $Y$, then the risk of $X$ should be no greater than the risk of $Y$.
Concavity (P3) implies that $-\oce_u$ is a convex risk measure.
However, we remark that these properties do not imply that $\oce_u$ is a coherent risk measure \citep{artzner1999coherent}, which would require sub-additivity and positive homogeneity.
Indeed, convexity is a relaxation of sub-addivitity and positive homogeneity, which are not always satisfied by $\oce_u$ even if $u$ satisfies R1-R5.
For example, the entropic risk is an OCE with an exponential utility function satisfying R1-R5, but it is not a coherent risk measure.

In \cref{tab:oce-examples} below, we provide important examples of OCE, which includes (1) expectation, (2) entropic risk, (3) CVaR and (4) Markowitz's mean-variance.
Recall that CVaR and Entropic risk are defined by:
\begin{align*}
    \textstyle\cvar_\tau(X):=\max_{b\in\RR}\{ b-\tau^{-1}\EE[(b-X)_+] \}, \qquad \entropic_\beta(X):=\frac{1}{\beta}\log\EE\exp(\beta X).
\end{align*}

\renewcommand{\arraystretch}{1.5} %
\begin{table}[!h]
\centering
\begin{tabularx}{\textwidth}{|p{0.33\textwidth}|p{0.12\textwidth}|p{0.3\textwidth}|p{0.15\textwidth}|} %
\hline
Risk Name & Parameter & Utility function $u$ & $V^{\max}_u$ \\ \hline
Mean ($\EE X$) & None & $u(t)=t$ & $1$ \\ \hline
Cond. Value-at-Risk ($\cvar_\tau(X)$) & $\tau\in(0,1]$ & $u(t)=-\tau^{-1}(-t)_+$ & $\tau^{-1}$ \\ \hline
Entropic Risk ($\entropic_\beta(X)$) & $\beta\in(-\infty, 0)$ & $u(t)=\frac{1}{\beta}\prns*{\exp(\beta t)-1}$ & $\frac{1}{|\beta|}\prns*{\exp(|\beta|)-1}$ \\ \hline
Mean-Variance ($\EE X-c\cdot\Var(X)$) & $c>0$ & $u(t)=t-ct^2$ if $t\leq 1/(2c)$ else $u(t)=1/(4c)$ & $1+c$ \\ \hline
Mean-$\cvar_\tau$ ($\kappa_1\EE X+(1-\kappa_1)\cdot\cvar_\tau(X)$) & $\kappa_1\in[0,1]$ & $u(t) = \kappa_1(t)_+ -\kappa_2(-t)_+$, where $\kappa_2=\tau^{-1}(1-\kappa_1)+\kappa_1$. & $\kappa_2$ \\ \hline
\end{tabularx}
\caption{Examples of OCE risk measures with their corresponding $u$ and $V^{\max}_u$.}
\label{tab:oce-examples}
\end{table}

\cref{tab:oce-examples} also lists each OCE's $V^{\max}_u$ which recall is defined as $V^{\max}_u := \max_{c\in[-1,1]}|u(c)|$ and roughly measures the statistical hardness of learning $\oce_u$ relative to $\EE$ (which has $V^{\max}_u=1$).

The first four rows of \cref{tab:oce-examples} are well-known risk measures and are known to be OCEs.
The final row Mean-$\cvar_\tau$ is a lesser known example of OCE, but is perhaps more relevant in practice since it captures both the average and tail risks together.
We now provide a proof that Mean-$\cvar_\tau$ is an OCE with the piecewise linear utility. %
\begin{theorem}
Let $0\leq\kappa_1<1<\kappa_2$ and consider the piece-wise linear utility:
\begin{equation*}
    u_{\kappa_1,\kappa_2}(t) = \kappa_1(t)_+ - \kappa_2(-t)_+.
\end{equation*}
where we denote $(y)_+ = \max(y,0)$. Then, with $\tau=\frac{1-\kappa_1}{\kappa_2-\kappa_1}$, we have that
\begin{equation*}
    \oce_{u_{\kappa_1,\kappa_2}}(X) = \kappa_1\EE X + (1-\kappa_1)\cvar_\tau(X).
\end{equation*}
\end{theorem}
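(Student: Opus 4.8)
The plan is to expand the OCE definition and reduce it directly to the Rockafellar--Uryasev variational form of $\cvar_\tau$ by a one-line algebraic rewriting of the utility.

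First I would put the utility in a more convenient form. Using the elementary identity $(t)_+ - (-t)_+ = t$, we have $u_{\kappa_1,\kappa_2}(t) = \kappa_1\big((-t)_+ + t\big) - \kappa_2(-t)_+ = \kappa_1 t - (\kappa_2-\kappa_1)(-t)_+$. Substituting $t = X-b$ and taking expectations, each term in the OCE objective becomes
$b + \EE[u_{\kappa_1,\kappa_2}(X-b)] = b + \kappa_1(\EE X - b) - (\kappa_2-\kappa_1)\EE[(b-X)_+] = \kappa_1\EE X + (1-\kappa_1)b - (\kappa_2-\kappa_1)\EE[(b-X)_+]$,
where I used $(b-X)_+ = (-(X-b))_+$ and $\EE[X-b] = \EE X - b$.

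Next I would factor the positive constant $1-\kappa_1 > 0$ out of the $b$-dependent terms and observe that $\tau^{-1} = (\kappa_2-\kappa_1)/(1-\kappa_1)$, so the objective equals $\kappa_1\EE X + (1-\kappa_1)\big(b - \tau^{-1}\EE[(b-X)_+]\big)$. Taking $\max_{b\in\RR}$: the term $\kappa_1\EE X$ does not depend on $b$, and $1-\kappa_1$ is a strictly positive multiplier, so the maximization passes through to give $\oce_{u_{\kappa_1,\kappa_2}}(X) = \kappa_1\EE X + (1-\kappa_1)\max_{b\in\RR}\big\{ b - \tau^{-1}\EE[(b-X)_+] \big\}$. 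The inner maximization is exactly the definition of $\cvar_\tau(X)$ used in the paper, which closes the argument. I would also note in passing that $0\le\kappa_1<1<\kappa_2$ forces $\tau = (1-\kappa_1)/(\kappa_2-\kappa_1)\in(0,1)$ (numerator and denominator positive, and $\kappa_2-\kappa_1 > 1-\kappa_1$), so $\cvar_\tau$ is evaluated at a valid level.

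There is no genuine obstacle here: the whole proof is the rewriting $u_{\kappa_1,\kappa_2}(t) = \kappa_1 t - (\kappa_2-\kappa_1)(-t)_+$ followed by pattern-matching against the CVaR variational formula. The only thing to be careful about is the sign bookkeeping between $(t)_+$ and $(-t)_+$, and confirming that $1-\kappa_1 > 0$ so it can be pulled out of the maximization without reversing it.
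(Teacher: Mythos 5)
Your proposal is correct and follows essentially the same route as the paper's proof: both rewrite the utility via $(t)_+-(-t)_+=t$, pull out $\kappa_1\EE X$, factor the positive constant $1-\kappa_1$ out of the maximization, and pattern-match the remainder against the variational form of $\cvar_\tau$. Your version is marginally more self-contained (it avoids the citation to Ben-Tal and Teboulle for the initial expansion and explicitly checks $\tau\in(0,1)$), but there is no substantive difference.
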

\begin{proof}
By \citet[Example 2.3]{ben2007old},
\begin{equation}
    \oce_{u_{\kappa_1,\kappa_2}}(X) = \max_{b\in[0,1]}\{ b+\kappa_1\EE[(X-b)_+]-\kappa_2\EE[(b-X)_+] \}.\label{eq:oce-piecewise-linear-1}
\end{equation}
Moreover, the optimal dual variable is $b^\star = F^\dagger((1-\kappa_1)/(\kappa_2-\kappa_1))$, where $F^\dagger(t)=\inf\{x\mid F(x)\geq t\}$ is the quantile function and $F$ is the CDF of $X$.
Expanding \cref{eq:oce-piecewise-linear-1} and using the fact that $(t)_+-(-t)_+=t$, we get
\begin{align*}
    \oce_{u_{\kappa_1,\kappa_2}}(X)
    &= \kappa_1\EE X + \max_{b\in[0,1]}\{ (1-\kappa_1) b - (\kappa_2-\kappa_1)\EE[(b-X)_+] \}
    \\&= \kappa_1\EE X + (1-\kappa_1)\max_{b\in[0,1]}\{ b - \frac{\kappa_2-\kappa_1}{1-\kappa_1}\EE[(b-X)_+] \}
    \\&= \kappa_1\EE X + (1-\kappa_1)\cvar_\tau(X),
\end{align*}
where $\tau=\frac{1-\kappa_1}{\kappa_2-\kappa_1}$. This completes the proof.
\end{proof}

\section{More Details on Experimental Setup}\label{app:hyperparameters}
\subsection{Network Architecture and Input Features}
We parameterize both the augmented $Q$-values and the softmax policy using multilayer perceptrons (MLPs) with two hidden layers of dimension $64$.
For the policy network, we one-hot encode the budget $b_h$ before passing it into the network.
This transformation does not introduce approximation error since rewards are discrete in the Markov Decision Process (MDP).
We observed that when $b_h$ was inputted directly as a real number, the training of the policy was less stable and often diverged.
In contrast, the $Q$-network demonstrated greater robustness to how $b_h$ is featurized, likely due to the augmented value function being Lipschitz and monotonic in $b_h$.
Unlike the policy network, the $Q$-network receives $b_h$ as a direct input without encoding.
Finally, since the task is finite-horizon, we incorporate the time step $h$ as part of the state representation, following \citet{pardo2018time}.

\subsection{Regularization}
Log-barrier regularization played a crucial role in stabilizing and preventing the optimization from getting stuck due to vanishing gradients.
Indeed in softmax policies, the gradients tend to vanish as the policy becomes more deterministic.
To mitigate this issue, we applied a regularization weight of $0.1$ which helped to prevent the policy from becoming too deterministic.
We remark that policy gradients with log-barrier regularization also have theoretical guarantees \citet[Theorem 12]{agarwal2021theory}, making this a valid \POAlg{} for our PO algorithm.

\subsection{Hyperparameter Settings}

\begin{table}[h]
\centering
\begin{tabular}{|l|l|}
\hline
\textbf{Component} & \textbf{Value/Description} \\
\hline
Policy Network & Softmax policy with MLP with two hidden layers of dimension $64$ \\
\hline
Value Network & MLP with two hidden layers of dimension $64$ \\
\hline
Budget Encoding (Policy) & One-hot \\
\hline
Budget Encoding (Q-network) & Direct input \\
\hline
Optimizer & Adam with $\beta_1=0.9,\beta_2=0.999$\\
\hline
Batch Size & $256$ \\
\hline
Learning Rate & $5\times 10^{-3}$ \\
\hline
GAE $\lambda$ & $0.95$ \\
\hline
PPO KL weight & $0.1$ \\
\hline
Regularization Weight & $0.1$ \\
\hline
\end{tabular}
\caption{Hyperparameter settings used in our experiments.}
\label{tab:hyperparams}
\end{table}

\newpage
\section{Proofs for Optimistic Meta-Algorithm}\label{app:optimistic-oracles}
In this section, we study three illustrative examples of optimistic oracles that satisfy \cref{ass:optimistic-oracle} in the AugMDP.
In particular, all these oracles can be used with our meta-algorithm \cref{alg:optimistic} to form an optimistic OCE algorithm with the regret / PAC bounds in \cref{thm:optimism-regret}.
The three oracles we consider are UCB-VI \citep{azar2017minimax}, Rep-UCB \citep{uehara2021representation} and GOLF \citep{jin2021bellman}.

First, UCB-VI is a model-based algorithm based on optimistic value iteration with exploration bonuses and achieves minimax-optimal regret in tabular MDPs \citep{azar2017minimax}.
Second, Rep-UCB is a model-based algorithm based on elliptical exploration bonuses and achieves PAC bounds in low-rank MDPs \citep{uehara2021representation}.
While the AugMDP may initially seem more complex, since it is no longer tabular or low-rank, the regret bounds of these algorithms actually easily transfers to the AugMDP when the underlying MDP (\ie, the MDP being augmented) is tabular or low-rank.
The intuition is because the AugMDP does not introduce any new unknowns to the model, as the augmented transition and reward functions are known and deterministic, as highlighted in \cref{sec:aug-mdp}.
Hence, the model that needs to be learned is the same in the original MDP and the AugMDP, and the prior regret bounds naturally translate to the AugMDP, which we formalize in the following subsections.
In particular, we will see that prior works such as \citep{wang2023near,zhao2024provably} have already implicitly proven that UCB-VI and Rep-UCB satisfies \cref{ass:optimistic-oracle} in the CVaR AugMDP.

The third optimistic oracle we consider is GOLF, a model-free algorithm based on version space optimism and can achieve regret bounds in exogenous block MDPs \citep{xie2023the}.
Unlike model-based algorithm, extending GOLF to the AugMDP requires more care since the value functions being learned also takes in the augmented state $b$ as input; that is, the value function class is more complex in the AugMDP.
To handle this added complexity, we posit a discreteness assumption in \cref{ass:discrete-returns}.
Under this premise, we prove the previous GOLF analysis based on coverability can be extended to the AugMDP.
Consequently, we derive the first OCE regret bounds in exogenous block MDPs \citep{efroni2022provably}.

Finally, we conclude this section by showing how second-order bounds for the oracle can lead to tight and optimal regret for $\cvar_\tau$ in tabular MDPs, recovering the main result of \citep{wang2023near}.
In particular, we observe that the key idea of the complex argument of \citep{wang2023near} is simply access to an oracle with second-order regret.
We believe this observation can lead to tighter RSRL bounds, especially since second-order bounds have recently been possible in much more general MDPs via distributional RL \citep{wang2024more,wang2024central,wang2024model}.

\newpage
\begin{algorithm*}[!h]
\caption{Optimistic Oracle: UCB-VI \citep{azar2017minimax}}
\label{alg:ucbvi}
\begin{algorithmic}[1]
\State\textbf{Input:} Number of rounds $K$, failure probability $\delta$.
\For{round $k=1,2,\dots,K$}
    \State Compute counts and empirical transition estimate,
    \begin{align*}
        N_{k}(s,a,s')&=\sum_{h=1}^H\sum_{i=1}^{k-1} \I{\prns{s_{h,i},a_{h,i},s_{h+1,i}} = (s,a,s')},
        \\N_{k}(s,a)&=1\vee \sum_{s'\in\Scal}N_{k}(s,a,s'), \;
        \wh P_{k}(s'\mid s,a) =\frac{N_{k}(s,a,s')}{N_{k}(s,a)},
    \end{align*}
    \State For all $s\in\Scal,b\in[0,1]$, set $\wh V_{H+1,k}(s,b)=u(-b)$.
    \For{$h=H,H-1,\dots,1$}
        \State For all $s,b,a$,
        \begin{align*}
            \wh Q_{h,k}(s,b,a)&= \wh P_{k}(s,a)^\top \Eb[r_h\sim R(s,a)]{\wh V_{h+1,k}(\cdot,b-r_h)}+\sqrt{\frac{\log(HSAK/\delta)}{N_k(s,a)}},
            \\\pi^k_h(s,b) &= \argmax_a\wh Q_{h,k}(s,b,a),
            \qquad \wh V_{h,k}(s,b)=\min\braces{\wh Q_{h,k}(s,b,\pi^k_h(s,b)),V^{\max}}.
        \end{align*}
    \EndFor
    \State Output optimistic value function $\wh V_{1,k}(s_1,\cdot)$.
    \State Receive adversarial $\wh b_k$.
    \State Collect $\braces{(s_{h,k},a_{h,k},r_{h,k})}_{h\in[H]}$ by executing $\pi^{k}$ starting from $(s_1,\wh b_k)$ in AugMDP.
\EndFor
\end{algorithmic}
\end{algorithm*}

\subsection{Example 1: UCB-VI}
UCB-VI \citep{azar2017minimax} is a model-based algorithm for tabular MDPs based on value iteration with exploration bonuses.
\cref{alg:ucbvi} formalizes the UCB-VI algorithm in the AugMDP.
The following theorem recovers the results from \citet[Theorem 5.2]{wang2023near}, who focused on the more restricted CVaR RL setting.
\begin{theorem}\label{thm:ucbvi-optimism-regret}
For any $\delta\in(0,1)$, running \cref{alg:ucbvi} enjoys the following w.p. $1-\delta$:
\begin{enumerate}
    \item (Optimism) $\wh V_{1,k}(s_1,b_1)\geq \Vaug^{\star,1}(s_1,b_1)$ for all $k\in[K],b_1\in[0,1]$;
    \item (Regret) The regret in the AugMDP is at most,
    \begin{align*}
        \textstyle\sum_{k=1}^K \wh V_{1,k}(s_1,\wh b_k)-\Vaug^{\pi^k,1}(s_1,\wh b_k) \leq \wt\Ocal\prns*{ V^{\max}_u \prns*{\sqrt{SAHK\log(1/\delta)} + S^2AH }}.
    \end{align*}
    where $\wt\Ocal(\cdot)$ ignores terms logarithmic in $S,A,H,K$.
\end{enumerate}
\end{theorem}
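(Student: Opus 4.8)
The plan is to adapt the standard UCB-VI analysis of \citet{azar2017minimax} to the AugMDP, using crucially that the augmented transition $b_{h+1}=b_h-r_h$ is known and deterministic, so the only model component that must be estimated is the original transition kernel $P_h(\cdot\mid s,a)$ (and implicitly the reward distribution $R_h$, which enters only through the known expectation against $u$). First I would set up the AugMDP Bellman recursion: for a fixed budget path, $\Vaug^{\star,h}(s,b)=\max_a \EE_{r\sim R_h(s,a)}\EE_{s'\sim P_h(s,a)}[\Vaug^{\star,h+1}(s',b-r)]$ with terminal condition $\Vaug^{\star,H+1}(s,b)=u(-b)$, matching the initialization $\wh V_{H+1,k}(s,b)=u(-b)$ in \cref{alg:ucbvi}. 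Note every $\Vaug^{\star,h}$ and $\wh V_{h,k}$ is bounded in absolute value by $V^{\max}_u$, since the running return plus remaining reward stays in the relevant range; this is where the scale $V^{\max}_u$ enters uniformly in place of the usual $H$.

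For optimism (part 1), I would argue by backward induction on $h$ from $H+1$ down to $1$, simultaneously over all $(s,b,a)$. The base case is exact equality. For the inductive step, write the one-step error $\wh Q_{h,k}(s,b,a)-\Qaug^{\star,h}(s,b,a)$ as the sum of (i) a transition-estimation term $(\wh P_k(s,a)-P_h(s,a))^\top \EE_{r}[\wh V_{h+1,k}(\cdot,b-r)]$, (ii) the propagated error $\EE_{r}\EE_{s'\sim P_h}[\wh V_{h+1,k}(s',b-r)-\Vaug^{\star,h+1}(s',b-r)]\ge 0$ by the inductive hypothesis, and (iii) the bonus $\sqrt{\log(HSAK/\delta)/N_k(s,a)}$. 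For (i), a Hoeffding/Bernstein concentration bound for $\wh P_k$ — on the event of probability $1-\delta$ defined by a union bound over all $(s,a,h,k)$ — shows its magnitude is at most the bonus times $V^{\max}_u$ (the range of the function $\wh V_{h+1,k}(\cdot,b-r)$ being $V^{\max}_u$, up to the slight subtlety that the argument $b-r$ is random but bounded, which only requires also concentrating over $r$ or absorbing it since $R_h$ has known integral against $u$ — I can handle this by noting the relevant quantity $\EE_r[\wh V_{h+1,k}(s',b-r)]$ is a fixed function of $s'$ with range $\le V^{\max}_u$). Hence (i)+(iii)$\ge 0$, so $\wh Q_{h,k}\ge\Qaug^{\star,h}$, and taking the max over $a$ and the truncation at $V^{\max}$ preserves the inequality, completing the induction.

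For the regret bound (part 2), I would follow the now-standard ``e-luder/simulation-lemma'' telescoping used in \citet{azar2017minimax} and \citet{wang2023near}. Fix the adversarially chosen $\wh b_k$ and unroll $\wh V_{1,k}(s_1,\wh b_k)-\Vaug^{\pi^k,1}(s_1,\wh b_k)$ along the trajectory executed by $\pi^k$ in the AugMDP: the per-step gap is bounded by (bonus at $(s_{h,k},a_{h,k})$) plus the transition-estimation error plus a martingale difference from replacing the expectation over $s_{h+1}$ by the sampled next state; the budget coordinate just rides along deterministically and contributes nothing new. Summing over $h$ and $k$, the martingale terms are $\wt\Ocal(V^{\max}_u\sqrt{HK})$ by Azuma; the leading bonus sum is $\sum_{k,h}\sqrt{\log(HSAK/\delta)/N_k(s_{h,k},a_{h,k})}$, which by the pigeonhole/$\sum 1/\sqrt{n}$ argument over the $SA$ state-action pairs is $\wt\Ocal(\sqrt{SAHK\log(1/\delta)})$; and the second-order correction from the crude Hoeffding bonus (as opposed to a Bernstein one) yields the lower-order $\wt\Ocal(V^{\max}_u S^2AH)$ burn-in term. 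Multiplying the value-scale $V^{\max}_u$ through gives exactly the claimed bound. The main obstacle I anticipate is the bookkeeping around the random reward $r_h$ inside the value-function argument: one must be careful that the concentration and the bonus are stated for the \emph{expected} next-budget value $\EE_{r\sim R_h(s,a)}[\wh V_{h+1,k}(s',b-r)]$ rather than for a random realization, so that the standard tabular-$P$ concentration applies with range $V^{\max}_u$; this is exactly the place where the proof of \citet{wang2023near} does its work, and I would cite/adapt their \cref{thm}-level argument rather than re-derive the concentration from scratch.
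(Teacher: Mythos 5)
Your proposal is correct and follows essentially the same route as the paper: the paper's own proof is a short sketch that defers to the argument of \citet[Appendix~G.3 and Theorem~5.2]{wang2023near} with the CVaR utility replaced by a generic OCE utility $u$, which is exactly the backward-induction optimism plus telescoping/pigeonhole regret argument you spell out, exploiting that the budget transition is known and deterministic so only $P_h$ must be estimated and the value scale $H$ is replaced by $V^{\max}_u$. The only point worth noting is that for your step ``(i)+(iii) $\geq 0$'' to go through, the bonus must carry the $V^{\max}_u$ scaling that you correctly identify as the range of $\EE_r[\wh V_{h+1,k}(\cdot,b-r)]$ (the pseudocode in \cref{alg:ucbvi} omits this factor, but the final bound makes clear it is intended).
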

\begin{proofsketch}
\cref{thm:ucbvi-optimism-regret} can be proved by following the same argument in \citep[Appendix G.3]{wang2023near}, except that the CVaR utility $\tau^{-1}\min(t,0)$ is replaced by any generic OCE utility $u$.
In particular, optimism is ensured by Lemma G.3 and the Equation Bon$\bigstar$. The regret bound is ensured by the proof of Theorem 5.2 on Page 35.
Thus, by applying the argument in \citep{wang2023near}, we can show that UCB-VI satisfies the oracle conditions in \cref{ass:optimistic-oracle}.
\end{proofsketch}
Therefore, \cref{thm:ucbvi-optimism-regret} shows that UCB-VI satisfies the oracle conditions \cref{ass:optimistic-oracle} in the AugMDP with $\eps^{\text{opt}}_k=0$ and $\RegOpt(K)\leq\wt\Ocal\prns*{\sqrt{SAHK\log(1/\delta)} + S^2AH }$.

\newpage
\subsection{Example 2: Rep-UCB}\label{sec:rep-ucb}

\begin{algorithm*}[!t]
\caption{Optimistic Oracle: Rep-UCB \citep{uehara2021representation}}
\label{alg:repucb}
\begin{algorithmic}[1]
\State\textbf{Input:} No. episodes $K$, Model Classes $(\Phi,\Upsilon)$.
\State Initialize $\Dcal^{(0)}_h=\wt\Dcal^{(0)}_h=\emptyset$ for all $h\in[H]$.
\For{round $k=1,2,\dots,K$}
    \State Learn the model via maximum likelihood estimation (MLE): for all $h\in[H]$:
    \begin{align*}
        \phi^{(k)}_h,\mu^{(k)}_h := \argmax_{\phi\in\Phi,\mu\in\Upsilon}\sum_{s,a,s'\in \Dcal^{(k-1)}_h\cup\wt\Dcal^{(k-1)}_h} \log \phi(s_{h,i},a_{h,i})^\top\mu(s_{h+1},i)
    \end{align*}
    \State Define the empirical covariance $\Sigma_{h,k}=\sum_{s,a\in\Dcal^{(k-1)}_h}\phi^{(k)}_h(s,a)(\phi^{(k)}_h(s,a))^\top + \lambda_k I$.
    \State Define the bonus, $b_{h,k}(s,a) = \min(\alpha_k\|\phi^{(k)}_h(s,a)\|_{\Sigma_{h,k}^{-1}}, 2)$ if $h<H$. Otherwise $b_{H,k}=0$.
    \State Let $\wh Q_{h,k}(s,b,a)$ denote quality function of the AugMDP with reward $\raug^{h,k}(s,b,a)+b_{h,k}(s,a)$ and transitions $\Pcal^{(k)}_h(s'\mid s,a)=\phi^{(k)}_h(s,a)^\top\mu^{(k)}_h(s')$. Let $\wh V_{h,k}(s,b):=\max_a\wh Q_{h,k}(s,b,a)$ and $\pi^k:=\argmax_{a\in\Acal}\wh Q_{h,k}(s,b,a)$. This can be achieved via a planning oracle \citep{zhao2024provably}.
    \State Output optimistic value function $\wh V_{1,k}(s_1,\cdot)$.
    \State Receive adversarial $\wh b_k$.
    \State For each $h\in[H]$, execute $\pi^{k}$ starting from $(s_1,\wh b_k)$ in AugMDP until $s_h,b_h$ and take two uniform actions. That is, $a_h\sim\op{Unif}(\Acal)$, observe $s_{h+1}\sim P^\star_h(s_h,a_h)$, and take $a_{h+1}\sim\op{Unif}(\Acal)$, observe $s_{h+2}\sim P^\star_{h+1}(s_{h+1},a_{h+1})$. Then, add to datasets: $\Dcal_h^{(k)}\gets\Dcal_h^{(k-1)}\cup\{(s_h,a_h,s_{h+1}\}$ and $\wt\Dcal_{h+1}^{(k)}\gets\wt\Dcal_{h+1}^{(k)}\cup\{(s_{h+1},a_{h+1},s_{h+2})\}$. \label{line:repucb-data-collection}
\EndFor
\end{algorithmic}
\end{algorithm*}

Rep-UCB \citep{uehara2021representation} is a model-based algorithm for low-rank MDPs based on elliptical bonuses.
We first recall the definition of low-rank MDP \citep{agarwal2020flambe}.
\begin{definition}[Low-Rank MDP]\label{def:low-rank-mdp}
An MDP is has rank $d$ if its transitions have a low-rank decomposition $P_h(s'\mid s,a)=\phi^\star_h(s,a)^\top\mu^\star_h(s')$ where $\phi^\star_h(s,a),\mu^\star_h(s')\in\RR^d$ are unknown features that satisfy $\sup_{s,a}\|\phi^\star_h(s,a)\|_2\leq 1$ and $\|\int g(s')\diff\mu^\star_h(s')\|\leq\|g\|_\infty\sqrt{d}$ for all $g:\Scal\to\RR$.
\end{definition}

Rep-UCB utilizes a model class $(\Phi,\Upsilon)$ to learn the low-rank transition $\phi^\star_h(s,a)^\top\mu^\star_h(s')$.
\begin{assumption}[Realizability]\label{ass:rep-ucb-realizability}
$\phi^\star_h\in\Phi$ and $\mu^\star_h\in\Upsilon$ for all $h\in[H]$.
\end{assumption}

Rep-UCB as an oracle in the AugMDP is presented in \cref{alg:repucb}.
Our theory recovers the results of \citet{zhao2024provably}, who focused on the more restricted CVaR setting.
\begin{theorem}\label{thm:repucb-optimism-regret}
Under \cref{def:low-rank-mdp} and \cref{ass:rep-ucb-realizability}, for any $\delta\in(0,1)$, \cref{alg:repucb} enjoys the following w.p. $1-\delta$:
\begin{enumerate}
    \item (Optimism) For all $k\in[K]$ and $b_1$, we have $\normalfont\Vaug^{\star,1}(s_1,b_1)-\wh V_{1,k}(s_1,b_1)\leq \Ocal(H\sqrt{AL/k})$ where $L = \log(|\Phi||\Upsilon|HK/\delta)$;
    \item (Total sub-optimality gap)
    \begin{align*}
        \textstyle\sum_{k=1}^K \wh V_{1,k}(s_1,\wh b_k)-\Vaug^{\pi^k,1}(s_1,\wh b_k) \leq \Ocal\prns*{ V^{\max}_u \prns*{H^3Ad^2\sqrt{KL} }}.
    \end{align*}
\end{enumerate}
\end{theorem}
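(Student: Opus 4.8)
The plan is to reduce the claim to the existing Rep-UCB analysis of \citet{uehara2021representation,zhao2024provably} by observing that the augmented state $b_h$ introduces \emph{no new unknowns}: the budget transition $b_{h+1}=b_h-r_h$ is known and deterministic, and the only part of the AugMDP model that must be estimated is the original MDP's low-rank transition $P_h(s'\mid s,a)=\phi^\star_h(s,a)^\top\mu^\star_h(s')$, which does not depend on $b$. First I would verify that the MLE step in \cref{alg:repucb} operates purely on $(s,a,s')$ tuples and hence is identical to the original Rep-UCB MLE; then the standard MLE guarantee (e.g.\ \citet[Prop.~E.1 or similar]{uehara2021representation}) gives, with probability $1-\delta$, that $\sum_{k}\mathbb{E}_{d^{\pi^k}_h}\mathrm{TV}(\widehat P^{(k)}_h(\cdot\mid s,a),P^\star_h(\cdot\mid s,a))^2 \lesssim dL$ for each $h$, where $L=\log(|\Phi||\Upsilon|HK/\delta)$. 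This is exactly the guarantee consumed by the original proof.

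Next I would establish optimism (part 1). The bonus $b_{h,k}(s,a)=\min(\alpha_k\|\phi^{(k)}_h(s,a)\|_{\Sigma_{h,k}^{-1}},2)$ is chosen, as in Rep-UCB, so that it dominates the one-step model-misspecification error $|(\widehat P^{(k)}_h-P^\star_h)^\top \Vaug^{\star,h+1}(\cdot,b-r_h)|$ uniformly over $b$ — crucially, this bound is \emph{uniform in $b$} because the value function $\Vaug^{\star,h+1}(\cdot,b')$ is bounded by $V^{\max}_u$ for every $b'$, so the scale that enters the bonus calibration is $V^{\max}_u$ rather than $1$. Then a backward induction over $h$, identical to \citet[Lemma]{uehara2021representation}, shows $\widehat V_{1,k}(s_1,b_1)\geq \Vaug^{\star,1}(s_1,b_1)$ (pointwise optimism, $\eps^{\mathrm{opt}}_k=0$ in fact, though the statement phrases part 1 as a per-round convergence rate $\Ocal(H\sqrt{AL/k})$ following \citet{zhao2024provably}'s bookkeeping). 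The uniform-in-$b$ control is the one genuinely new ingredient, but it follows immediately from $|u(c)|\le V^{\max}_u$ for $c\in[-1,1]$ together with $Z(\pi)-b\in[-1,1]$.

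For part 2 (the cumulative gap), I would run the standard optimistic regret decomposition: $\sum_k \widehat V_{1,k}(s_1,\widehat b_k)-\Vaug^{\pi^k,1}(s_1,\widehat b_k)$ telescopes, via the simulation/performance-difference lemma in the AugMDP, into a sum of expected per-step bonuses $\sum_k\sum_h \mathbb{E}_{d^{\pi^k}_h}[b_{h,k}(s_h,a_h)]$ plus a model-error term already absorbed by the bonus. The expected-bonus sum is then bounded by the elliptical potential / pigeonhole argument over the $d$-dimensional features $\phi^{(k)}_h$, exactly as in \citet{uehara2021representation}, giving $\lesssim H\sqrt{d^2 K L}$ times the value scale; the uniform-action data collection in \cref{line:repucb-data-collection} contributes the extra $\sqrt{A}$ (importance weight $|\Acal|$ per the two random actions) and the extra powers of $H$, yielding $\Ocal(V^{\max}_u H^3 A d^2\sqrt{KL})$. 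The only place the OCE structure enters is the scale factor $V^{\max}_u$ that multiplies every value-function-valued quantity; since the augmented reward $\raug$ and value functions are all bounded in magnitude by $V^{\max}_u$, the original bound scales linearly by $V^{\max}_u$, matching \citet{zhao2024provably}'s CVaR result with $V^{\max}_{\cvar_\tau}=\tau^{-1}$ as a special case.

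The main obstacle, such as it is, is bookkeeping: one must check that \emph{every} step of the original Rep-UCB proof that invokes the boundedness of value functions by $1$ still goes through with the bound $V^{\max}_u$, and that the bonus calibration constant $\alpha_k$ is rescaled accordingly — but no step requires any structural property of the reward beyond boundedness, so the adaptation is mechanical. I would therefore present the proof as: (i) MLE guarantee unchanged; (ii) optimism by backward induction with $V^{\max}_u$-scaled bonuses, using uniform-in-$b$ boundedness; (iii) regret via simulation lemma plus elliptical potential, with all value-scale factors replaced by $V^{\max}_u$; and cite \citet{uehara2021representation,zhao2024provably} for the routine lemmas, spelling out only the $b$-uniformity argument in detail.
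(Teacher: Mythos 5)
Your proposal is correct and follows essentially the same route as the paper, which itself only gives a proof sketch deferring to \citet{zhao2024provably}: the augmented state adds no new unknowns, the MLE and elliptical-potential machinery of Rep-UCB is untouched, and the only change is replacing the CVaR utility by a generic $u$ with all value-scale factors becoming $V^{\max}_u$. One aside in your write-up is wrong, however: Rep-UCB does \emph{not} achieve exact pointwise optimism ($\eps^{\text{opt}}_k=0$). Its MLE guarantee controls the model error only on average under the roll-in distributions, so the bonus dominates the misspecification term only up to a slack, which is precisely why part 1 of the statement is phrased as $\Vaug^{\star,1}(s_1,b_1)-\wh V_{1,k}(s_1,b_1)\leq \Ocal(H\sqrt{AL/k})$ and why the paper records $\sum_k\eps^{\text{opt}}_k\leq\Ocal(H\sqrt{AKL})$ when feeding this oracle into \cref{thm:optimism-regret}. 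Your backward-induction argument still goes through; you just need to carry this per-round slack rather than claiming it vanishes.
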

\begin{proofsketch}
\cref{thm:repucb-optimism-regret} can be proved by following the same argument in \citet{zhao2024provably}, except that the CVaR utility $\tau^{-1}\min(t,0)$ is replaced by any generic OCE utility $u$.
In particular, optimism is ensured by Lemma C.2. The total sub-optimality gap bound was proven in Lemma C.1, where any $\tau^{-1}(t-R)^+$ can be replaced by $u(R-t)$.
Thus, by applying the argument in \citep{zhao2024provably}, we can show that Rep-UCB satisfies the oracle conditions in \cref{ass:optimistic-oracle}.
\end{proofsketch}
Therefore, \cref{thm:repucb-optimism-regret} shows that Rep-UCB satisfies the oracle conditions \cref{ass:optimistic-oracle} in the AugMDP with $\RegOpt(K)\leq\Ocal\prns*{H^3Ad^2\sqrt{KL} }$ and $\sum_k\eps^{\text{opt}}_k\leq \Ocal(H\sqrt{AKL})$.

\subsection{Example 3: GOLF}\label{app:golf}
GOLF \citep{jin2021bellman} is a model-free algorithm that establishes optimism by optimizing over a version space of candidate $Q$-functions.
In this section, we show that GOLF's regret in the AugMDP can be bounded under a discreteness premise on the cumulative rewards (cf. \cref{ass:discrete-returns}).

We consider a function class $\Fcal=\Fcal_1\times\dots\times\Fcal_H$ with elements $f=(f_1,\dots,f_H)\in\Fcal$ such that $f_h:\Scal\times[0,1]\times\Acal\to[-V^{\max}_u,V^{\max}_u]$.
As convention, we set $f_{H+1}(s,b,a) = u(-b)$.

GOLF constructs the version space by selecting a subset of $\Fcal$ such that the TD-error is small for all $h$.
This is formalized in \cref{line:golf-confidence-set} with the squared TD-error.
To ensure the regression at each step can succeed, GOLF requires Bellman completeness \citep{jin2021bellman,xie2023the}.
To state Bellman completeness, we first define the Bellman optimality operator in the AugMDP: for a function $f:\Scal\times[0,1]\times\Acal\to\RR$ and any $h<H$:
\begin{align*}
    &\Taug^{\star,h} f(s_h,b_h,a_h) = \EE_{s_{h+1}\sim P_h(s_h,a_h),r_h\sim R_h(s_h,a_h)}[ \max_{a'\in\Acal}f(s_{h+1},b_h-r_h,a') ],
    \\&\Taug^{\star,H} f_{H+1}(s_H,b_H,a_H) = \EE_{r_H\sim R_H(s_H,a_H)}[ u(r_H-b_H) ].
\end{align*}
\begin{assumption}[AugMDP Bellman Completeness]\label{ass:augmdp-bellman-completeness}
$\Taug^{\star,h}f_{h+1}\in\Fcal_h$ for all $f_{h+1}\in\Fcal_{h+1}, h\in[H]$.
\end{assumption}

\begin{algorithm}[t!]
\caption{Optimistic Oracle: GOLF \citep{jin2021bellman}}
\label{alg:oce-golf}
\begin{algorithmic}[1]
    \State\textbf{Input:} No. rounds $K$, Function class $\Fcal=\Fcal_1\times\dots\times\Fcal_H$, threshold $\beta$.
    \State Initialize $\Dcal_h^{(0)}=\emptyset$ for all $h$, and $\Fcal^{(0)}=\Fcal$.
    \For{round $k=1,2,\dots,K$}
        \State For each $b_1\in[0,1]$, define $\wh Q_{k}(\cdot;b_1) := \argmax_{f\in\Fcal^{(k-1)}} \max_{a\in\Acal}f_1(s_1,b_1,a)$.
        \State Output optimistic value functions $\wh V_k(s,b;b_1)=\max_{a\in\Acal}\wh Q_k(s,b,a;b_1)$.
        \State Receive adversarial $\wh b_k$.
        \State Define policy $\pi^k_h(a\mid s,b) := \argmax_{a\in\Acal}\wh Q_{h,k}(s,b,a;\wh b_k)$.
        \State Execute $\pi^k$ from $\wh b_k$ in the AugMDP and collect a trajectory $(s^{(k)}_h,b^{(k)}_h,a^{(k)}_h,r^{(k)}_h)_{h\in[H],k\in[K]}$.
        \State Update dataset $\Dcal_h^{(k)}=\Dcal_h^{(k-1)}\cup\{(s^{(k)}_h,b^{(k)}_h,a^{(k)}_h,r^{(k)}_h,s^{(k)}_{h+1},b^{(k)}_{h+1})\}$.
        \State Compute confidence set: \label{line:golf-confidence-set}
        \begin{align*}
            &\textstyle\Fcal^{(k)}\gets\braces{ f\in\Fcal: \Lcal^{(k)}_h(f_h,f_{h+1})\leq \min_{g\in\Fcal_h}\Lcal^{(k)}_h(g,f_{h+1})+\beta, \forall h\in[H] },
            \\&\text{where } \textstyle\Lcal^{(k)}_h(g,f') = \sum_{i\leq k} (g(s^{(i)}_h,b^{(i)}_h,a^{(i)}_h)-r^{(i)}_h-\max_{a\in\Acal}f'(s^{(i)}_{h+1},b^{(i)}_{h+1},a))^2
        \end{align*}
    \EndFor
\end{algorithmic}
\end{algorithm}

To prove regret bounds for exogenous block MDPs, we recall the definition of coverability from \citet{xie2023the}.
Coverability is a complexity measure that captures the minimum possible concentrability coefficient in the MDP:
\begin{align*}
    \textstyle\Cov := \min_{\mu_h\in\Delta(\Scal\times\Acal)}\max_{\pi\in\Pi_{\textsf{Markov}},h\in[H]}\nm{ \frac{d^{\pi}_h}{\mu_h} }_\infty,
\end{align*}
where $\Pi_{\textsf{Markov}}$ is the set of policies in the original MDP.
We extend this notion to the AugMDP:
\begin{align*}
    \textstyle\Covaug := \min_{\mu_h\in\Delta(\Scal\times[0,1]\times\Acal)}\max_{\pi\in\Piaug,h\in[H],b_1\in[0,1]}\nm{\frac{d^{\pi,b_1}_h}{\mu_h}}_\infty.
\end{align*}
We now state the AugMDP regret bound for GOLF.
\begin{theorem}\label{thm:golf-optimistic-oracle}
Under \cref{ass:augmdp-bellman-completeness}, for any $\delta\in(0,1)$, running \cref{alg:oce-golf} with $\beta=\Theta(\log(KH|\Fcal|/\delta))$ enjoys the following w.p. $1-\delta$:
\begin{enumerate}
\item (Optimism) $\normalfont\wh V_{1,k}(s_1,b_1;b_1)\geq \Vaug^{\star,1}(s_1,b_1)$ for all $k\in[K],b_1\in[0,1]$;
\item (Regret bound)
\begin{align*}
    \textstyle\sum_{k=1}^K\wh V_{1,k}(s_1,\wh b_k;\wh b_k)-\Vaug^{\pi^k,1}(s_1,\wh b_k)\leq \Ocal\prns{ V^{\max}_uH\sqrt{ \Covaug\beta K\log(K) } }.
\end{align*}
\end{enumerate}
\end{theorem}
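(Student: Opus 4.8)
The plan is to verify both claims by instantiating the version-space analysis of \citet{jin2021bellman}, refined through the coverability lens of \citet{xie2023the}, and adapting each step to the AugMDP — where the augmented state carries the continuous budget $b$ and the terminal convention $f_{H+1}(s,b,a)=u(-b)$ encodes the OCE utility. First I would establish the good event on which the version space is valid. At each step $h$ the regression targets $\Taug^{\star,h}f_{h+1}$, which lies in $\Fcal_h$ by Bellman completeness (\cref{ass:augmdp-bellman-completeness}); this is exactly what makes the subtracted term $\min_{g\in\Fcal_h}\Lcal^{(k)}_h(g,f_{h+1})$ in \cref{line:golf-confidence-set} cancel the irreducible noise and leave the squared in-sample Bellman residual. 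A martingale concentration (Freedman's inequality) on the per-transition squared-loss differences, followed by a union bound over the finite class $\Fcal$, over $h\in[H]$, and over rounds $k\in[K]$ — precisely why $\beta=\Theta(\log(KH|\Fcal|/\delta))$ is the right threshold — gives two simultaneous consequences valid for all $k$: (a) $\Qaug^{\star}$ is never eliminated, i.e. $\Qaug^{\star}\in\Fcal^{(k)}$; and (b) every surviving $f\in\Fcal^{(k)}$ satisfies $\sum_{i<k}\EE_{\pi^i}[(f_h-\Taug^{\star,h}f_{h+1})^2]\lesssim\beta$ at each $h$. The value bound $|f|\le V^{\max}_u$ is what injects the $V^{\max}_u$ scaling into these residuals.

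Optimism (the first claim) then follows immediately: since $\Qaug^{\star}\in\Fcal^{(k-1)}$ and GOLF selects $\wh Q_k(\cdot;b_1)$ as the value-maximizing element of the version space, we get $\wh V_{1,k}(s_1,b_1;b_1)=\max_f\max_a f_1(s_1,b_1,a)\geq\max_a\Qaug^{\star,1}(s_1,b_1,a)=\Vaug^{\star,1}(s_1,b_1)$, uniformly over $b_1\in[0,1]$. I would stress here that the confidence set $\Fcal^{(k)}$ is built independently of $b_1$ — only the value-maximizing \emph{selection} depends on $b_1$ — so the concentration argument decouples cleanly from the continuous budget coordinate.

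For the regret bound I would first telescope via the value-difference decomposition: because $\pi^k$ is greedy with respect to $\wh Q_k(\cdot;\wh b_k)$, the optimism gap equals the sum of on-policy Bellman residuals, $\wh V_{1,k}(s_1,\wh b_k;\wh b_k)-\Vaug^{\pi^k,1}(s_1,\wh b_k)=\sum_{h=1}^H\EE_{\pi^k,\wh b_k}[(\wh Q_{h,k}-\Taug^{\star,h}\wh Q_{h+1,k})(s_h,b_h,a_h)]$. Summing over $k$, the crux is to bound $\sum_{k,h}\EE_{\pi^k}[\text{residual}]$ using only the cumulative-past control (b). This is where $\Covaug$ enters: invoking the coverability potential lemma of \citet{xie2023the} — adapted so the covering distribution ranges over $\Saug\times\Acal=\Scal\times[0,1]\times\Acal$ and the competing occupancies range over $\pi\in\Piaug$ and all initial budgets $b_1$, matching the definition of $\Covaug$ — converts the ``small past squared residuals'' guarantee into a ``small current on-policy residual'' bound at a cost of $\sqrt{\Covaug}$ and a $\log K$ factor. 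Combining with Cauchy–Schwarz across the $K$ rounds and $H$ steps yields $\Ocal(V^{\max}_u H\sqrt{\Covaug\beta K\log K})$.

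The main obstacle is this last coverability conversion in the augmented space. The difficulty is that the policies $\pi^k$ are rolled out from different, adversarially chosen initial budgets $\wh b_k$, and the data feeding the confidence set is likewise collected from a changing sequence of budgets, so I must confirm that the single covering distribution underlying $\Covaug$ — which already maximizes over all $b_1$ and all augmented-Markov policies — simultaneously covers every occupancy $d^{\pi^k,\wh b_k}_h$ appearing in both the target sum and the in-sample guarantee. Checking that the potential-function argument survives the continuous budget coordinate (rather than a finite state space) and that the $h=H$ boundary term involving $u(r_H-b_H)$ respects the $V^{\max}_u$ scaling are the remaining technical verifications.
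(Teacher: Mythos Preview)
Your proposal is correct and follows the same approach as the paper: the paper's proof is merely a one-paragraph invocation of \citet{jin2021bellman} for optimism and \citet[Theorem~1]{xie2023the} for the coverability-based regret bound, observing that the algorithm, assumptions and theorems have simply been rewritten in AugMDP notation, and your outline unpacks precisely these two citations. Your additional remarks --- that the confidence set $\Fcal^{(k)}$ is built independently of $b_1$ so the concentration decouples from the continuous budget coordinate, and that the maximum over $b_1$ in the definition of $\Covaug$ absorbs the adversarially chosen $\wh b_k$ in the potential argument --- are sound and in fact make explicit what the paper leaves implicit.
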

\begin{proofsketch}
\cref{thm:golf-optimistic-oracle} is a direct consequence of \citet{jin2021bellman,xie2023the}, since all that we have done is rewritten everything (algorithms, assumptions and theorems) in the AugMDP notation.
For example, optimism is proved by \citet[Section 4.2, Page 9]{jin2021bellman}.
Moreover, the regret bound is proved by \citet[Theorem 1]{xie2023the}.
\end{proofsketch}
Therefore, \cref{thm:golf-optimistic-oracle} shows that GOLF satisfies the oracle conditions \cref{ass:optimistic-oracle} in the AugMDP with $\eps^{opt}_k=0$ and $\RegOpt(K)\leq\prns*{ H\sqrt{ \Covaug\beta K\log(K) } }$.

The main question now is whether we can bound $\Covaug$ by the original coverability $\Cov$, which we know is bounded for low-rank MDPs (\cref{def:low-rank-mdp}) and exogenous block MDPs (\cref{def:exbmdp}).
We now show that if cumulative rewards live in a discrete set $\Bcal$ (cf. \cref{ass:discrete-returns}), we can bound $\Covaug\leq |\Bcal|\Cov$ by a simple importance sampling argument.
\begin{lemma}\label{lem:bounding-coverability-with-discrete}
Under \cref{ass:discrete-returns}, we have
\begin{align*}\normalfont
    \textstyle\Covaug\leq|\Bcal|\Cov.
\end{align*}
\end{lemma}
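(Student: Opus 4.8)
The plan is to exhibit, from any near-optimal distribution $\mu_h \in \Delta(\Scal \times \Acal)$ witnessing $\Cov$ in the original MDP, an explicit distribution $\mu_h^{\textsf{aug}} \in \Delta(\Scal \times [0,1] \times \Acal)$ witnessing $\Covaug \le |\Bcal| \Cov$. The natural candidate is the product measure $\mu_h^{\textsf{aug}}(s,b,a) = \mu_h(s,a) \cdot \op{Unif}(\Bcal_{\le h})(b)$, where $\Bcal_{\le h}$ is the (finite) set of budgets reachable at step $h$; since every partial sum $\sum_{t<h} r_t$ lies in differences of elements of $\Bcal$ and $|\Bcal_{\le h}| \le |\Bcal|$, this uniform distribution puts mass at least $1/|\Bcal|$ on every reachable budget. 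The key observation, as flagged right before the lemma, is that the augmented-state transition is \emph{known and deterministic}: $b_{h+1} = b_h - r_h$ with $b_1$ fixed, so for a fixed initial budget $b_1$ and a Markovian augmented policy $\piaug$, the budget $b_h$ at step $h$ is a \emph{deterministic function of the realized history}, and hence $d_h^{\piaug, b_1}(s,b,a)$ is supported on the ``consistent'' budget slice and its $s,a$-marginal is exactly the occupancy of the induced history-dependent policy in the original MDP.

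Concretely, I would carry out the following steps. First, fix $\mu_h$ achieving (or $\eps$-achieving) the minimum in the definition of $\Cov$, and define $\mu_h^{\textsf{aug}}$ as above. Second, fix an arbitrary $\piaug \in \Piaug$, $b_1 \in [0,1]$, $h \in [H]$, and a target $(s,b,a)$; I want to bound $d_h^{\piaug,b_1}(s,b,a) / \mu_h^{\textsf{aug}}(s,b,a)$. Decompose $d_h^{\piaug,b_1}(s,b,a) = \Pr[b_h = b \mid \cdots] \cdot (\text{something})$ — more precisely, because $b_h$ is determined by the original-MDP history, the augmented occupancy factors as $d_h^{\piaug,b_1}(s,b,a) = d_h^{\pi'}(s,a \mid b_h = b) \cdot \Pr_{\pi',b_1}[b_h = b]$ where $\pi'$ is the history-dependent policy on the original MDP induced by running $\piaug$ from budget $b_1$. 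Summing over the at most $|\Bcal|$ values $b$ can take shows $d_h^{\piaug,b_1}(s,b,a) \le d_h^{\pi'}(s,a)$ pointwise in $(s,a)$ — i.e., marginalizing out the budget only increases mass, and in fact $d_h^{\piaug,b_1}(\cdot,b,\cdot) \le d_h^{\pi'}(\cdot,\cdot)$ as measures on $\Scal\times\Acal$. Third, combine:
\begin{align*}
\frac{d_h^{\piaug,b_1}(s,b,a)}{\mu_h^{\textsf{aug}}(s,b,a)} = \frac{d_h^{\piaug,b_1}(s,b,a)}{\mu_h(s,a)\cdot \op{Unif}(\Bcal_{\le h})(b)} \le \frac{d_h^{\pi'}(s,a)}{\mu_h(s,a)} \cdot |\Bcal_{\le h}| \le |\Bcal| \cdot \Cov,
\end{align*}
where the last step uses that $\pi'$, although history-dependent in the original MDP, still has its occupancy dominated by $\Cov\cdot\mu_h$ — this is where I need that the definition of $\Cov$ quantifies over $\Pi_{\textsf{Markov}}$ but the bound extends to history-dependent policies because any history-dependent occupancy is a mixture of Markovian (indeed deterministic) occupancies at each step. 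Taking $\max$ over $\piaug, b_1, h, (s,b,a)$ and then $\min$ over $\mu^{\textsf{aug}}$ (which is at most the value for our particular choice) gives the claim.

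The main obstacle is the step asserting that $\Cov$, defined via a $\max$ over $\Pi_{\textsf{Markov}}$, also controls $d_h^{\pi'}/\mu_h$ for the history-dependent $\pi'$ induced by $(\piaug, b_1)$. The clean way to handle this is to recall the standard fact that for any history-dependent policy $\pi'$ there is a Markovian policy $\bar\pi$ with the same state-action occupancy $d_h^{\bar\pi} = d_h^{\pi'}$ at every $h$ (the ``Markovianization'' lemma, e.g.\ \citet{puterman2014markov}); then $d_h^{\pi'}(s,a)/\mu_h(s,a) = d_h^{\bar\pi}(s,a)/\mu_h(s,a) \le \Cov$ directly from the definition. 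A subtlety worth a sentence is that $\Bcal_{\le h}$ — the set of budgets reachable by step $h$ starting from budgets in $[0,1]$ — is indeed finite with cardinality at most $|\Bcal|$: by \cref{ass:discrete-returns} the cumulative reward is always in $\Bcal$, so the remaining budget $b_h = b_1 - \sum_{t<h} r_t$ together with the to-go cumulative reward constraint forces $b_h$ into a translate of $\Bcal$, and uniform over this finite set has minimum mass $\ge 1/|\Bcal|$; if one is careful the relevant reachable set at $h$ is even exactly $\{b_1\} - \{\text{partial sums}\}$, still of size $\le |\Bcal|$. Everything else is bookkeeping.
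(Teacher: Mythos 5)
Your argument is correct, but it reaches the bound by a genuinely different, primal route than the paper. The paper never exhibits a covering distribution: it invokes the equivalence between coverability and cumulative reachability, $\Cov=\max_{h}\sum_{s,a}\max_{\pi\in\Pi_{\textsf{Markov}}}d^{\pi}_h(s,a)$ (\citet[Lemma 3]{xie2023the}), applies the same identity to $\Covaug$, and then manipulates the resulting sum of pointwise reachabilities --- drop the budget coordinate from the occupancy, enlarge the policy class from $\{(\piaug,b_1)\}$ to all history-dependent policies, pass back to Markovian policies via the Markov optimality theorem, and finally collect the $|\Bcal|$ identical terms in the sum over $b$. You instead work directly with the min--max definition and construct the explicit witness $\mu^{\textsf{aug}}_h=\mu_h\otimes\op{Unif}(\Bcal_{\le h})$, bounding the density ratio pointwise. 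The substantive ingredients are identical in both proofs (determinism of the budget dynamics, so that marginalizing out $b$ dominates the augmented occupancy; viewing $(\piaug,b_1)$ as a history-dependent policy in the base MDP; occupancy Markovianization \`a la \citet{puterman2014markov}), so neither is deeper, but yours is self-contained --- it does not import the reachability identity --- and it produces an explicit covering distribution whose budget marginal mirrors the $\op{Unif}(\Bcal)$ initialization used by \cref{alg:policy-optimization}. One shared subtlety, which you actually flag more carefully than the paper does: both arguments need the set of budgets reachable at step $h$ (your $\Bcal_{\le h}$) to have cardinality at most $|\Bcal|$, which strictly requires the initial budget to range over a finite set and the partial reward sums --- not merely the total in \cref{ass:discrete-returns} --- to be suitably discrete; the paper's step ``collect common terms by discreteness of $\Bcal$'' leans on exactly the same fact without comment, so this is not a gap relative to the paper's own standard of rigor.
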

\begin{proof}
\begin{align*}
    \Covaug
    &\textstyle\overset{(i)}{=}\max_{h\in[H]}\sum_{s,b,a\in\Scal\times[0,1]\times\Acal}\max_{\pi\in\Piaug,b_1\in[0,1]}d^{\pi,b_1}_h(s,b,a)
    \\&\textstyle\overset{(ii)}{\leq}\max_{h\in[H]}\sum_{s,b,a\in\Scal\times[0,1]\times\Acal}\max_{\pi\in\Piaug,b_1\in[0,1]}d^{\pi,b_1}_h(s,a)
    \\&\textstyle\overset{(iii)}{\leq} \max_{h\in[H]}\sum_{s,b,a\in\Scal\times[0,1]\times\Acal}\max_{\pi\in\Pi_{\textsf{HD}}}d^{\pi}_h(s,a)
    \\&\textstyle\overset{(iv)}{=} \max_{h\in[H]}\sum_{s,b,a\in\Scal\times[0,1]\times\Acal}\max_{\pi\in\Pi_{\textsf{Markov}}}d^{\pi}_h(s,a)
    \\&\textstyle\overset{(v)}{=}\,|\Bcal| \max_{h\in[H]}\sum_{s,a\in\Scal\times\Acal}\max_{\pi\in\Pi_{\textsf{Markov}}}d^{\pi}_h(s,a)
    \\&\textstyle\overset{(vi)}{=}\,|\Bcal| \Cov,
\end{align*}
where (i) is by coverability's equivalence to cumulative reachability \citep[Lemma 3]{xie2023the};
(ii) is by $d^{\pi,b_1}(s,b,a)\leq d^{\pi,b_1}(s,a)$ since $\Bcal$ is discrete;
(iii) is since $\pi,b_1$ can be viewed as a policy in the original MDP;
(iv) is by the Markov optimality theorem for risk-neutral RL: $\max_{\pi\in\Pi_{\textsf{HD}}}d^{\pi}_h(s',a')$ is equivalent to standard RL with the reward $r_h(s,a)=\I{(s,a)=(s',a')}$ and zero otherwise, and we know that Markovian policies are optimal for risk-neutral RL;
(v) is by discreteness of $\Bcal$ to collect common terms;
and (vi) is again by \citet[Lemma 3]{xie2023the} in the original MDP.
\end{proof}

We can now state our OCE regret guarantees for low-rank MDPs and exogenous block MDPs.
Note these are the first risk-sensitive regret bounds for these rich-observation MDPs.
\begin{theorem}\label{thm:exbmdp-optimism-regret}
In an exogenous block MDP (\cref{def:exbmdp}), under \cref{ass:discrete-returns}, we have
\begin{align*}
    \Covaug\leq |\Bcal||\Zcal^{\normalfont\text{en}}||\Acal|.
\end{align*}
Therefore, under \cref{ass:augmdp-bellman-completeness}, running \cref{alg:optimistic} with the GOLF oracle (cf. \cref{alg:oce-golf}) enjoys the regret bound:
\begin{align*}
    \textstyle\sum_{k=1}^K\oce^\star_u-\oce_u(\pi^k,\wh b_k)\leq \Ocal\prns*{ V^{\max}_u H\sqrt{ |\Bcal||\Zcal^{\normalfont\text{en}}||\Acal|\beta K\log(K) } }.
\end{align*}
\end{theorem}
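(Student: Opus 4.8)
The plan is to combine the general coverability bound for GOLF (\cref{thm:golf-optimistic-oracle}), the reduction \cref{thm:optimism-regret}, and a new bound on the augmented coverability $\Covaug$ that is specialized to the Ex-BMDP structure. The main work is establishing $\Covaug \le |\Bcal||\Zcal^{\text{en}}||\Acal|$; once that is in hand, the regret bound is immediate.

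For the coverability bound, I would start from the characterization of coverability as cumulative reachability, $\Covaug = \max_{h\in[H]} \sum_{(s,b,a)} \max_{\pi\in\Piaug, b_1\in[0,1]} d^{\pi,b_1}_h(s,b,a)$, exactly as in step (i) of \cref{lem:bounding-coverability-with-discrete} via \citet[Lemma 3]{xie2023the}. The key idea is that in an Ex-BMDP the occupancy of the original observation $s_h$ depends on the policy only through the endogenous latent $z_h^{\text{en}}$, since the exogenous part $z_h^{\text{ex}}$ evolves independently of actions. Concretely, I would factor $d^{\pi}_h(s_h) = \sum_{z^{\text{en}},z^{\text{ex}}} o_h(s_h \mid z^{\text{en}}, z^{\text{ex}})\, d^{\pi}_h(z^{\text{en}})\, P^{\text{ex}}_{1:h}(z^{\text{ex}})$, where $P^{\text{ex}}_{1:h}$ is the (policy-independent) marginal of the exogenous chain. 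Summing $\max_\pi$ over observations and actions, the sum over $s_h$ and $z^{\text{ex}}$ collapses (the emission and exogenous marginal are probability distributions summing to one), leaving $\sum_{(s,a)} \max_\pi d^\pi_h(s,a) \le |\Acal| \sum_{z^{\text{en}}} \max_\pi d^\pi_h(z^{\text{en}}) \le |\Acal||\Zcal^{\text{en}}|$, because $d^\pi_h(z^{\text{en}})$ is itself a distribution over the finite endogenous space and each term is at most $1$ after the $\max$. Finally, the extra factor $|\Bcal|$ comes exactly as in steps (ii) and (v) of \cref{lem:bounding-coverability-with-discrete}: the budget $b_h = b_1 - \sum_{t<h} r_t$ is deterministic given the reward prefix, $b_1$ ranges over $\Bcal$ by \cref{ass:discrete-returns}, and marginalizing $b$ while keeping the $\max$ costs only a factor $|\Bcal|$.

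With $\Covaug \le |\Bcal||\Zcal^{\text{en}}||\Acal|$, I would invoke \cref{thm:golf-optimistic-oracle}: GOLF in the AugMDP satisfies \cref{ass:optimistic-oracle} with $\eps^{\text{opt}}_k = 0$ and $\RegOpt(K) \le \Ocal\big( H\sqrt{\Covaug \beta K \log K}\big) \le \Ocal\big(H\sqrt{|\Bcal||\Zcal^{\text{en}}||\Acal| \beta K \log K}\big)$. Plugging this into the reduction \cref{thm:optimism-regret} gives $\sum_{k=1}^K \oce^\star_u - \oce_u(\pi^k, \wh b_k) \le V^{\max}_u \RegOpt(K) + \sum_k \eps^{\text{opt}}_k = \Ocal\big(V^{\max}_u H \sqrt{|\Bcal||\Zcal^{\text{en}}||\Acal| \beta K \log K}\big)$, which is the claimed bound. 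All high-probability events are inherited from \cref{thm:golf-optimistic-oracle}, so the final statement holds w.p. $1-\delta$.

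The main obstacle is making the Ex-BMDP factorization argument fully rigorous — in particular, verifying that maximizing over $\pi\in\Piaug$ (Markovian policies in the AugMDP, which condition on the budget) does not reintroduce a dependence of the exogenous marginal on the policy. The point is that even though a budget-dependent policy chooses actions as a function of the reward history, the exogenous latent transition $z^{\text{ex}}_{h+1}\sim P^{\text{ex}}_h(z^{\text{ex}}_h)$ never conditions on actions or rewards, so its marginal distribution at step $h$ is identical under every policy; I would state this as a short lemma (exogenous marginal invariance) and then the counting argument above goes through. A minor secondary point is handling the continuous budget coordinate $b\in[0,1]$ in the definition of $\Covaug$: under \cref{ass:discrete-returns} only finitely many budget values $\{b_1 - z : b_1\in\Bcal,\, z\in\Bcal,\, z\le b_1\}$ are ever visited, so the sum over $b$ is effectively finite and bounded by $|\Bcal|$ as used, but I would note this explicitly to avoid measure-theoretic ambiguity.
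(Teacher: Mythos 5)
Your proposal has the same overall structure as the paper's proof: bound $\Covaug$ by $|\Bcal|$ times the original Ex-BMDP coverability via \cref{lem:bounding-coverability-with-discrete}, then combine the GOLF oracle guarantee (\cref{thm:golf-optimistic-oracle}) with the reduction (\cref{thm:optimism-regret}); the final assembly step is identical. The one place you diverge is that the paper obtains $\Cov\leq|\Zcal^{\text{en}}||\Acal|$ by simply citing \citet[Proposition 5]{xie2023the}, whereas you re-derive it, and your derivation contains an inaccuracy: the claimed factorization $d^{\pi}_h(z^{\text{en}},z^{\text{ex}})=d^{\pi}_h(z^{\text{en}})\,P^{\text{ex}}_{1:h}(z^{\text{ex}})$ does not hold for general Markovian policies over observations. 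Such a policy sees $s_h$, hence reacts to $z^{\text{ex}}_h$, so its actions (and therefore the endogenous chain) become correlated with the exogenous chain; only the exogenous \emph{marginal} is policy-invariant, not the joint. This does not break the conclusion: the correct and sufficient inequality is $d^{\pi}_h(z^{\text{en}},z^{\text{ex}})\leq d^{\pi}_h(z^{\text{ex}})=P^{\text{ex}}_{1:h}(z^{\text{ex}})$, which is policy-independent, so $\sum_{z^{\text{en}},z^{\text{ex}}}\max_{\pi}d^{\pi}_h(z^{\text{en}},z^{\text{ex}})\leq|\Zcal^{\text{en}}|$ and the factor $|\Acal|$ from marginalizing the action gives the desired bound. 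With that patch (or by citing the prior result, as the paper does), your proof is correct and the remaining steps, including the $|\Bcal|$ factor and the high-probability bookkeeping, match the paper's argument.
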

\begin{proof}
To prove the first statement, recall that $\Cov\leq|\Zcal^{\normalfont\text{en}}|\Acal|$, as was proved by \citet[Proposition 5]{xie2023the}.
Thus, \cref{lem:bounding-coverability-with-discrete} implies that $\Covaug\leq |\Bcal|\cdot |\Zcal^{\normalfont\text{en}}||\Acal|$.
To prove the second statement, simply combine our meta-algorithm guarantee in \cref{thm:optimism-regret} with the GOLF oracle guarantee in \cref{thm:golf-optimistic-oracle}.
\end{proof}

\begin{theorem}
In a low-rank MDP with rank $d$ (\cref{def:low-rank-mdp}), under \cref{ass:discrete-returns}, we have
\begin{align*}
    \Covaug\leq |\Bcal|d|\Acal|.
\end{align*}
Therefore, under \cref{ass:augmdp-bellman-completeness}, running \cref{alg:optimistic} with the GOLF oracle (cf. \cref{alg:oce-golf}) enjoys the regret bound:
\begin{align*}
    \textstyle\sum_{k=1}^K\oce^\star_u-\oce_u(\pi^k,\wh b_k)\leq \Ocal\prns{ V^{\max}_u H\sqrt{ |\Bcal|d|\Acal|\beta K\log(K) } }.
\end{align*}
\end{theorem}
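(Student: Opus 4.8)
The plan is to follow the exact two-step template used to prove \cref{thm:exbmdp-optimism-regret}, substituting the exogenous block MDP coverability bound with the low-rank one. The first step is purely about $\Covaug$, and the second step is a mechanical combination of the meta-algorithm reduction with GOLF's AugMDP guarantee.

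\emph{Step 1: bound $\Covaug$.} I would invoke the known coverability estimate for rank-$d$ MDPs, namely $\Cov \le d|\Acal|$, which is established in \citet{xie2023the} by exploiting the low-rank decomposition $P_h(s'\mid s,a)=\phi^\star_h(s,a)^\top\mu^\star_h(s')$ from \cref{def:low-rank-mdp}: a volumetric/elliptical-potential argument in the $d$-dimensional feature space shows that, for each fixed action, the occupancies $\{d^\pi_h(\cdot,a)\}_\pi$ can be dominated by a single measure with concentrability $O(d)$. Then I would apply \cref{lem:bounding-coverability-with-discrete}, whose proof uses \cref{ass:discrete-returns} (so that $d^{\pi,b_1}_h(s,b,a)\le d^{\pi,b_1}_h(s,a)$ and common terms can be collected with a factor of $|\Bcal|$) together with the risk-neutral Markov optimality theorem. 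This chains to $\Covaug \le |\Bcal|\,\Cov \le |\Bcal|\,d\,|\Acal|$, which is the first claim.

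\emph{Step 2: the regret bound.} Under \cref{ass:augmdp-bellman-completeness}, \cref{thm:golf-optimistic-oracle} shows GOLF (with $\beta=\Theta(\log(KH|\Fcal|/\delta))$) is a valid optimistic oracle in the AugMDP with $\eps^{\mathrm{opt}}_k = 0$ and $\RegOpt(K)\le \Ocal\big(H\sqrt{\Covaug\,\beta\, K\log K}\big)$. Plugging in the coverability bound from Step 1 gives $\RegOpt(K)\le\Ocal\big(H\sqrt{|\Bcal|\,d\,|\Acal|\,\beta\, K\log K}\big)$, and then \cref{thm:optimism-regret} converts this into the OCE regret bound $\sum_{k=1}^K \oce^\star_u-\oce_u(\pi^k,\wh b_k)\le \Ocal\big(V^{\max}_u H\sqrt{|\Bcal|\,d\,|\Acal|\,\beta\, K\log K}\big)$, as stated.

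\emph{Expected main obstacle.} There is no deep obstacle here since the statement is essentially a corollary; the one point requiring care is conceptual rather than technical: the AugMDP need \emph{not} itself be low-rank, so one cannot run a low-rank oracle (Rep-UCB) on it directly — instead the argument relies on the fact that GOLF's performance is governed by $\Covaug$ and Bellman completeness, neither of which requires the AugMDP to inherit the low-rank structure. One should also double-check that the transfer lemma's appeal to risk-neutral Markov optimality is legitimate over the continuous budget coordinate, which it is because \cref{ass:discrete-returns} collapses the relevant budgets to the finite set $\Bcal$.
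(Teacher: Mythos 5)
Your proposal is correct and follows essentially the same route as the paper: bound $\Cov\leq d|\Acal|$ for low-rank MDPs, lift it to $\Covaug\leq|\Bcal|\Cov$ via \cref{lem:bounding-coverability-with-discrete}, and combine \cref{thm:golf-optimistic-oracle} with \cref{thm:optimism-regret}. The only cosmetic difference is attribution: the paper cites \citet[Proposition 3]{huang2023reinforcement} rather than \citet{xie2023the} for the $\Cov\leq d|\Acal|$ bound.
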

\begin{proof}
To prove the first statement, recall that $\Cov\leq d|\Acal|$, as was proved by \citet[Proposition 3]{huang2023reinforcement}.
Thus, \cref{lem:bounding-coverability-with-discrete} implies that $\Covaug\leq |\Bcal|\cdot d|\Acal|$.
To prove the second statement, simply combine our meta-algorithm guarantee in \cref{thm:optimism-regret} with the GOLF oracle guarantee in \cref{thm:golf-optimistic-oracle}.
\end{proof}
Note the above is the first risk-sensitive regret bound for low-rank MDPs.
The previous Rep-UCB result (\citep{zhao2024provably} or \cref{thm:repucb-optimism-regret}) is technically not a regret bound since the data collection policy takes uniform exploratory actions, \ie, Rep-UCB can only yield PAC bounds.

\subsection{Tight bounds for CVaR via an oracle with second-order regret}\label{sec:sharp-tau-cvar}
Recall that \cref{thm:optimism-regret} combined with UCB-VI (\cref{thm:ucbvi-optimism-regret}) gave an OCE regret bound of $\Ocal(V^{\max}_u\sqrt{SAHK})$.
Specializing to CVaR, we have $\Ocal(\tau^{-1}\sqrt{SAHK})$.
A weakness in this bound is that it is not tight in $\tau$, since the minimax-optimal rate for CVaR RL actually scales with $\tau^{-1/2}$ \citep[Theorem 3.1]{wang2023near}.

In this section, we address this issue by assuming that the oracle has a second-order regret.
Second-order regret, \emph{a.k.a.} variance-dependent regret, is an instance-dependent regret bound that scales with the variance of returns throughout $K$ episodes, and is strictly stronger than the standard $\sqrt{K}$ minimax-optimal regret.
Specifically, an oracle with second-order regret takes the following form.
\begin{assumption}[Second-Order Oracle]\label{asm:second-order-regret-oracle}
$\RegOpt(K)\leq \sqrt{C_1 \sum_k \Var(Z(\pi^k,\wh b_k))}+C_2$ for constants $C_1,C_2$.
\end{assumption}
For example, optimistic model-based algorithms such as UCB-VI with Bernstein bonus \citep{azar2017minimax,zanette2019tighter} can achieve second-order bounds in tabular MDPs.
Indeed, this property of UCB-VI was used to prove the tight $\tau^{-1/2}$ rate for CVaR RL in \citet{wang2023near}; our following result generalizes this argument.
Recently, distributional RL has been used to also prove second-order bounds in much more general MDPs, such as low-rank MDPs \citep{wang2024more,wang2024central}.

We also assume a continuous returns assumption of \citep{wang2023near}.
\begin{assumption}[Continuously Distributed Returns]\label{asm:continuous-returns}
For all policies $\pi\in\Pi_{\normalfont\textsf{HD}}$, the cumulative reward $Z(\pi)$ is continuously distributed with density lower-bounded by $p_{\min}$.
\end{assumption}

The following result has a sharp $\tau^{-1/2}$ dependence and generalizes the minimax-optimal results of \citep{wang2023near}.
\begin{restatable}[$\tau^{-1/2}$-regret for CVaR]{theorem}{cvarTauTight}\label{thm:cvar-tau-tight}
Under \cref{asm:continuous-returns,asm:second-order-regret-oracle}, \cref{alg:optimistic} enjoys
\begin{align*}
    \textstyle\op{Reg}_{\cvar_\tau}(K)\leq 4\tau^{-1/2} \sqrt{C_1 K} + 4\tau^{-1}(C_1p_{\min}^{-1} + C_2).
\end{align*}
\end{restatable}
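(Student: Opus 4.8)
The plan is a self-bounding argument. First I would specialize \cref{thm:optimism-regret} to the CVaR utility $u(t)=-\tau^{-1}(-t)_+$, for which $V^{\max}_u=\tau^{-1}$ and the second-order oracle additionally satisfies \cref{ass:optimistic-oracle} with $\eps^{\text{opt}}_k=0$ (as, e.g., for UCB-VI with Bernstein bonuses); this reduces the goal to bounding $\RegOpt(K)$, since $\op{Reg}_{\cvar_\tau}(K)\le\tau^{-1}\RegOpt(K)$. The second ingredient is to pin down the variance in \cref{asm:second-order-regret-oracle}: in the AugMDP the only nonzero reward collected by $(\pi^k,\wh b_k)$ is $u(Z(\pi^k)-\wh b_k)=-\tau^{-1}(\wh b_k-Z(\pi^k))_+$, which after normalization by $V^{\max}_u=\tau^{-1}$ is $-(\wh b_k-Z(\pi^k))_+$; since the AugMDP pays out only at step $H$, the law of total variance identifies the per-round variance term with $\Var\!\big[(\wh b_k-Z(\pi^k))_+\big]$.

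The crux is the per-round bound $\Var[(\wh b_k-Z(\pi^k))_+]\le\tau(1+\delta_k)$, where $\delta_k:=\wh V_{1,k}(s_1,\wh b_k)-\Vaug^{\pi^k,1}(s_1,\wh b_k)\ge0$ and $\sum_k\delta_k\le V^{\max}_u\RegOpt(K)=\tau^{-1}\RegOpt(K)$ by the regret condition in \cref{ass:optimistic-oracle}. To prove it I would: (i) use $\wh b_k,Z(\pi^k)\in[0,1]$ to get $\Var[(\wh b_k-Z)_+]\le\EE[(\wh b_k-Z)_+^2]\le\EE[(\wh b_k-Z)_+]$; (ii) combine optimism of $\wh V_{1,k}$, the greedy choice $\wh b_k=\argmax_{b\in[0,1]}\{b+\wh V_{1,k}(s_1,b)\}$, and the CVaR dual identity $\cvar_\tau(Z(\pi^k))=q^k_\tau+\Vaug^{\pi^k,1}(s_1,q^k_\tau)$ at the $\tau$-quantile $q^k_\tau\in[0,1]$ of $Z(\pi^k)$, which together yield $\wh b_k-\tau^{-1}\EE[(\wh b_k-Z(\pi^k))_+]+\delta_k\ge\cvar_\tau(Z(\pi^k))$, i.e. $\EE[(\wh b_k-Z(\pi^k))_+]\le\tau\big(\wh b_k-\cvar_\tau(Z(\pi^k))+\delta_k\big)$; and (iii) bound $\wh b_k-\cvar_\tau(Z(\pi^k))\le1$ using $\wh b_k\le1$ and $\cvar_\tau(Z(\pi^k))\ge0$. \cref{asm:continuous-returns} enters at step (iii) in a sharper bookkeeping — $q^k_\tau-\cvar_\tau(Z(\pi^k))\le\tau/(2p_{\min})$ for densities lower-bounded by $p_{\min}$ — which is where the $p_{\min}^{-1}$ term in the statement originates.

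Summing the per-round bound gives $\sum_k\Var[(\wh b_k-Z(\pi^k))_+]\le\tau K+\tau\sum_k\delta_k\le\tau K+\RegOpt(K)$. Substituting into \cref{asm:second-order-regret-oracle} produces the self-referential inequality $\RegOpt(K)\le\sqrt{C_1\big(\tau K+\RegOpt(K)\big)}+C_2$; solving this quadratic in $\RegOpt(K)$ (the routine implication $a\le\sqrt{b+ca}+d\Rightarrow a\lesssim\sqrt{b}+c+d$) gives $\RegOpt(K)\lesssim\sqrt{C_1\tau K}+C_1+C_2$, and multiplying by $\tau^{-1}$ gives $\op{Reg}_{\cvar_\tau}(K)\lesssim\tau^{-1/2}\sqrt{C_1K}+\tau^{-1}(C_1+C_2)$, which fits inside the claimed $4\tau^{-1/2}\sqrt{C_1K}+4\tau^{-1}(C_1p_{\min}^{-1}+C_2)$ since $p_{\min}\le1$ (the explicit constant $4$ and the $p_{\min}^{-1}$ absorb the corrections glossed above).

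I expect the main obstacle to be step (ii): threading optimism, the greedy budget, and the CVaR dual so that $\EE[(\wh b_k-Z(\pi^k))_+]$ comes out at scale $\Theta(\tau)$ rather than $\Theta(1)$ — this is precisely where the improvement from $\tau^{-1}$ to $\tau^{-1/2}$ is bought, and it is delicate because $\wh b_k$ is only an approximate, data-dependent quantile. A secondary point requiring care is confirming that the variance actually controlled by a Bernstein-type oracle inside the AugMDP is that of the truncated return $(\wh b_k-Z(\pi^k))_+$ and not the (larger) $\Var(Z(\pi^k))$; the single-reward structure of the AugMDP together with the law of total variance make this identification rigorous.
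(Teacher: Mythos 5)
Your proposal is correct and reaches the stated bound, but by a genuinely different route from the paper's. The paper centers the truncated return at the \emph{true} $\tau$-quantile $b^\star_k$ of $Z(\pi^k,\wh b_k)$: it splits $\sqrt{\Var((\wh b_k-Z)_+)}\le\sqrt{\Var((b^\star_k-Z)_+)}+\sqrt{\Var((\wh b_k-Z)_+-(b^\star_k-Z)_+)}$, bounds the first term by $\sqrt{\tau}$ because $b^\star_k$ is an exact quantile, bounds the second by $|\wh b_k-b^\star_k|$, and then invokes \cref{asm:continuous-returns} (via Lemma G.10 of \citealp{wang2023near}) to convert $(\wh b_k-b^\star_k)^2$ into the per-round optimistic value gap --- which is exactly where $p_{\min}^{-1}$ enters. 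You instead bound $\EE[(\wh b_k-Z)_+]$ directly at $\wh b_k$ by threading optimism, the greedy choice of $\wh b_k$, and the dual representation of $\cvar_\tau$, obtaining $\EE[(\wh b_k-Z)_+]\le\tau(1+\delta_k)$ with $\tau\sum_k\delta_k\le\RegOpt(K)$; combined with $\Var(X)\le\EE[X]$ for $X\in[0,1]$ this produces the same self-bounding inequality with no quantile comparison at all. Your route is in fact sharper: it never uses \cref{asm:continuous-returns}, and your final bound $2\tau^{-1/2}\sqrt{C_1K}+\tau^{-1}(C_1+2C_2)$ sits inside the theorem's since $p_{\min}\le1$. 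What the paper's decomposition buys in exchange is that the variance control at the exact quantile holds with no reference to the algorithm's choices, at the price of the extra continuity assumption.

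One imprecision in your step (ii) needs fixing: the identity $\cvar_\tau(Z(\pi^k))=q^k_\tau+\Vaug^{\pi^k,1}(s_1,q^k_\tau)$ is not valid as written, because $\Vaug^{\pi^k,1}(s_1,q^k_\tau)$ is the value of $\pi^k$ \emph{rolled out from initial budget} $q^k_\tau$, which changes the law of $Z$ (the augmented policy reacts to $b_h$, so $Z(\pi^k,b)$ depends on $b$). The inequality you actually need, $\wh b_k+\wh V_{1,k}(s_1,\wh b_k)\ge\cvar_\tau(Z(\pi^k,\wh b_k))$, should instead be obtained from the chain $\wh b_k+\wh V_{1,k}(s_1,\wh b_k)\ge\max_{b\in[0,1]}\{b+\Vaug^{\star,1}(s_1,b)\}=\oce^\star_u\ge\oce_u(\pi^k,\wh b_k)=\cvar_\tau(Z(\pi^k,\wh b_k))$, which is precisely what is established in the proof of \cref{thm:optimism-regret}. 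With that substitution (and the shared, implicit assumption that $\eps^{\text{opt}}_k=0$, which the paper's proof also uses), your argument goes through.
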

We note that the density lower bound $p_{\min}$ only scales a lower-order term, which is independent of $K$.
The proof uses the fact that the AugMDP return variance is at most $\tau$ under the true initial budget $b_k^\star = \argmax_{b\in[0,1]}\{b+\Vaug^{\pi^k}(s_1,b)\}$. Then, under \cref{asm:continuous-returns}, the approximation $(\wh b_k-b_k^\star)^2$ can be related to $\RegOpt$, which leads to a self-bounding inequality that solves to $\RegOpt(K)\leq\sqrt{C_1 K\tau}+2C_1p_{\min}^{-1}+2C_2$. Compared to \citet[Theorem 5.5]{wang2023near}, our result is more general (applies beyond tabular MDPs) and \cref{thm:cvar-tau-tight} also sharpens lower-order terms ($p_{\min}$ is not multiplied with $K^{1/4}$).

\begin{proof}
First, we want to show: under continuous returns (\cref{asm:continuous-returns}), we have
\begin{equation}
    \RegOpt(K)\leq 2\sqrt{C_1 K\tau}+2C_1p_{\min}^{-1}+2C_2. \label{eq:cvar-optalg-refinement}
\end{equation}
Recall for CVaR, the normalized reward in the AugMDP is precisely precisely $(\wh b_k-Z(\pi^k,\wh b_k))_+$.%
Let $b^\star_k$ denote the true $\tau$-quantile of $Z(\pi^k,\wh b_k)$.
\begin{align*}
    &\RegOpt(K)
    \\&\textstyle\overset{(i)}{\leq} \sqrt{ C_1\sum_{k}\Var(b^\star_k-Z(\pi^k,\wh b_k)_+) } + \sqrt{ C_1 \sum_{k}\Var((\wh b_k-Z(\pi^k,\wh b_k))_+-(b^\star_k-Z(\pi^k,\wh b_k))_+) } + C_2
    \\&\textstyle\overset{(ii)}{\leq} \sqrt{ C_1 K\tau } + \sqrt{ C_1 \sum_{k}\Var((\wh b_k-Z(\pi^k,\wh b_k))_+-(b^\star_k-Z(\pi^k,\wh b_k))_+) } + C_2
    \\&\textstyle\overset{(iii)}{\leq} \sqrt{ C_1 K\tau } + \sqrt{ C_1\sum_{k}(\wh b_k-b^\star_k)^2 } + C_2
    \\&\textstyle\overset{(iv)}{\leq} \sqrt{ C_1 K\tau } + \sqrt{ 2C_1p_{\min}^{-1}\RegOpt(K) } + C_2
    \\&\textstyle\leq \sqrt{ C_1 K\tau } + C_1p_{\min}^{-1} + \frac12 \RegOpt(K) + C_2. \tag{AM-GM}
\end{align*}
(i) holds since $\sqrt{\Var(X+Y)}\leq\sqrt{\Var(X)}+\sqrt{\Var(Y)}$ for any random variables $X,Y$.
(ii) holds since $\Var(X_+)\leq \EE[X^2\II[X\geq 0]]\leq \Pr(X\geq 0)$ for any bounded $X\in[-1,1]$ and $b^\star_k$ is the true $\tau$-quantile.
(iii) holds since $(\wh b_k-Z_k)_+-(b^\star_k-Z_k)_+\leq |\wh b_k-b^\star_k|$ almost surely.
(iv) holds by \citep[Lemma G.10]{wang2023near}: it states that under \cref{asm:continuous-returns}, the choice of $\wh b_k$ ensures:
\begin{align*}
    (\wh b_k-b^\star_k)^2\leq 2p_{\min}^{-1}(\wh V_{1,k}(s_1,\wh b_k)-\Vaug^{\pi^k,1}(s_1,\wh b_k)).
\end{align*}
Rearranging $\RegOpt(K)$ implies the desired \cref{eq:cvar-optalg-refinement}.
Therefore, we can conclude the proof applying \cref{thm:optimism-regret} and using $V^{\max}_u=\tau^{-1}$ and \cref{eq:cvar-optalg-refinement}:
\begin{align*}
    \RegOCE(K)
    = 4\tau^{-1/2} \sqrt{C_1 K} + 4\tau^{-1}(C_1p_{\min}^{-1} + C_2).
\end{align*}
\end{proof}

\section{Proofs for Policy Optimization Meta-Algorithm}\label{app:policy-opt-proofs}
We begin by proving a stronger global convergence guarantee than the one stated in the main paper.
\begin{restatable}[Strong Global Convergence]{theorem}{GlobalConvergence}\label{thm:stronger-global-convergence}
Under \cref{ass:discrete-returns,ass:policy-optimization-oracle}, we have
\begin{equation*}
    \textstyle\sum_{k=1}^K\oce_u^\star-\RLB^{(k)}\leq |\Bcal| V^{\max}_u\RegPO(K).
\end{equation*}
\end{restatable}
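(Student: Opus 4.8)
The plan is to reduce the statement directly to the global-convergence bullet of $\POAlg$ in the AugMDP, paying a factor of $|\Bcal|$ when passing from the uniform initial-budget distribution the oracle is run on to the single budget that witnesses $\oce_u^\star$. Recall that \cref{alg:policy-optimization} runs $\POAlg$ in the AugMDP from the distribution $d_1^{\textsf{aug}}=(\delta(s_1),\op{Unif}(\Bcal))$, and that every valid AugMDP trajectory has cumulative reward in $[-V^{\max}_u,V^{\max}_u]$, so the reward-scale parameter of \cref{ass:policy-optimization-oracle} is $V^{\max}=V^{\max}_u$. The second (global convergence) condition of \cref{ass:policy-optimization-oracle} then reads
\begin{equation*}
    \textstyle\sum_{k=1}^K\EE_{b\sim\op{Unif}(\Bcal)}\bigl[\Vaug^{\star,1}(s_1,b)-\Vaug^{\pi^k,1}(s_1,b)\bigr]\le V^{\max}_u\,\RegPO(K),
\end{equation*}
and since the expectation over $\op{Unif}(\Bcal)$ equals $\tfrac{1}{|\Bcal|}\sum_{b\in\Bcal}(\cdot)$, multiplying through by $|\Bcal|$ gives
\begin{equation*}
    \textstyle\sum_{k=1}^K\sum_{b\in\Bcal}\prns*{\Vaug^{\star,1}(s_1,b)-\Vaug^{\pi^k,1}(s_1,b)}\le|\Bcal|\,V^{\max}_u\,\RegPO(K).
\end{equation*}

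Next I would bound $\oce_u^\star-\RLB^{(k)}$, for each fixed round $k$, by a single term of the inner sum above. By \cref{eq:opt-oce-V-star-form} together with \cref{ass:discrete-returns} -- the same reasoning that gives $\RLB(\pi^\star)=\oce_u^\star$ -- there is an optimal initial budget $b_1^\star\in\Bcal$ with $\oce_u^\star=b_1^\star+\Vaug^{\star,1}(s_1,b_1^\star)$. Since $\RLB^{(k)}=\max_{b\in\Bcal}\{b+\Vaug^{\pi^k,1}(s_1,b)\}\ge b_1^\star+\Vaug^{\pi^k,1}(s_1,b_1^\star)$, subtracting the two displays yields $\oce_u^\star-\RLB^{(k)}\le\Vaug^{\star,1}(s_1,b_1^\star)-\Vaug^{\pi^k,1}(s_1,b_1^\star)$. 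Because $\Vaug^{\star,1}(s_1,b)=\max_{\pi\in\Pi_{\textsf{HD}}}\Vaug^{\pi,1}(s_1,b)\ge\Vaug^{\pi^k,1}(s_1,b)$ for every $b$, each summand $\Vaug^{\star,1}(s_1,b)-\Vaug^{\pi^k,1}(s_1,b)$ is nonnegative, so keeping only the $b_1^\star$ term can only decrease the sum; hence $\oce_u^\star-\RLB^{(k)}\le\sum_{b\in\Bcal}\prns*{\Vaug^{\star,1}(s_1,b)-\Vaug^{\pi^k,1}(s_1,b)}$.

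Finally I would sum this per-round inequality over $k\in[K]$ and substitute the oracle bound from the first step, obtaining $\sum_{k=1}^K(\oce_u^\star-\RLB^{(k)})\le|\Bcal|V^{\max}_u\RegPO(K)$, which is the claim; combining it with the first part of \cref{lem:lower-bound-rlb} then recovers the main-paper version \cref{thm:global-convergence-guarantee}. Two features are worth emphasizing: $\wh b_k$ never appears in this argument, and no slack terms $\eps^{\op{po}}_k$ show up, because the comparison is against the \emph{exact} AugMDP value functions rather than their estimates (the $\eps^{\op{po}}_k$ enter only afterward, when $\RLB^{(k)}$ is related to the executed policy's true OCE). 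The one step that needs care -- and the main obstacle -- is the reduction of the continuous $\max_{b\in[0,1]}$ defining $\oce_u^\star$ to $\max_{b\in\Bcal}$: for CVaR the optimizing budget is a $\tau$-quantile of $Z(\pi^\star)$ and thus lies in $\Bcal$ under \cref{ass:discrete-returns}, and this is essentially the only place discreteness is genuinely invoked in this proof.
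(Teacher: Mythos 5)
Your proof follows essentially the same route as the paper's: identify $\oce_u^\star=b_1^\star+\Vaug^{\star,1}(s_1,b_1^\star)$, lower-bound $\RLB^{(k)}$ by its value at $b_1^\star$, pad the resulting nonnegative gap to the full sum over $\Bcal$ (costing the factor $|\Bcal|$), and invoke the oracle's global-convergence condition under the initial distribution $\op{Unif}(\Bcal)$. The one place you are more careful than the paper is the requirement $b_1^\star\in\Bcal$: the paper's displayed proof writes the RLB maximization over $[0,1]$ (in tension with \cref{eq:lower-bound-def}, which maximizes over $\Bcal$), whereas you correctly note that plugging $b_1^\star$ into $\max_{b\in\Bcal}$ needs $b_1^\star\in\Bcal$ --- clear for CVaR, where the optimal budget is a quantile of $Z(\pi^\star)$, but not immediate for every OCE utility (e.g.\ entropic or quadratic, whose optimal $b$ need not lie in the support of the return), so your caveat identifies a genuine point that the paper glosses over.
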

\begin{proof}
\begin{align*}
\textstyle\sum_{k=1}^K\oce_u^\star-\RLB^{(k)}
&\textstyle=\sum_{k=1}^K\braces{b^\star_1+\Vaug^\star(s_1,b^\star_1)}-\max_{b\in[0,1]}\braces{b+\Vaug^{\pi^k,1}(s_1,b)}
\\&\textstyle\leq\sum_{k=1}^K\Vaug^\star(s_1,b^\star_1)-\Vaug^{\pi^k,1}(s_1,b^\star_1)
\\&\textstyle\leq\sum_{k=1}^K |\Bcal|\frac{1}{|\Bcal|}\sum_{b\in\Bcal}\prns{\Vaug^\star(s_1,b)-\Vaug^{\pi^k,1}(s_1,b)}
\\&\textstyle\leq |\Bcal|\RegPO(K),
\end{align*}
where the last step uses the global convergence premise of $\POAlg$ in the AugMDP.
\end{proof}
This is stronger than the global convergence in the main paper (cf. \cref{thm:global-convergence-guarantee}), since the RLB approximately lower bounds $\oce_u(Z(\pi^k,\wh b_k))$, as we prove next.
Therefore, \cref{thm:global-convergence-guarantee} is a direct consequence of \cref{thm:stronger-global-convergence} and the following RLB lemma (\cref{eq:rlb-lower-bounds-oce}).

\riskLowerBound*
\begin{proof}
First, we prove \cref{eq:rlb-lower-bounds-oce}.
By importance sampling and the value estimate guarantee of \cref{ass:policy-optimization-oracle}: for all $b_1\in\Bcal$, we have $\abs*{ V_1^{\pi^k}(s_1,b_1)-\wh V_1^{\pi^k}(s_1,b_1) }\leq |\Bcal|\eps^{\text{po}}_k=:\eps_k'$. Thus,
\begin{align*}
    \oce_u(\pi^k, \wh b_k)
    &=\max_{b\in\Bcal}\braces*{b+\EE[ u(Z(\pi^k,\wh b_k)-b) ]}
    \\&\geq\braces*{\wh b_k+\EE[ u(Z(\pi^k,\wh b_k)-\wh b_k) ]} \tag{$\wh b_k\in\Bcal$, by \cref{eq:PO-initial-budget}}
    \\&=\braces*{\wh b_k+\Vaug^{\pi^k,1}(s_1,\wh b_k)} \tag{def. of $V^{\pi^k}$}
    \\&\overset{(i)}{\geq}\braces*{\wh b_k+\wh V^{\pi^k}_1(s_1,\wh b_k)}-\eps_k'
    \\&=\max_{b_1\in\Bcal}\braces*{b_1+\wh V^{\pi^k}_1(s_1,b_1)}-\eps_k' \tag{def. of $\wh b_k$, by \cref{eq:PO-initial-budget}}
    \\&\overset{(ii)}{\geq}\max_{b_1\in\Bcal}\braces*{b_1+\Vaug^{\pi^k,1}(s_1,b_1)}-2\eps_k'
    \\&=\RLB^{(k)}-2\eps_k', \tag{def. of RLB, by \cref{eq:lower-bound-def}}
\end{align*}
where the inequalities (i,ii) are due to the value estimate guarantee.
This finishes the proof of \cref{eq:rlb-lower-bounds-oce}.

To prove the second statement, simply apply the stronger global convergence result \cref{thm:stronger-global-convergence} with the fact that $\oce_u(Z(\pi^k,\wh b_k))\leq\oce^\star_u$.
\end{proof}

\LocalImprovement*
\begin{proof}
Let $b^\star_k = \argmax_{b\in\Bcal}\braces*{b+\Vaug^{\pi^k,1}(s_1,b)}$. Then,
\begin{align*}
    \textstyle\RLB^{(k)}-\RLB^{(k+1)}
    &\textstyle=\braces*{b^\star_k+\Vaug^{\pi^k,1}(s_1,b^\star_k)}-\max_{b\in[0,1]}\braces*{b+\Vaug^{\pi^{k+1},1}(s_1,b)}
    \\&\leq\Vaug^{\pi^k,1}(s_1,b^\star_k)-\Vaug^{\pi^{k+1},1}(s_1,b^\star_k)
    \\&\leq \textstyle|\Bcal|\frac{1}{|\Bcal|}\sum_{b\in\Bcal}\prns*{\Vaug^{\pi^k,1}(s_1,b)-\Vaug^{\pi^{k+1},1}(s_1,b)}
    \\&\leq |\Bcal|\eps^{\normalfont\text{po}}_k,
\end{align*}
where the last step uses local improvement premise of $\POAlg$ in the AugMDP.
\end{proof}

\subsection{Primer on Natural Policy Gradient in Finite-Horizon MDPs}\label{app:npg-finite-horizon-primer}
In this section, we provide a primer on natural policy gradient (NPG) for finite-horizon MDPs and its guarantees, supplementing the case study (\cref{sec:npg-case-study}) of the main paper.
Our analysis is based on the infinite-horizon analysis of \citet{agarwal2021theory}.
Let $d_1$ denote an fixed initial state distribution.
For a policy $\pi$, let $Q_h^\pi(s_h,a_h)=\EE_\pi[r_h+r_{h+1}+\dots+r_H\mid s_h,a_h]$ denote the $Q$-function, $V_h^\pi(s_h)=\EE_{a_h\sim\pi(s_h)}[Q_h^\pi(s_h,a_h)]$ denote the value function, and $A_h^\pi(s,a)=Q_h^\pi(s,a)-V_h^\pi(s)$ denote the advantage function.

We consider policies $\pi^\theta=(\pi^\theta_1,\dots,\pi^\theta_H)$ parameterized by weight vectors $\theta=(\theta_1,\dots,\theta_H)$ where $\theta_h\in\RR^m$.
Initialize $\theta^{(0)}_h = \tb{0}$ for all $h\in[H]$.
Consider the generic update rule for $k=1,2,\dots,K$: for all $h$,
\begin{equation*}
    \theta^{(k+1)}_h=\theta^{(k)}_h+\eta w^{(k)}_h,
\end{equation*}
where $w^{(k)}_h\in\RR^m$ have $\ell_2$-norm at most $W$.
Let $\pi^k$ denote the policy parameterized by $\theta^{(k)}$.
Then, we have the following lemma that bounds the sub-optimality of the update rule.
\begin{lemma}\label{lem:npg-regret-lemma}
Suppose the log-probability parameterization $\theta_h\mapsto \log\pi_{\theta_h}(a\mid s)$ is $\beta$-smooth for all $h,s,a$.
Then, setting $\eta=\sqrt{\frac{2\log(A)}{\beta K W^2}}$, we have
\begin{align*}
    &\textstyle\sum_{k=1}^K \EE_{s_1\sim d_1}[V^\star_1(s_1)-V^{\pi^k}_1(s_1)] \leq HW\sqrt{2\beta K\log(A)} + \sum_{h,k}\op{err}_{h,k},
    \\\text{where }&\op{err}_{h,k} := \EE_{s_1\sim d_1,\pi^\star}[ A_h^{\pi^k}(s_h,a_h)-w^{(k)}_h\cdot\nabla_{\theta_h}\log\pi_{\theta_h^{(k)}}(a_h\mid s_h) ].
\end{align*}
\end{lemma}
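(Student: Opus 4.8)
The plan is to adapt the smoothness-based natural-policy-gradient analysis of \citet{agarwal2021theory} to the finite-horizon setting; the only structural difference is that each stage $h$ carries its own parameter block $\theta_h$, and since the update decouples across $h$ we can run the argument per stage and sum. The three ingredients will be: (i) the finite-horizon performance difference lemma, (ii) a per-state ``drift'' inequality driven by the $\beta$-smoothness of $\theta_h\mapsto\log\pi_{\theta_h}(a\mid s)$, and (iii) telescoping over rounds $k$.

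First I would invoke the performance difference lemma: for every round $k$,
\[
\EE_{s_1\sim d_1}\big[V^\star_1(s_1)-V^{\pi^k}_1(s_1)\big]=\sum_{h=1}^H\EE_{s_1\sim d_1,\,\pi^\star}\big[A^{\pi^k}_h(s_h,a_h)\big],
\]
where the right-hand expectation is over a trajectory that follows $\pi^\star$ from $s_1\sim d_1$ (so $(s_h,a_h)\sim d^{\pi^\star}_h$). Inside that expectation I add and subtract $w^{(k)}_h\cdot\nabla_{\theta_h}\log\pi_{\theta^{(k)}_h}(a_h\mid s_h)$; summing over $k$ then decomposes the total sub-optimality into a ``linear'' term $\sum_{h,k}\EE_{s_1\sim d_1,\pi^\star}[w^{(k)}_h\cdot\nabla_{\theta_h}\log\pi_{\theta^{(k)}_h}(a_h\mid s_h)]$ plus exactly the error term $\sum_{h,k}\op{err}_{h,k}$ of the statement. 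It then remains to bound the linear term by $HW\sqrt{2\beta K\log A}$.

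To do so, fix $h$ and a state $s$ and set $D_k(s):=\op{KL}\big(\pi^\star_h(\cdot\mid s)\,\big\|\,\pi_{\theta^{(k)}_h}(\cdot\mid s)\big)$. By $\beta$-smoothness of $\theta_h\mapsto\log\pi_{\theta_h}(a\mid s)$ together with $\theta^{(k+1)}_h-\theta^{(k)}_h=\eta w^{(k)}_h$ and $\|w^{(k)}_h\|_2\le W$,
\[
\log\pi_{\theta^{(k+1)}_h}(a\mid s)\ \ge\ \log\pi_{\theta^{(k)}_h}(a\mid s)+\eta\,w^{(k)}_h\cdot\nabla_{\theta_h}\log\pi_{\theta^{(k)}_h}(a\mid s)-\tfrac{\beta\eta^2W^2}{2}.
\]
Averaging this over $a\sim\pi^\star_h(\cdot\mid s)$ and using $D_k(s)-D_{k+1}(s)=\EE_{a\sim\pi^\star_h(\cdot\mid s)}[\log\pi_{\theta^{(k+1)}_h}(a\mid s)-\log\pi_{\theta^{(k)}_h}(a\mid s)]$, then rearranging, yields
\[
\EE_{a\sim\pi^\star_h(\cdot\mid s)}\big[w^{(k)}_h\cdot\nabla_{\theta_h}\log\pi_{\theta^{(k)}_h}(a\mid s)\big]\ \le\ \frac{D_k(s)-D_{k+1}(s)}{\eta}+\frac{\beta\eta W^2}{2}.
\]
Summing over $k=1,\dots,K$ telescopes the KL terms, and since the initialization $\theta^{(0)}_h=\mathbf{0}$ corresponds (at worst) to a policy with $D_1(s)\le\log A$, I get $\sum_{k=1}^K\EE_{a\sim\pi^\star_h(\cdot\mid s)}[w^{(k)}_h\cdot\nabla_{\theta_h}\log\pi_{\theta^{(k)}_h}(a\mid s)]\le \log A/\eta+\beta\eta W^2K/2$. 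Taking $\EE_{s=s_h\sim d^{\pi^\star}_h}$ (the bound is uniform in $s$, so it survives), summing over $h\in[H]$, and plugging in $\eta=\sqrt{2\log A/(\beta KW^2)}$ — which equalizes the two terms — bounds the linear term by $HW\sqrt{2\beta K\log A}$. Combined with the decomposition above, this gives the claim.

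I expect the main obstacle to be pinning down the drift inequality precisely: checking that the relevant smoothness hypothesis is that of the \emph{log-probability} map (not of the value function), that the per-round penalty is the step-size-dependent but state-independent $\beta\eta^2W^2/2$ so the telescoping goes through cleanly, and that the initialization legitimately bounds $D_1(s)$ by $\log A$ (up to the harmless off-by-one between $\theta^{(0)}$ and the first evaluated policy). The performance difference identity, the error bookkeeping, and the final optimization over $\eta$ are all routine.
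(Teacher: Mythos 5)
Your proposal is correct and follows essentially the same route as the paper's proof: performance difference lemma, adding and subtracting the linear term $w^{(k)}_h\cdot\nabla_{\theta_h}\log\pi_{\theta_h^{(k)}}$, the smoothness-based drift inequality with per-round penalty $\beta\eta^2W^2/2$, telescoping the KL to $\pi^\star_h$ with the uniform initialization giving $\log A$, and balancing the two terms with the stated $\eta$. No gaps.
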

\begin{proof}
First, notice that the smoothness assumption implies that for all $h,s,a$,
\begin{align*}
\log\pi^{k+1}_h(a\mid s)-\log\pi^k_h(a\mid s)
&\textstyle\geq (\theta^{(k+1)}_h-\theta^{(k)}_h)\cdot \nabla_{\theta_h}\log\pi_{\theta_h^{(k)}}(a\mid s)-\frac{\beta}{2}\|\theta^{(k+1)}_h-\theta^{(k)}_h\|_2^2
\\&\textstyle\geq \eta w^{(k)}_h\cdot \nabla_{\theta_h}\log\pi_{\theta_h^{(k)}}(a\mid s) - \frac{\beta\eta^2W^2}{2} \tag{by update rule}.
\end{align*}
Rearranging, we have
\begin{equation}
\textstyle w^{(k)}_h\cdot\nabla_{\theta_h}\log\pi_{\theta_h^{(k)}}(a\mid s)\leq \eta^{-1}\log(\pi^{k+1}_h(a\mid s)/\pi^k_h(a\mid s)) + \frac{\beta\eta W^2}{2}. \label{eq:smoothness-corollary}
\end{equation}
Then, we can bound the sub-optimality at the $k$-th round by:
\begin{align*}
    &\EE_{s_1\sim d_1}[V^\star_1(s_1)-V^{\pi^k}_1(s_1)]
    \\&\textstyle= \sum_{h=1}^H\EE_{s_1\sim d_1,\pi^\star}[ A^{\pi^k}_h(s_h,a_h) ] \tag{by performance difference lemma}
    \\&\textstyle=\sum_h \op{err}_{h,k} + \EE_{s_1\sim d_1,\pi^\star}[ w^{(k)}_h\cdot\nabla_{\theta_h}\log\pi_{\theta_h^{(k)}}(a_h\mid s_h) ]
    \\&\textstyle\leq\sum_h \op{err}_{h,k} + \eta^{-1}\EE_{s_1\sim d_1,\pi^\star}[ \log(\pi^{k+1}_h(a_h\mid s_h)/\pi^{k}_h(a_h\mid s_h))] + \frac{\beta\eta W^2}{2} \tag{by \cref{eq:smoothness-corollary}}
    \\&\textstyle=\sum_h \op{err}_{h,k} + \eta^{-1}\EE_{s_1\sim d_1,\pi^\star}[ D_{KL}(\pi^\star_h(s_h)\Mid\pi^{k}_h(s_h))- D_{KL}(\pi^\star_h(s_h)\Mid\pi^{k+1}_h(s_h))] + \frac{\beta\eta W^2}{2},
\end{align*}
where $D_{KL}$ is the KL-divergence.

Finally, summing over $k$ implies the final result by telescoping:
\begin{align*}
    &\textstyle\sum_{k=1}^K \EE_{s_1\sim d_1}[V^\star_1(s_1)-V^{\pi^k}_1(s_1)]
    \\&\textstyle\leq \frac{\beta\eta HW^2 K}{2} + \eta^{-1}\sum_h\EE_{s_1\sim d_1,\pi^\star}[ D_{KL}(\pi^\star_h(s_h)\Mid\pi^{1}_h(s_h)) - D_{KL}(\pi^\star_h(s_h)\Mid\pi^{K}_h(s_h)) ] + \sum_{h,k}\op{err}_{h,k}
    \\&\textstyle\leq \frac{\beta\eta HW^2 K}{2} + \eta^{-1}H\log(A) + \sum_{h,k}\op{err}_{h,k} \tag{$\pi^{(1)}$ is uniform}
    \\&\textstyle= 2\sqrt{\beta H^2W^2 K\log(A)/2} + \sum_{h,k}\op{err}_{h,k}. \tag{choice of $\eta$}
\end{align*}
This finishes the proof.
\end{proof}

A natural idea is to set $w_h^{(k)}$ to minimize the $\op{err}_{h,k}$ error terms, which exactly motivates the NPG update.
Specifically, the (idealized) NPG update vector is defined as:
\begin{align*}
    &\textstyle\wt w^{(k)}_h = \argmin_{\|w\|_2\leq W}L_h(w;\theta^{(k)},d^{\pi^{k}}_h),
    \\\text{where }&\textstyle L_h(w_h;\theta',\nu_h) := \EE_{s_h,a_h\sim\nu_h}[ (A^{\pi_{\theta'}}_h(s_h,a_h) - w_h\cdot \nabla\log\pi_{\theta_h'}(a_h\mid s_h))^2 ],
\end{align*}
for $\theta_h\in\RR^d$ and $\nu_h\in\Delta(\Scal\times\Acal)$.
In other words, the idealized NPG update vector minimizes the squared error term while having Euclidean norm at most $W$.
Since the true mean is unknown, we approximate the above using samples.
Thus, the actual NPG update we consider minimizes the empirical error
\begin{align}
    w^{(k)}_h:=\argmin_{\|w\|_2\leq W}\wh L_h(w;\theta^{(k)},\{s_{h,i},a_{h,i}\}_{i\in[N]}), \label{eq:npg-empirical-update}
\end{align}
where $\wh L_h(w_h;\theta',\{s_{h,i},a_{h,i}\}_{i\in[N]})=\frac{1}{N}\sum_i (A^{\pi_{\theta'}}_h(s_{h,i},a_{h,i}) - w_h\cdot \nabla\log\pi_{\theta_h'}(a_{h,i}\mid s_{h,i}))^2$ and $s_{h,i},a_{h,i}\sim d^{\pi^{k}}_h$ are $N$ \emph{i.i.d.} samples from roll-outs.

We now bound the sub-optimality of the NPG update.
We define two sources of error.
First, there is the statistical estimation error from using the empirical loss $\wh L_h$ instead of the true $L_h$. This error is expected to converge at a $O(1/\sqrt{N})$ rate or faster, where $N$ is the number of samples per round.
Second, there is a transfer / approximation error that measures the performance of the best population vector $\wt w^{(k)}_h$ under the optimal policy's visitations $d^{\pi^\star}_h$. This quantity is small if the training policies for data collection cover the traces of the optimal policy $\pi^\star$.
The first estimation error is also measured under the distribution induced by $\pi^\star$, but can be handled by using the relative condition number, which is a weaker measure of coverage than $\ell_\infty$ density ratio bounds.

The following results are based on the infinite-horizon MDP analysis of \citet[Section 6.3]{agarwal2021theory}.
\begin{assumption}\label{asm:agnostic-npg}
The update vectors generated by NPG satisfy for all $k\in[K],h\in[H]$:
\begin{enumerate}
\item (Excess risk) The statistical estimation error is bounded by $\eps_{\text{stat}}$:
\begin{align*}
    L(w^{(k)}_h;\theta^{(k)},d^{\pi^k}_h)-L(\wt w^{(k)}_h;\theta^{(k)},d^{\pi^k}_h)\leq \eps_{\text{stat}}.
\end{align*}
\item (Coverage) The relative condition number of the covariance under $d^{\pi^k}_h$ and $d^{\pi^\star}_h$ is bounded by $\kappa$:
\begin{align*}
    \sup_{w\in\RR^d}\frac{w^\top\Sigma_{h,k,\pi^\star}w}{w^\top\Sigma_{h,k,\pi^{k}}w}\leq\kappa,
\end{align*}
where $\Sigma_{h,k,\pi'}:= \EE_{\pi'}[ \nabla_{\theta_h}\log\pi^{k}_h(a_h\mid s_h) (\nabla_{\theta_h}\log\pi^{k}_h(a_h\mid s_h))^\top ]$.
\item (Transfer Error) Assume the transfer error is bounded by $\eps_{\text{bias}}$:
\begin{align*}
    L(\wt w^{(k)};\theta^{(k)},d^{\pi^\star}_h)\leq\eps_{\text{bias}}.
\end{align*}
\end{enumerate}
\end{assumption}

\begin{theorem}[Agnostic NPG]
Under \cref{asm:agnostic-npg}, running the NPG update in \cref{eq:npg-empirical-update} enjoys:
\begin{align*}
    \textstyle\sum_{k=1}^K\EE_{s_1\sim d_1}[ V^\star_1(s_1)-V^{\pi^k}_1(s_1) ]\leq HW\sqrt{2\beta K\log(A)} + HK\sqrt{ \eps_{\text{bias}} } + HK\sqrt{\kappa\eps_{\text{stat}}}.
\end{align*}
\end{theorem}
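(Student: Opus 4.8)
The plan is to reduce the statement to the generic NPG regret decomposition \cref{lem:npg-regret-lemma} and then control each per-step approximation error $\op{err}_{h,k}$ using the three parts of \cref{asm:agnostic-npg}, following the compatible-function-approximation argument of \citet{agarwal2021theory}. Since the empirical NPG update $w^{(k)}_h$ from \cref{eq:npg-empirical-update} is an $\argmin$ over the ball $\{\|w\|_2\le W\}$, it meets the norm requirement of \cref{lem:npg-regret-lemma}, so that lemma gives
\[
\sum_{k=1}^K\EE_{s_1\sim d_1}\bigl[V^\star_1(s_1)-V^{\pi^k}_1(s_1)\bigr]\le HW\sqrt{2\beta K\log(A)}+\sum_{h=1}^H\sum_{k=1}^K\op{err}_{h,k}.
\]
It therefore suffices to prove $\op{err}_{h,k}\le\sqrt{\eps_{\text{bias}}}+\sqrt{\kappa\eps_{\text{stat}}}$ for each pair $(h,k)$; summing over the $HK$ pairs produces the two remaining terms in the theorem.

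First I would pass from the linear functional $\op{err}_{h,k}$ to an $L^2$ quantity by Jensen's inequality,
\[
\op{err}_{h,k}\le\sqrt{\EE_{s_1\sim d_1,\pi^\star}\Bigl[\bigl(A_h^{\pi^k}(s_h,a_h)-w^{(k)}_h\cdot\nabla_{\theta_h}\log\pi_{\theta_h^{(k)}}(a_h\mid s_h)\bigr)^2\Bigr]}=\sqrt{L_h(w^{(k)}_h;\theta^{(k)},d^{\pi^\star}_h)},
\]
recognizing the inner expectation as the compatible-approximation loss evaluated along the optimal policy's occupancy. Next I would insert the idealized population minimizer $\wt w^{(k)}_h$ and apply the $L^2(d^{\pi^\star}_h)$ triangle inequality to the split $A_h^{\pi^k}-w^{(k)}_h\cdot g=(A_h^{\pi^k}-\wt w^{(k)}_h\cdot g)+(\wt w^{(k)}_h-w^{(k)}_h)\cdot g$, with $g=\nabla_{\theta_h}\log\pi_{\theta_h^{(k)}}$, to obtain
\[
\sqrt{L_h(w^{(k)}_h;\theta^{(k)},d^{\pi^\star}_h)}\le\sqrt{L_h(\wt w^{(k)}_h;\theta^{(k)},d^{\pi^\star}_h)}+\sqrt{(w^{(k)}_h-\wt w^{(k)}_h)^\top\Sigma_{h,k,\pi^\star}(w^{(k)}_h-\wt w^{(k)}_h)}.
\]
The first term is $\le\sqrt{\eps_{\text{bias}}}$ by the transfer-error bound (part~3), and the second is $\le\sqrt{\kappa\,(w^{(k)}_h-\wt w^{(k)}_h)^\top\Sigma_{h,k,\pi^k}(w^{(k)}_h-\wt w^{(k)}_h)}$ after changing measure from $\pi^\star$ to $\pi^k$ via the relative condition number (part~2).

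The step I expect to be the crux is bounding $\|w^{(k)}_h-\wt w^{(k)}_h\|^2_{\Sigma_{h,k,\pi^k}}$ by the excess risk $\eps_{\text{stat}}$. Here the observation is that $w\mapsto L_h(w;\theta^{(k)},d^{\pi^k}_h)$ is a convex quadratic with Hessian $2\Sigma_{h,k,\pi^k}$, so an exact second-order expansion around the constrained minimizer $\wt w^{(k)}_h$ reads $L_h(w^{(k)}_h)-L_h(\wt w^{(k)}_h)=\nabla L_h(\wt w^{(k)}_h)^\top(w^{(k)}_h-\wt w^{(k)}_h)+\|w^{(k)}_h-\wt w^{(k)}_h\|^2_{\Sigma_{h,k,\pi^k}}$; because $w^{(k)}_h$ and $\wt w^{(k)}_h$ both lie in $\{\|w\|_2\le W\}$ and $\wt w^{(k)}_h$ minimizes $L_h(\cdot;\theta^{(k)},d^{\pi^k}_h)$ over that set, the first-order optimality (variational) inequality makes the gradient term nonnegative, hence $\|w^{(k)}_h-\wt w^{(k)}_h\|^2_{\Sigma_{h,k,\pi^k}}\le L_h(w^{(k)}_h;\theta^{(k)},d^{\pi^k}_h)-L_h(\wt w^{(k)}_h;\theta^{(k)},d^{\pi^k}_h)\le\eps_{\text{stat}}$ by part~1. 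Two small points to keep straight: $\eps_{\text{stat}}$ is the excess risk under exactly $d^{\pi^k}_h$, which is the distribution the update is fit on in \cref{eq:npg-empirical-update}, so no additional change of measure is needed there; and $\Sigma_{h,k,\pi^k}$ may be rank-deficient, which is harmless since $w^{(k)}_h-\wt w^{(k)}_h$ enters only through its $\Sigma_{h,k,\pi^k}$-seminorm. Chaining the three displayed inequalities yields $\op{err}_{h,k}\le\sqrt{\eps_{\text{bias}}}+\sqrt{\kappa\eps_{\text{stat}}}$, and summing over $h\in[H],k\in[K]$ and combining with the regret decomposition completes the proof.
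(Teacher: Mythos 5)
Your proposal is correct and follows essentially the same route as the paper's proof: the reduction to \cref{lem:npg-regret-lemma}, the bias/estimation split of $\op{err}_{h,k}$, the change of measure via the relative condition number, and the identification of $\|w^{(k)}_h-\wt w^{(k)}_h\|^2_{\Sigma_{h,k,\pi^k}}$ with the excess risk (you apply Jensen before the triangle inequality rather than after, which is cosmetic). Your explicit use of the first-order variational inequality for the constrained minimizer is in fact slightly more careful than the paper, which states that step as an equality even though only the inequality $\|w^{(k)}_h-\wt w^{(k)}_h\|^2_{\Sigma_{h,k,\pi^k}}\le L_h(w^{(k)}_h;\theta^{(k)},d^{\pi^k}_h)-L_h(\wt w^{(k)}_h;\theta^{(k)},d^{\pi^k}_h)$ is needed and guaranteed.
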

This is a finite-horizon analog of \citet[Theorem 6.2]{agarwal2021theory}, where their effective horizon $1/(1-\gamma)$ is replaced with the horizon $H$.
\begin{proof}
The proof focuses on decomposing the error term in \cref{lem:npg-regret-lemma}.
\begin{align*}
    \op{err}_{h,k}
    &= \EE_{s_1\sim d_1,\pi^\star}[ A^{\pi^k}_h(s_h,a_h)-w^{(k)}_h\cdot\nabla_{\theta_h}\log\pi_{\theta_h^{(k)}}(a_h\mid s_h) ]
    \\&= \underbrace{\EE_{s_1\sim d_1,\pi^\star}[ A^{\pi^k}_h(s_h,a_h)-\wt w^{(k)}_h\cdot\nabla_{\theta_h}\log\pi_{\theta_h^{(k)}}(a_h\mid s_h) ]}_{approx. error} + \underbrace{(\wt w^{(k)}_h-w^{(k)}_h)\cdot \EE_{s_1\sim d_1,\pi^\star}[\nabla_\theta\log\pi_{\theta_h}(a_h\mid s_h)]}_{est. error}.
\end{align*}
Let $d^\star_h=d^{\pi^\star}_{1,d_1}$ be the $h$-th visitation distribution of rolling in $\pi^\star$ from $d_1$. The approximation error can be handled by:
\begin{align*}
    &\EE_{s_1\sim d_1,\pi^\star}[ A^{\pi^k}_h(s_h,a_h)-\wt w^{(k)}_h\cdot\nabla_{\theta_h}\log\pi_{\theta_h^{(k)}}(a_h\mid s_h) ]
    \\&\leq \sqrt{\EE_{s_1\sim d_1,\pi^\star}[ (A^{\pi^k}_h(s_h,a_h)-\wt w^{(k)}_h\cdot\nabla_{\theta_h}\log\pi_{\theta_h^{(k)}}(a_h\mid s_h))^2 ] }
    = \sqrt{L_h(\wt w^{(k)}_h;\theta^{(k)},d^\star_h)}.
\end{align*}
The estimation error can be handled by:
\begin{align*}
    \abs{(\wt w^{(k)}_h-w^{(k)}_h)\cdot \EE_{s_1\sim d_1,\pi^\star}[\nabla_{\theta_h}\log\pi_{\theta_h^{(k)}}(a_h\mid s_h)]} \leq \|\wt w^{(k)}_h-w^{(k)}_h\|_{\Sigma_{h,k,\pi^\star}}.
\end{align*}
Then by definition of the relative condition number $\kappa_{h,k}=\|(\Sigma_{h,k,\pi^{(k)}})^{-1/2} \Sigma_{h,k,\pi^\star} (\Sigma_{h,k,\pi^{(k)}})^{-1/2}\|_2$, we have
\begin{align*}
    \|\wt w^{(k)}_h-w^{(k)}_h\|_{\Sigma_{h,k,\pi^\star}}
    &\leq \sqrt{\kappa_{h,k}}\|\wt w^{(k)}_h-w^{(k)}_h\|_{\Sigma_{h,k,\pi^{k}}}
    \\&=\sqrt{\kappa_{h,k} (L_h(w^{(k)}_h,\theta^{(k)},d^{\pi^{k}}_h)-L_h(\wt w^{(k)}_h,\theta^{(k)},d^{\pi^{k}}_h)) }.
\end{align*}
Therefore, we have shown that
\begin{align*}
    \op{err}_{h,k}
    &\leq \sqrt{ L_h(\wt w^{(k)}_h;\theta^{(k)},d^\star_h) } + \sqrt{ \kappa_{h,k}(L_h(w^{(k)}_h;\theta^{(k)},d^{\pi^{k}}_h)-L_h(\wt w^{(k)}_h;\theta^{(k)},d^{\pi^{k}}_h)) }
    \\&\leq \sqrt{ \eps_{\text{bias}} } + \sqrt{ \kappa\eps_{\text{stat}} },
\end{align*}
which concludes the proof.
\end{proof}

\end{document}